\newcommand{\mpara}[1]{\medskip\noindent{\bf #1}}
\newcommand{\yelp}{\textsc{Yelp}\xspace}
\newcommand{\pcg}{\textsc{PCG}\xspace}
\newcommand{\dblp}{\textsc{DBLP}\xspace}
\newcommand{\corafull}{\textsc{CoraFull}\xspace}
\newcommand{\simplegcn}{\textsc{SimpleGCN}\xspace}
\newcommand{\lwf}{\textsc{LwF}\xspace}
\newcommand{\ewc}{\textsc{EWC}\xspace}
\newcommand{\mas}{\textsc{MAS}\xspace}
\newcommand{\twp}{\textsc{TWP}\xspace}
\newcommand{\evolvegcn}{\textsc{EvolveGCN}\xspace}
\newcommand{\ergnn}{\textsc{ERGNN}\xspace}
\newcommand{\graphsail}{\textsc{GraphSAIL}\xspace}
\newcommand{\jointtraingcn}{\textsc{JointTrainGCN}\xspace}
\newcommand{\taskil}{\textsc{Task-IL setting}\xspace}
\newcommand{\classil}{\textsc{Class-IL setting}\xspace}
\newcommand{\taskilshort}{\textsc{Task-IL}\xspace}
\newcommand{\classilshort}{\textsc{Class-IL}\xspace}
\newcommand{\jointtrain}{\textsc{JointTrain}\xspace}
\newcommand{\agale}{\textsc{AGALE}\xspace}
\def\eqref#1{equation~\ref{#1}}
\def\1{\bm{1}}
\DeclareMathAlphabet{\mathsfit}{\encodingdefault}{\sfdefault}{m}{sl}
\SetMathAlphabet{\mathsfit}{bold}{\encodingdefault}{\sfdefault}{bx}{n}
\newtheorem*{setting}{Problem Setting and Notations}
\newtheorem*{objective}{Objective}
\newtheorem{defin}{Definition}
\newtheorem{theor}{Theorem} 
\title{\agale: A Graph-Aware Continual Learning Evaluation Framework}
\author{\name Tianqi Zhao \email T.Zhao-1@tudelft.nl \\
       \addr
       Delft University of Technology\\
       Delft, Netherlands
       \AND
       \name Alan Hanjalic \email A.Hanjalic@tudelft.nl \\
       \addr 
       Delft University of Technology\\
       Delft, Netherlands
       \AND
       \name Megha Khosla \email M.Khosla@tudelft.nl \\
       \addr 
       Delft University of Technology\\
       Delft, Netherlands}
\pgfplotsset{compat=1.18}
\begin{document}

\maketitle

\begin{abstract}
In recent years, continual learning (CL) techniques have made significant progress in learning from streaming data while preserving knowledge across sequential tasks, particularly in the realm of euclidean data. To foster fair evaluation and recognize challenges in CL settings, several evaluation frameworks have been proposed, focusing mainly on the single- and multi-label classification task on euclidean data. However, these evaluation frameworks are not trivially applicable when the input data is graph-structured, as they do not consider the topological structure inherent in graphs. Existing continual graph learning (CGL) evaluation frameworks have predominantly focused on single-label scenarios in the node classification (NC) task. This focus has overlooked the complexities of multi-label scenarios, where nodes may exhibit affiliations with multiple labels, simultaneously participating in multiple tasks. We develop a graph-aware evaluation (\agale) framework that accommodates both single-labeled and multi-labeled nodes, addressing the limitations of previous evaluation frameworks. In particular, we define new incremental settings and devise data partitioning algorithms tailored to CGL datasets. We perform extensive experiments comparing methods from the domains of continual learning, continual graph learning, and dynamic graph learning (DGL). We theoretically analyze \agale and provide new insights about the role of  homophily in the performance of compared methods. We release our framework at \url{https://github.com/Tianqi-py/AGALE}.
\end{abstract}

\section{Introduction}
\label{sec:introduction}
Continual Learning (CL) describes the process by which a model accumulates knowledge from a sequence of tasks while facing the formidable challenge of preserving acquired knowledge amidst data loss from prior tasks. It finds application in several fields, such as the domain of medical image analysis, where a model has to detect timely emerging new diseases in images while maintaining the accuracy of diagnosing the diseases that have been encountered in the past. Significant achievements have been made on CL for euclidean data domains such as images and text \citep{DBLP:journals/corr/abs-1812-03596, DBLP:journals/corr/abs-1802-07569, tang2021graphbased,hadsell2020embracing,van2019three}. Recent works have also delved  into the broader scenario of multi-label continual learning (MLCL) \citep{pmlr-v119-wang20n, 10.1145/3404835.3463096, article, 10.1145/3447548.3467226}, where one instance can be simultaneously associated with multiple labels.

To foster fair evaluation and identify new challenges in CL
settings, several evaluation frameworks \citep{farquhar2019robust,delange2023continual} have been proposed, focusing on the single-
and multi-label classification task on the euclidean data.
However, these frameworks are not trivially applicable to graph-structured data due to the complexities arising from interconnections and multi-label nodes within graphs. Besides, existing evaluation frameworks in continual graph learning (CGL) \citep{ko2022begin, zhang2022cglbƒ} evaluate the node classification task in the setting of associating nodes with a single label (which we refer to as the \textit{single-label} scenario), thereby overlooking the possibility for nodes from previous tasks to adopt different labels in new tasks or acquire additional labels with time. 
For instance, in the context of a dynamically evolving social network, not only can new users with diverse interests (labels) be introduced over time, but existing users may also lose old labels or accumulate new labels continuously.

\begin{wrapfigure}{r}{0.25\textwidth}
 \vspace{-7pt} 
\includegraphics[width=1.0\linewidth, height=3.5cm]{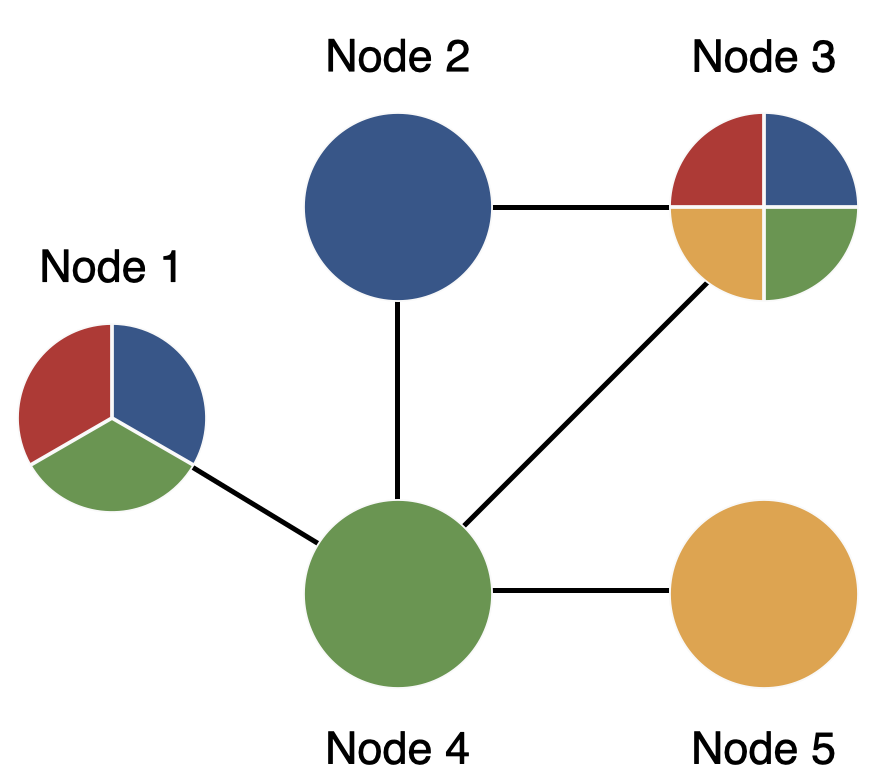} 
\caption{An example multi-label graph with colors indicating to the different node labels.}
\label{fig:multi_label_graph}
\vspace{-35pt}
\end{wrapfigure}

To illustrate the limitations of current CL evaluation frameworks when considering the multi-label scenario in graphs, we start with an example of a multi-label graph as in Figure \ref{fig:multi_label_graph}. We use color coding to indicate the classes the nodes belong to. Please note that in what follows, we employ the term "class" to refer to the classes that correspond to a task. To refer to a class assigned to a particular node we use the term "label". 

\subsection{Limitations of current continual learning evaluation frameworks}
\label{subsec:limitation}
\paragraph{Lack of graph-aware data partitioning strategies.} 
Current experimental setups typically simulate continual learning settings  by employing certain data partitioning methods over static data. However, existing CGL frameworks do not consider the multi-label scenario in the data partitioning algorithms. 

The multi-label continual learning evaluation framework for Euclidean data \citep{DBLP:journals/corr/abs-2009-03632} suggest the use of \textit{hierarchical clustering} techniques, which involves grouping classes into a single task based on their co-occurrence frequency and subsequently eliminating instances with label sets that do not align with any class groups. Applying such a technique to graph-structured data not only risks excluding nodes with a higher number of labels but also disrupts the associated topological structures. 
\begin{wrapfigure}{r}{0.25\textwidth}
   \vspace{-10pt}
    \includegraphics[height=1.5in]{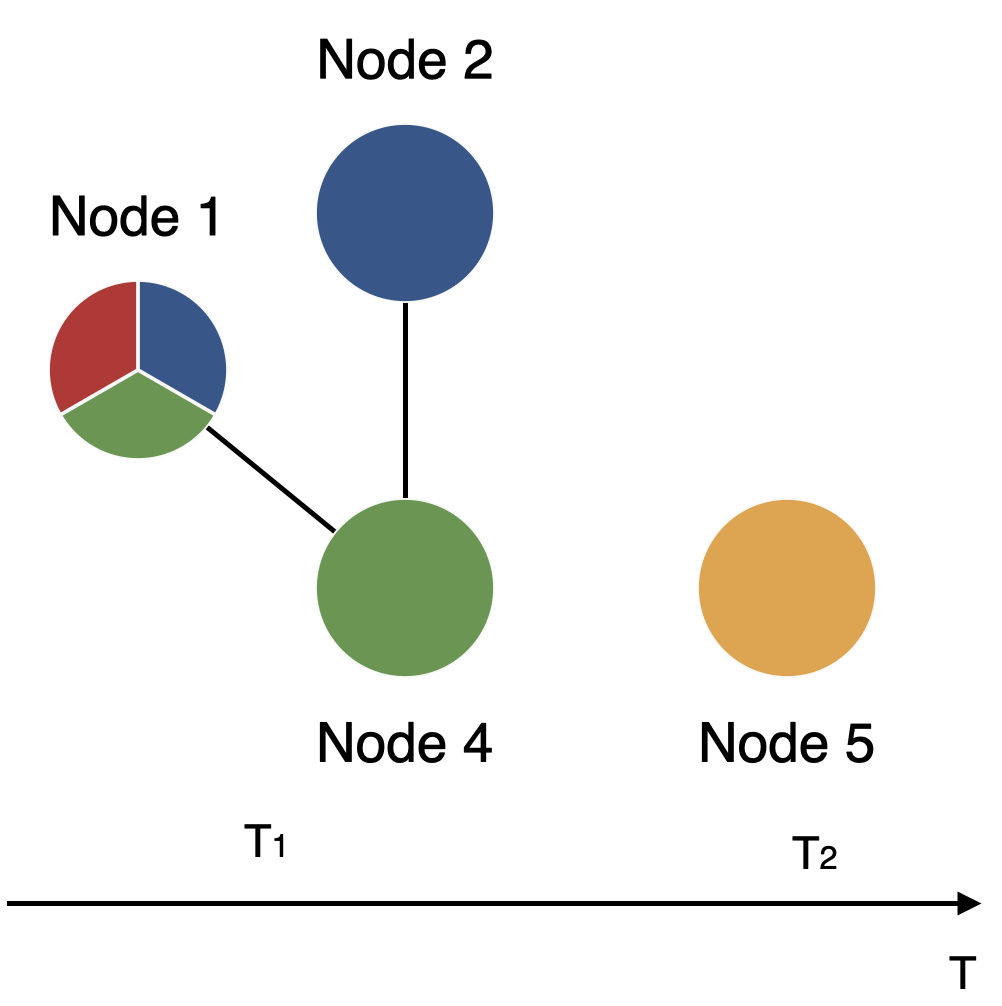}
    \caption{Subgraphs generated by grouping frequently co-occurring classes as a task.}
    \label{fig:MLCL}
    \vspace{-15pt}
\end{wrapfigure}

In Figure \ref{fig:MLCL}, we illustrate an application of the existing MLCL framework to the multi-label graph depicted in Figure \ref{fig:multi_label_graph}. The classes blue, green, and red are collectively treated as one task due to their frequent co-occurrence. Node $3$, having the maximum number of labels, is removed from the graph since no task encompasses all its labels. It is noteworthy that employing hierarchical clustering techniques increases the likelihood of eliminating nodes with more labels, effectively reducing both the number of multi-labeled nodes and the associated topological structure. In the current example,  proper training and testing of the model on the red class is hindered, as only one node remains in the subgraph with the label red. Besides, node $5$ becomes  isolated in the second subgraph.

\paragraph{Generation of train/val/test sets.} 
Furthermore, the data partitioning algorithm is also responsible for the division of each subgraph into training, validation, and test subsets. 
In Figure \ref{fig:prev_cgl} we show an example of train/val/test subsets generated using the strategy adopted by previous CGL evaluation frameworks for the task of distinguishing between blue and green classes. In particular, nodes belonging to a single class are divided independently at random into train/validation/test sets, assuming no overlap between classes. However, when each node can belong to multiple classes, an independent and random division within each class is not always feasible. For instance, the same node may serve as training data for one class and testing data for another in the same task, as is the case for node $1$ in Figure \ref{fig:prev_cgl}. In this particular case, the model may observe the test data during the training process, resulting in data leakage. Conversely, considering the entire dataset as a whole for division would result in the dominance of the larger class, causing all nodes from the smaller class to be aggregated into the same split and thereby under-representing the smaller class in the other split. For instance, in multi-label datasets such as \yelp, two classes can exhibit complete overlap, where all nodes from the smaller class also belong to the larger class. In this scenario, if the nodes belonging to the larger class are split first, there might be no nodes left to make the required splits for the smaller class.

 \paragraph{Use of predefined class orders.} Existing CGL evaluation frameworks rely on a single predefined class order in the dataset and group sets of $K$ classes as individual tasks. As the predefined order might not always reflect the true generation process of the data, it is important to evaluate CL models over several random class orders. 
Specifically for multi-label scenarios, the employed class order may not only influence the nodes and their neighborhood structures presented at each time step but also affect the number of labels assigned to a particular node in a given task. We demonstrate in Figure \ref{fig:class_order}, using nodes from the multi-label graph in Figure \ref{fig:multi_label_graph}, how distinct class orders generate subgraphs with the same set of nodes but with different topologies and node label assignments.

\begin{wrapfigure}{r}{0.25\textwidth}
\vspace{-20pt} 
\includegraphics[width=1.0\linewidth]{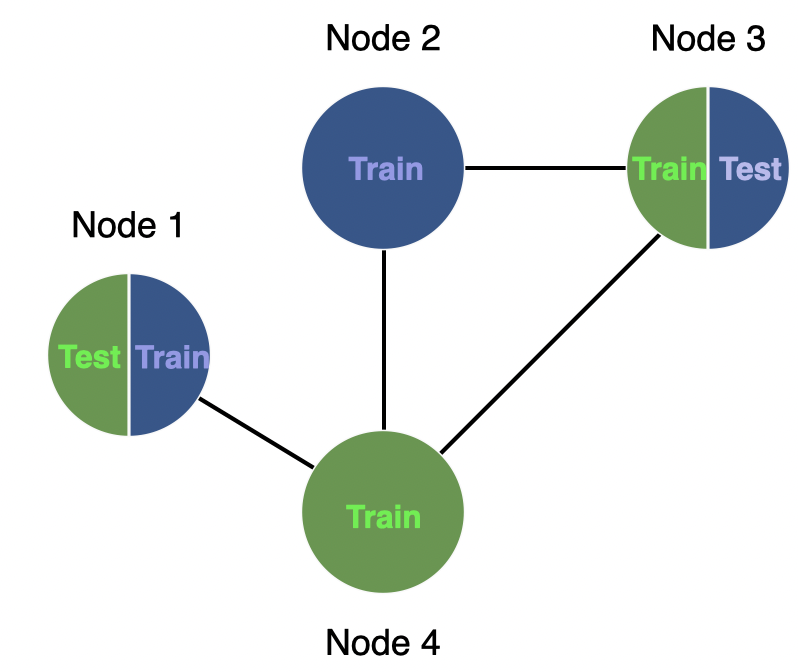} 
\caption{The split of the nodes within one subgraph generated by the previous CGL framework.}
\label{fig:prev_cgl}
\vspace{-50pt}
\end{wrapfigure}

\paragraph{Limitations on the number of tasks.} Last but not least, previous CGL benchmarks typically predetermined the size of model's output units, assuming an approximate count of classes in each graph during model initialization. However, this assumption is unrealistic because the eventual class size is often unknown, leading to potential inefficiencies in storage or capacity overflow.

\subsection{Our Contributions}
To tackle the above-mentioned gaps, we develop a generalized evaluation framework for continual graph learning, which is applicable both for multi-class and multi-label node classification tasks and can be easily adapted for multi-label graph and edge classification tasks. Our key contributions are as follows.

\begin{itemize}[leftmargin=*]
    \item We define two generalized incremental settings for the node classification task in the CGL evaluation framework which are applicable for both multi-class and multi-label datasets.
    \item We develop new data split algorithms for curating CGL datasets utilizing existing static benchmark graph datasets. We theoretically analyze the \textit{label homophily} of  the resulting subgraphs which is an important factor influencing the performance of learning models over graphs.
    \item We perform extensive experiments to assess and compare  the performance of well-established methods within the categories of Continual Learning (CL), Dynamic Graph Learning (DGL), and Continual Graph Learning (CGL). 
    Through our analysis, we evaluate these methods in the context of their intended objectives, identifying their constraints and highlighting potential avenues for designing more effective models to tackle standard tasks in CGL.
    \item We re-implement the compared models in our framework while adapting them for the unknown number of new tasks that may emerge in the future.   
\end{itemize}

\begin{figure}[!ht]
\centering
\includegraphics[width=0.8\textwidth]{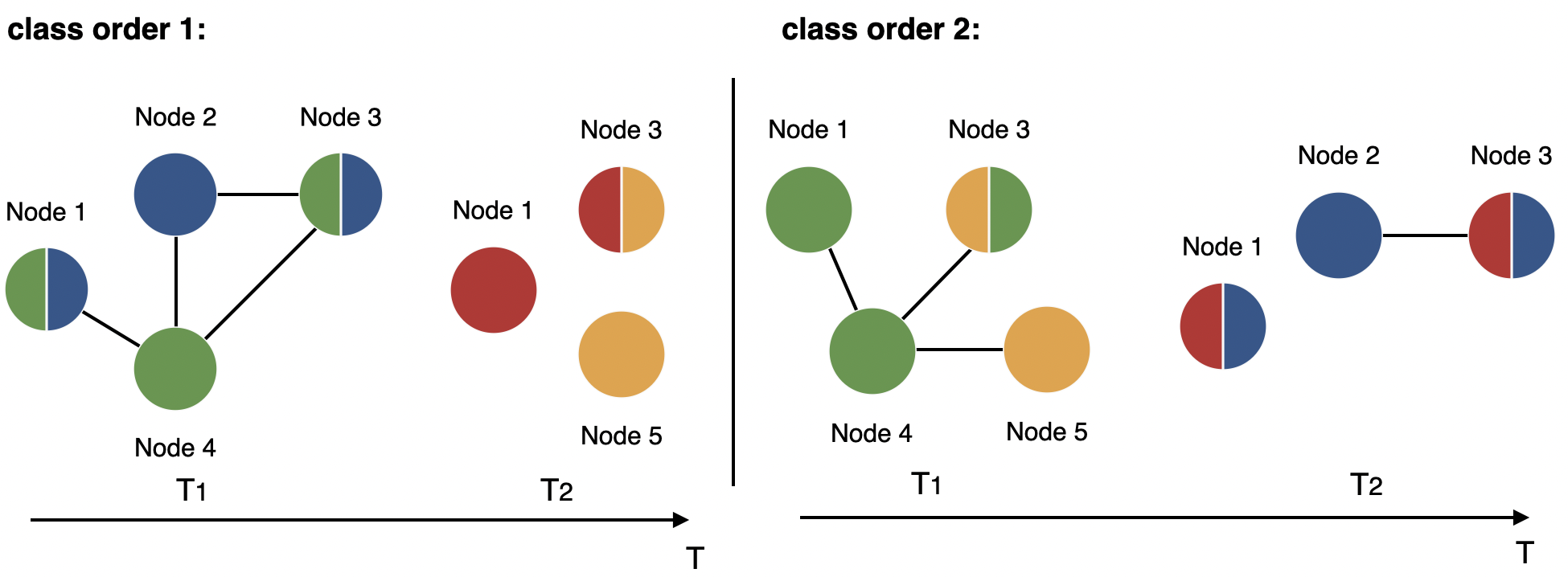}
    \caption{An example of subgraphs obtained by applying different class orders for the static multi-label graph in Figure \ref{fig:multi_label_graph}.}
    \label{fig:class_order}
\end{figure}

\section{Problem Formulation}
\label{sec: problem_formulation}
We start by providing a general formulation of the continual learning problem for graph-structured data and elaborate on the additional complexities when the nodes in the graph may have multiple labels as compared to the single-label scenario.

\begin{setting}
    Given a time sequence $\mathcal{T}=\left\{1, 2, \ldots, T\right\}$, at each time step $t \in \mathcal{T}$, the input is one graph snapshot $\mathcal{G}_t = (\mathcal{V}_t, \mathcal{E}_t, \mathbf{X}_t, \mathbf{Y}_t)$, with node set $\mathcal{V}_t$ and edge set $\mathcal{E}_t$. Additionally, $\mathbf{X}_t \in \mathcal{R}^{|\mathcal{V}_t| \times D}$ and $\mathbf{Y}_t \in \{0,1\} ^ {|\mathcal{V}_t|\times|\mathcal{C}_t|}$ denote the feature matrix and the label matrix for the nodes in graph $\mathcal{G}_t$, where $D$ is the dimension of the feature vector, and $\mathcal{C}_t$ is the set of classes seen/available at time $t$.
    We assume that the node set $\mathcal{V}_t$ is partially labeled, i.e., $\mathcal{V}_t = \{\mathcal{V}_t^l, \mathcal{V}_t^u\}$, where $\mathcal{V}_t^l$ and $\mathcal{V}_t^u$ represent the labeled nodes and the unlabeled nodes in $\mathcal{G}_t$.  
    We use $\mathbf{A}_t$ to denote the adjacency matrix of $\mathcal{G}_t$. 
    We use $\mathcal{Y}^v$ to denote the complete label set of node $v$ and $\mathcal{Y}_{t}^v$ to denote the label set of node $v$ observed at time $t$.
\end{setting}

\begin{objective}
The key objective in CGL setting, as described above, is to predict the corresponding label matrix of $\mathcal{V}_t^u$ denoted by $\mathbf{Y}^u_t$ (when the complete label set is restricted to $\mathcal{C}_t$) while maintaining the performance over the classification on nodes in all graphs in the past time steps in $\{1,2,\ldots, t-1 \}$.
\end{objective}

\subsection{Differences to single-label scenario}
\label{sec:differences}
Having explained the problem setting and the objective we now describe the key differences of the multi-label scenario as compared to the single-label case in continual graph learning, which were so far ignored by previous works resulting in various limitations as illustrated in Section \ref{subsec:limitation}.
\begin{itemize}[leftmargin=*]
    \item \textbf{Node overlaps in different tasks.} In the single-label scenario each node is affiliated with one single class, exclusively contributing to one task. The following statement, which states that no node appears in more than one task, always holds: 
\begin{equation}
\forall i, j \in \mathcal{T}, \text{and } i \neq j, \mathcal{V}_i \cap \mathcal{V}_j = \emptyset
\end{equation}
However, in the multi-label scenario, one node can have multiple labels and can therefore participate in multiple tasks as time evolves. Contrary to the single-label scenario, when the nodes have multiple labels, there will exist at least a pair of tasks with at least one node in common as stated below.
\begin{equation}
    \exists i, j \in \mathcal{T}, \text{and } i \neq j, \mathcal{V}_i \cap \mathcal{V}_j \neq \emptyset
\end{equation}

\item \textbf{Growing label sets.} In the single-label scenario, the label set of a node $v$, $\mathcal{Y}^v$, stays the same across different time steps, i.e., 
\begin{equation}
    \forall i, j \in \mathcal{T}, \, \mathcal{Y}_{i}^v = \mathcal{Y}_{j}^v
\end{equation}
 
However, in the multi-label scenario, the label set of a node may grow over time, i.e., a node may not only appear in several tasks as above but also have different associated label sets, i.e., the following holds.

\begin{equation}
\exists i, j \in \mathcal{T}, \, \mathcal{Y}_{i}^v \neq \mathcal{Y}_{j}^v
\end{equation}

\item \textbf{Changing node neighborhoods.} 
Note that while simulating a continual learning scenario, subgraphs are curated corresponding to sets of classes/labels required to be distinguished in a particular task. In other words, the subgraph presented for a particular task only contains edges connecting nodes with the label set seen in that task.  
Therefore, 
the neighborhood of a node $v$, denoted as $\mathcal{N}^{v}$ can also be different across different time steps in the multi-label scenario, i.e.,
\begin{equation}
\exists i, j \in \mathcal{T}, \, \mathcal{N}_{i}^{v} \neq \mathcal{N}_{j}^{v}
\end{equation}

\end{itemize}
 
In the multi-label graphs, both multi-label and single-label nodes exist, providing therefore a suitable context to develop a generalized CGL evaluation framework as elaborated in the next section.

\section{\agale: our evaluation framework}
We present a holistic continual learning evaluation framework for graph-structured input data, which we refer to as \agale (a graph-aware continual learning evaluation). We begin by developing two generalized incremental settings (in Section \ref{sec:increment}) that accommodate the requirements of the multi-label scenario (as discussed in Section \ref{sec:differences}) with respect to node overlaps in different tasks and growing label sets. In Section \ref{sec:partition}, we develop new data partitioning algorithms designed to derive subgraphs and training partitions from a static graph dataset, tailored specifically for the developed incremental settings. 
To underscore the significance of our approach, we provide theoretical analysis of \agale in Section \ref{sec:theory} and compare it with the previously established CGL and MLCL frameworks in Section \ref{sec:compare}. 

\begin{figure}[!ht]
 \centering
    \includegraphics[width=0.8\textwidth]{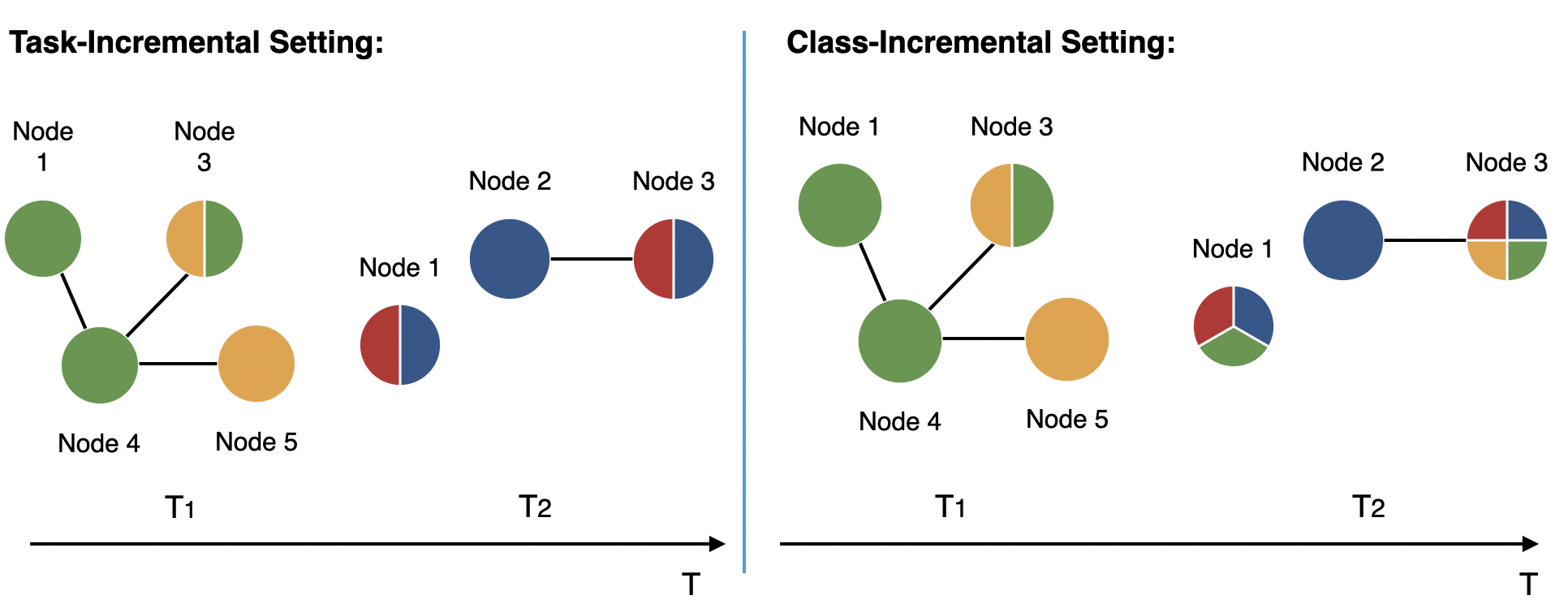}
    \caption{Visualization of our proposed generalized evaluation CGL framework \agale.}
    \label{fig:our_framework}
  \end{figure}

\subsection{Two Generalized Incremental Settings for Continual Graph Learning}
\label{sec:increment}
We \textit{first} define and develop two generalized incremental settings in CGL, i.e., \taskil and \classil.
\textbf{In the \taskil}, the goal is to distinguish between classes specific to each task. 
Different from single-label settings, the multi-labeled nodes can be present with non-overlapping subsets of their labels in different subgraphs, as shown in Figure \ref{fig:our_framework}. Formally, for any node $v$ in the multi-label graph, in the \taskil we have $$\forall i, j \in \mathcal{T}, \mathcal{Y}_{i}^v \cap \mathcal{Y}_{j}^v  = \emptyset.$$

\textbf{In the \classil}, the goal is to distinguish among all the classes that have been seen so far. Specifically, in addition to the same node appearing in multiple tasks as in the previous setting, a node with multiple labels can attain new labels as new tasks arrive, as shown in Figure \ref{fig:our_framework}. Formally, for any node $v$ in the multi-label graph,
$$
 \forall i, j \in \mathcal{T}, \text{if } i < j, \text{ then } \mathcal{Y}_{i}^v \subseteq  \mathcal{Y}_{j}^v
$$

Note that the above settings allow for node overlaps and growing/changing label sets of the same node at different time points.

\subsection{Data Partitioning Algorithms}
\label{sec:partition}
We now describe our data partitioning algorithms to simulate sequential data from static graphs. The design strategy of our algorithms takes into account of the node overlaps in tasks, the growing/changing label set of nodes over time, and the changing node neighborhoods while minimizing the loss of node labels and the graph's topological structure during partitioning.
Our developed data partition strategy can be employed in both incremental settings and consists of the following two main parts.

\begin{itemize}[leftmargin=*]
    \item \textbf{Task sequence and subgraph sequence generation.} 
    We employ Algorithm \ref{algo:group_class} to segment the provided graph from the dataset into coherent subgraph sequences. We first remove the classes with size smaller than a threshold $\delta$. Instead of using a predefined class order (as discussed in Section \ref{subsec:limitation}) we generate $n$ random orders of the remaining classes to simulate the random emergence of new classes in the real world. Specifically, given a dataset with $C$ classes, we group $K$ random classes as one task for one time step. At any time point $t$, let $\mathcal{C}_t$ denote the set of classes grouped for the task at time $t$. We construct a subgraph $\mathcal{G}_t=(\mathcal{V}_t,\mathcal{E}_t)$ such that $\mathcal{V}_t$ is the set of nodes with one or more labels in $\mathcal{C}_t$. The edge set $\mathcal{E}_t$ consists of the induced edges on $\mathcal{V}_t$. 
    Note that the  number of classes chosen to create one task is adaptable. In order to maximize the length of the task sequence for each given graph dataset and subsequently catastrophic forgetting, we choose $K=2$ in this work. If the dataset has an uneven number of classes in total, the remaining last class will be grouped with the second last class group.
    \item \textbf{Construction of train/val/test sets.} To overcome the current limitations of generating train/val/test sets as discussed in Section \ref{subsec:limitation}, we employ Algorithm \ref{algo:partition} to partition nodes of a given graph snapshot $\mathcal{G}_t$. For the given subgraph $\mathcal{G}_t$, our objective is to maintain the pre-established ratios for training, validation, and test data for both the task as a whole and individual classes within the task. To achieve this, our procedure starts with the determination of the size of each class. Note that the cumulative sizes of these classes may exceed the overall number of nodes in the subgraph due to multi-labeled nodes being accounted for multiple times based on their respective labels. Subsequently, the classes are arranged in ascending order of size, starting with the smallest class. The smallest class is partitioned in accordance with the predefined proportions. Subsequent classes in the order undergo partitioning with the following steps:
    \begin{itemize}
        \item We identify nodes that have already been allocated to prior classes.
        \item We then compute the remaining node counts for the training, validation, and test sets in accordance with the predefined proportions for the current class.
        \item Finally, we split randomly the remaining nodes within the current class into train/val/test sets such that their predefined proportions are respected.
    \end{itemize} 
    Note that for a given class order, the structural composition of each subgraph remains constant across both the incremental settings. What distinguishes these incremental settings is the label vector assigned to the nodes. Specifically, nodes with a single label manifest uniquely in one subgraph corresponding to a task. Conversely, nodes with multiple labels appear in the \taskil with distinct non-overlapping subsets of their original label set across various subgraphs while appearing with the expansion of their label vectors in the \classil.
\end{itemize}

\begin{algorithm}[!ht]
\caption{Task Sequence and Subgraph Sequence Generation}
\label{algo:group_class}
\begin{algorithmic}[1]
\Require{Static graph $\mathcal{G}=(\mathcal{V},\mathcal{E})$ with classes $\mathcal{C}=\{c_1, c_2, \ldots, c_C\}$, threshold of small classes $\delta$, group size $K$}
\Ensure{$n$ task sequences $\mathcal{S}=\left\{\mathcal{S}_1, \mathcal{S}_2, \ldots, \mathcal{S}_{n}\right\}$ and for each task sequence $\mathcal{S}_i$ a corresponding subgraph sequence $\mathcal{G}_i=\left\{\mathcal{G}_1, \mathcal{G}_2, \ldots, \mathcal{G}_{T}\right\}$}
\For{$c_j \in \mathcal{C}$}
  \State{$\mathcal{V}_{c_j} = \{v_i | c_j \in y_i\}$}
\EndFor  

\State $\mathcal{C'}= \{\mathcal{C}- c_j | |\mathcal{V}_{c_j}| < \delta\}$ 
\State Generate $n$ random orders of $\mathcal{C'}$: $\mathcal{O} = \{\mathcal{O}_1, \mathcal{O}_2, \ldots, \mathcal{O}_n\}$
\For{$ \mathcal{O}_j \in \mathcal{O}$}
    \For{$t=1$ to $\lfloor \frac{C}{k}  \rfloor = T$}
        \State Group the first $k$ classes as a task: $\mathcal{S}_t = \{c_1, \ldots, c_k\}$ 
        \State{$\mathcal{O}_j = \mathcal{O}_j - \mathcal{S}_t$}
        \State{$\mathcal{V}_t = \{ v_i | \mathbf{y_i} \cap \mathcal{S}_t \neq \emptyset \}$}
        \State{$\mathcal{E}_t=\{e(u,v)|e \in  \mathcal{E} \wedge  u,v \in \mathcal{V}_t\} \}$}
        \State{$\mathcal{G}_t = (\mathcal{V}_t, \mathcal{E}_t)$}
    \EndFor
\EndFor
\end{algorithmic}
\end{algorithm}

\begin{algorithm}[!ht]
\caption{Train and Test Partition Algorithm Within One Subgraph}
\label{algo:partition}
\begin{algorithmic}[1]
\Require{subgraph $\mathcal{G}_t$ in subgraph sequence $\left\{\mathcal{G}_1, \mathcal{G}_2, \ldots, \mathcal{G}_{T}\right\}$, proportion set $P$ for train, validation, and test $P = \{P_{train}$, $P_{val}$, $P_{test}\}$}
\Ensure{the split within  subgraph $\mathcal{G}_t = \{\mathcal{V}_t^{train}, \mathcal{V}_t^{val}, \mathcal{V}_t^{test}\}$ for task $\mathcal{S}_t$}
        \State{Get the classes for the current task $\mathcal{S}_t=\{c_1, \ldots, c_k\}$}
        \State{$\mathcal{O'}= Sort_{ascend}(|\mathcal{V}_{c_j}|)$ for $c_j \in \mathcal{S}_t $}
         \State{initialize empty node set $\mathcal{V}_t^{train}$, $\mathcal{V}_t^{val}$, and $\mathcal{V}_t^{test}$}
        \State{initialize empty encountered nodes set $\mathcal{V}_{t}$}
        \For{$c \in \mathcal{O'}$}
           
            \State{$\mathcal{V}_{c} = \{v_i | c \in y_i\}$}
            \If{$c$ is the smallest class in $\mathcal{S}_i$}
                \State{Randomly split $\mathcal{V}_c$ into $\mathcal{V}_c^{train}$, $\mathcal{V}_c^{val}$, $\mathcal{V}_c^{test}$ according to $P$}
            \Else
                \State{Calculate the size of train/val/test set $|\mathcal{V}_c^{train}|$, $|\mathcal{V}_c^{val}|$, $|\mathcal{V}_c^{test}|$ according to $P$ }
                \State{$\mathcal{V}_{t}^{dup} = \mathcal{V}_c \cap \mathcal{V}_t$}
                \State{$\mathcal{V}_c = \mathcal{V}_c - \mathcal{V}_{t}^{dup} $}
                \For{$v_i \in \mathcal{V}_{dup}$}
                    \For{$ split \in [\mathcal{V}_c^{train}$, $\mathcal{V}_c^{val}$, $\mathcal{V}_c^{test}]$}
                        \If{$v_i$ in $split$}
                            \State{$|split| = |split|-1$}               
                        \EndIf \algnotext{EndIf:}
                    \EndFor \algnotext{EndFor:}
                \EndFor \algnotext{EndFor:}
                \For{$ split \in [\mathcal{V}_c^{train}$, $\mathcal{V}_c^{val}$, $\mathcal{V}_c^{test}]$}
                   \State{Randomly choose $|split|$ nodes from $\mathcal{V}_c$ to add to $split$} 
                \EndFor \algnotext{EndFor:}
            \EndIf \algnotext{EndIf:}
            \State{add $\mathcal{V}_c^{train}$, $\mathcal{V}_c^{val}$, $\mathcal{V}_c^{test}$ to $\mathcal{V}_t^{train}$, $\mathcal{V}_t^{val}$, $\mathcal{V}_t^{test}$}
            \State{add $\mathcal{V}_c$ to $\mathcal{V}_t$}      
        \EndFor 
\end{algorithmic}
\end{algorithm}

In the Appendix \ref{sec:ana_sub}, we present an analysis of the subgraphs derived by \agale from the given static graph in \pcg as an example of showcasing the efficacy of our approach.

\subsection{Theoretical Analysis Of \agale}
\label{sec:theory}
As studied in previous works \citep{DBLP:journals/corr/abs-2106-06134, zhao2023multilabel}, the similarity of labels between neighboring nodes (usually termed label homophily) influences the performance of various graph machine learning algorithms for the task of node classification in the static case. We here provide a theoretical analysis of \agale with respect to the label homophily of generated subgraphs under different conditions. We would later use our theoretical insights and the dataset properties to analyze the performance of various methods. 
 We use the following definition of label homophily for multi-label graphs proposed in \citet{zhao2023multilabel}.

\begin{defin}
\label{def:edge_homo}
Given a multi-label graph $\mathcal{G}$, the label homophily $h$ of $\mathcal{G}$ is defined as the average of the Jaccard similarity of the label set of all connected nodes in the graph:
$$h = \frac{1}{|\mathcal{E}|}\sum_{(i,j)\in \mathcal{E}} \frac{|\mathcal{Y}^i \cap \mathcal{Y}^j|}{|\mathcal{Y}^i \cup \mathcal{Y}^j|}$$

\end{defin}

Let for any two connected nodes $i,j \in \mathcal{V}$, $h^{e(i,j)}_\mathcal{G}$ denotes the label homophily over the edge $e(i,j) \in \mathcal{E}$ in graph $\mathcal{G}$. We then have the following result about the label homophily of $e(i,j)$ in the subgraph $\mathcal{G}_t$ generated by \agale at time $t$.

\begin{theor}
\label{theo:homophily}
For any edge $e(i,j) \in \mathcal{E}$ and any subgraph at time $t$, $\mathcal{G}_t$ such that $e(i,j) \in \mathcal{E}_t$, $h^{e(i,j)}_{\mathcal{G}_t}\ge h^{e(i,j)}_{\mathcal{G}} $ when at least one of the nodes in $\{i,j\}$ is single-labeled. For the case when both nodes $i,j$ are multi-labeled, we obtain $h^{e(i,j)}_{\mathcal{G}_t}\ge h^{e(i,j)}_{\mathcal{G}} $
with probability at least $(1-(1-h^{e(i,j)}_\mathcal{G})^K) $ for \taskil and $(1-(1-h^{e(i,j)}_\mathcal{G})^{Kt}) $ for \classil.
\end{theor}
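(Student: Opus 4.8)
The plan is to first rewrite $h^{e(i,j)}_{\mathcal{G}_t}$ as a ratio involving only the original label sets, and then handle the two regimes of the statement separately. In both incremental settings the label set of a node $v$ appearing in $\mathcal{G}_t$ is the restriction $\mathcal{Y}_t^v=\mathcal{Y}^v\cap R_t$, where $R_t=\mathcal{S}_t$ in the \taskil and $R_t=\bigcup_{s\le t}\mathcal{S}_s$ in the \classil; by construction in Algorithm~\ref{algo:group_class}, marginally $\mathcal{S}_t$ is a uniformly random $K$-subset of the retained classes, so $R_t$ is a uniformly random subset of size $K$ (\taskilshort) or $Kt$ (\classilshort). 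Writing $U=\mathcal{Y}^i\cup\mathcal{Y}^j$, $I=\mathcal{Y}^i\cap\mathcal{Y}^j$ and $h:=h^{e(i,j)}_{\mathcal{G}}=|I|/|U|$, and using that $e(i,j)\in\mathcal{E}_t$ requires $i,j\in\mathcal{V}_t$, i.e. $\mathcal{S}_t$ meets both $\mathcal{Y}^i$ and $\mathcal{Y}^j$ (hence $R_t\cap U\neq\emptyset$), one checks $\mathcal{Y}_t^i\cap\mathcal{Y}_t^j=I\cap R_t$ and $\mathcal{Y}_t^i\cup\mathcal{Y}_t^j=U\cap R_t$, so
\[
h^{e(i,j)}_{\mathcal{G}_t}=\frac{|I\cap R_t|}{|U\cap R_t|}.
\]
Thus the claim becomes a comparison between the ``local'' shared-fraction of $R_t\cap U$ and the ``global'' shared-fraction $h$.

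\emph{At least one node single-labeled.} Say $\mathcal{Y}^i=\{c\}$. Since $i\in\mathcal{V}_t$ forces $c\in R_t$ (in either setting), we get $\mathcal{Y}_t^i=\{c\}=\mathcal{Y}^i$. If $c\notin\mathcal{Y}^j$ then $h=0$ and the inequality is trivial. If $c\in\mathcal{Y}^j$ then $I=\{c\}$ and $U=\mathcal{Y}^j$, so $h=1/|\mathcal{Y}^j|$ while $h^{e(i,j)}_{\mathcal{G}_t}=1/|\mathcal{Y}_t^j|$ with $\{c\}\subseteq\mathcal{Y}_t^j\subseteq\mathcal{Y}^j$; hence $|\mathcal{Y}_t^j|\le|\mathcal{Y}^j|$ and $h^{e(i,j)}_{\mathcal{G}_t}\ge h$. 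This uses no randomness, matching the ``with probability $1$'' reading of that case.

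\emph{Both nodes multi-labeled.} Here I would lower-bound $\Pr\big[\,|I\cap R_t|/|U\cap R_t|\ge h \mid e(i,j)\in\mathcal{E}_t\,\big]$ by exhibiting a favorable event $G$ on which the inequality holds deterministically and controlling its conditional probability. A natural candidate is $G=\{R_t\cap U\subseteq I\}$: together with $R_t\cap U\neq\emptyset$ this forces $\mathcal{Y}_t^i=\mathcal{Y}_t^j=R_t\cap U$, hence $h^{e(i,j)}_{\mathcal{G}_t}=1\ge h$. To estimate $\Pr[G\mid e(i,j)\in\mathcal{E}_t]$ I would expose the members of $R_t$ one at a time (a sampling-without-replacement process), noting $G$ fails only if some exposed class lands in $U\setminus I$, and argue that, conditionally, a class landing in $U$ lies in $I$ with probability at least $|I|/|U|=h$; a complement/union bound over the $|R_t|$ draws then gives $\Pr[G\mid e(i,j)\in\mathcal{E}_t]\ge 1-(1-h)^{|R_t|}$, which is $1-(1-h)^{K}$ for the \taskil and $1-(1-h)^{Kt}$ for the \classil. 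The \classil case differs only in that the conditioning event is still governed by the size-$K$ block $\mathcal{S}_t\subseteq R_t$ while the label restriction uses the size-$Kt$ set $R_t$; this is exactly what turns $K$ into $Kt$ in the exponent.

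\emph{Main obstacle.} The hard part is entirely the probabilistic step in the last case. Two points need care: (i) conditioning on $e(i,j)\in\mathcal{E}_t$ distorts the clean hypergeometric law of $R_t$, so the ``probability $\ge h$ of landing in $I$ given we land in $U$'' claim must be justified under this conditioning and under without-replacement sampling (and one likely needs $|\mathcal{C}|$ not too small relative to $|U|$ for the stated bound to be tight); and (ii) the favorable event must be chosen judiciously---the weaker and more tempting event $\{R_t\cap I\neq\emptyset\}$ does \emph{not} by itself force $h^{e(i,j)}_{\mathcal{G}_t}\ge h$ once both endpoints retain private labels, so the slack $(1-h)^{|R_t|}$ has to be arranged to absorb precisely those configurations. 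Everything outside this step is elementary manipulation of Jaccard ratios.
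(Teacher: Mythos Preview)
Your restriction formula $h^{e(i,j)}_{\mathcal{G}_t}=|I\cap R_t|/|U\cap R_t|$ and the single-labeled argument are correct and cleaner than the paper's two-scenario case split. For the multi-labeled case, note first that the paper takes exactly the route you label ``weaker and more tempting'': its favorable event is $A=\{\mathcal{C}_t\cap I\neq\emptyset\}$, and the bound $1-(1-h)^{|\mathcal{C}_t|}$ in the theorem is obtained verbatim as $\Pr[A]$ under the heuristic ``each of the $|\mathcal{C}_t|$ classes lands in $I$ with probability $h$, so none do with probability $(1-h)^{|\mathcal{C}_t|}$''. Your observation that $A$ alone need not force $h^{e(i,j)}_{\mathcal{G}_t}\ge h$ is valid and in fact points at a soft spot in the paper's own $K{=}2$ case table (which, on $A$, only establishes subgraph homophily $\ge 1/2$ in \taskilshort and $\ge 1/(2t)$ in \classilshort, not $\ge h$ in general).

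The gap in your proposal is that $1-(1-h)^{|R_t|}$ does \emph{not} lower-bound $\Pr[G]$ for your stronger event $G=\{R_t\cap U\subseteq I\}$. That expression is the probability that at least one of $|R_t|$ trials with success probability $h$ succeeds, i.e.\ a bound for $\Pr[A]$; for $G$ you need \emph{every} draw that lands in $U$ to land in $I$, and neither a union bound on $G^c=\{\text{some draw in }U\setminus I\}$ (which gives $\Pr[G^c]\le |R_t|\cdot\Pr[\text{one draw in }U\setminus I]$) nor a product bound on $G$ (which gives roughly $h^{m}$ when $m$ draws fall in $U$) produces $1-(1-h)^{|R_t|}$. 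A concrete counterexample: if $U=\mathcal{C}$ and $|I|=1$, then under sampling without replacement $G$ is impossible once $|R_t|\ge 2$, so $\Pr[G\mid e(i,j)\in\mathcal{E}_t]=0$ while $1-(1-h)^{|R_t|}>0$. In effect you have derived the probability estimate that belongs to $A$ and attached it to $G$. To recover the theorem as stated you are forced back to $A$, exactly as the paper does; the subtlety you flagged then remains a shared lacuna rather than something $G$ can resolve while keeping the same exponent.
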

\begin{proof}
In the multi-label graphs, one pair of connected nodes belongs to the following three scenarios: 1) two single-labeled nodes are connected, 2) a single-label node is connected to a multi-labeled node, and 3) two multi-labeled nodes are connected.

\textbf{Scenario 1:} Note that at any time step $t$ two nodes $i$ and $j$ are connected if and only if at least one label for each node appears in $\mathcal{C}_t$. As in the first scenario, both the nodes are single-labeled, and the label homophily score for edge $e(i,j)$ stays the same in the subgraph as in the original graph:

\begin{equation}
\label{eq:first}
    h_{\mathcal{G}_t}^{e(i,j)} = h_{\mathcal{G}}^{e(i,j)} =
    \begin{cases}
        0, & \text{if } \mathcal{Y}^i \neq \mathcal{Y}^j \\
        1, & \text{if } \mathcal{Y}^i = \mathcal{Y}^j 
    \end{cases}
\end{equation} 

\textbf{Scenario 2:} In the second scenario, where one single-labeled node $i$ is connected to a multi-labeled node $j$, at any time step $t$, when $e(i,j)$ appears in the subgraph $\mathcal{G}_t$, 

\begin{equation}
\label{eq:sec}
    h_{\mathcal{G}_t}^{e(i,j)} \geq h_{\mathcal{G}}^{e(i,j)}
    \begin{cases}
        h_{\mathcal{G}_t}^{e(i,j)} = h_{\mathcal{G}}^{e(i,j)} = 0, & \text{if } \mathcal{Y}^i \notin \mathcal{Y}^j \\
        h_{\mathcal{G}_t}^{e(i,j)} = 
        \begin{cases}
            \frac{1}{2}, & \text{if } \mathcal{Y}^i \subset \mathcal{C}_t \cap \mathcal{Y}^j \\
            1, & \text{if } \mathcal{C}_t \cap \mathcal{Y}^j = \mathcal{Y}^i 
        \end{cases} \geq h_{\mathcal{G}}^{e(i,j)}, & \text{if } \mathcal{Y}^i \in \mathcal{Y}^j\\
    \end{cases}
\end{equation}

 Combining \eqref{eq:first} and \eqref{eq:sec} we note that when at least one node in an edge is single-labeled, the label homophily of the corresponding edge will be equal to more than that in the static graph, thereby completing the first part of the proof.  

\textbf{Scenario 3:} In the third scenario, where two multi-labeled nodes $i$ and $j$ are connected, at any time step $t$, when $e(i,j)$ appears in the subgraph $\mathcal{G}_t$, it holds $\mathcal{C}_t \cap \mathcal{Y}_i \neq \emptyset$ and $\mathcal{C}_t \cap \mathcal{Y}_j \neq \emptyset$. In this scenario, the label homophily of an edge depends on the relationship between $\mathcal{Y}_i \cap \mathcal{Y}_j$ and $\mathcal{C}_t$:

\begin{equation}
    \begin{cases}
        0 = h_{\mathcal{G}_t}^{e(i,j)} < h_{\mathcal{G}}^{e(i,j)}  & \text{if } \mathcal{Y}^i \cap \mathcal{Y}^j \cap \mathcal{C}_t = \emptyset \\
        
         \begin{cases}
             h_{\mathcal{G}_t}^{e(i,j)} = \frac{1}{2}  & \taskil \\
             h_{\mathcal{G}_t}^{e(i,j)} \geq \frac{1}{2t}  & \classil\\
             
         \end{cases} & \text{if } \mathcal{Y}^i \cap \mathcal{Y}^j \neq \emptyset, \mathcal{Y}^i \cap \mathcal{Y}^j \cap \mathcal{C}_t \subset \mathcal{Y}^i \cap \mathcal{Y}^j, \mathcal{Y}^i \cap \mathcal{Y}^j \cap \mathcal{C}_t \subset \mathcal{C}_t \\

         h_{\mathcal{G}_t}^{e(i,j)} \geq h_{\mathcal{G}}^{e(i,j)} & \text{if }  \mathcal{Y}^i \cap \mathcal{Y}^j \subset\mathcal{C}_t \\

         h_{\mathcal{G}_t}^{e(i,j)} = 1 \geq h_{\mathcal{G}}^{e(i,j)} & \text{if } \mathcal{C}_t \subseteq \mathcal{Y}^i \cap \mathcal{Y}^j \\      
    \end{cases}       
\end{equation}

Note that all the statements hold in both incremental settings except for the second condition, where $\mathcal{Y}_t^i \cap \mathcal{Y}_t^j \cap \mathcal{C}_t$ is the strict subset of $\mathcal{Y}_t^i \cap \mathcal{Y}_t^j$ and $\mathcal{C}_t$. With a relatively smaller size of $|\mathcal{C}_t|=K=2$ in our setting, we have in the \taskil, $|\mathcal{Y}_t^i \cap \mathcal{Y}_t^j|=1$ and $|\mathcal{Y}_t^i \cup \mathcal{Y}_t^j| = 2$:

\begin{equation}
h_{\mathcal{G}_t}^{e(i,j)} = \frac{|\mathcal{Y}_t^i \cap \mathcal{Y}_t^j|}{|\mathcal{Y}_t^i \cup \mathcal{Y}_t^j|} = \frac{1}{2}
\end{equation}

while in the \classil, because $|\mathcal{Y}_t^i \cap \mathcal{Y}_t^j| \geq 1$, $|\mathcal{Y}_t^i \cup \mathcal{Y}_t^j| \leq Kt$, we obtain

\begin{equation}
    h_{\mathcal{G}_t}^{e(i,j)} = \frac{|\mathcal{Y}_t^i \cap \mathcal{Y}_t^j|}{|\mathcal{Y}_t^i \cup \mathcal{Y}_t^j|} \geq \frac{1}{2t}
\end{equation}

We can now upper bound the probability of the worst case event, i.e., when an edge $e(i,j)$ exists at time $t$ but $\mathcal{C}_t \cap \mathcal{Y}^i \cap \mathcal{Y}^j = \emptyset $. This can only happen if the classes in set $\mathcal{C}_t$ are chosen from the set $\mathcal{Y}^i \cup  \mathcal{Y}^j \setminus \mathcal{Y}^i \cap  \mathcal{Y}^j$. For \taskil, the probability of choosing at least one element of $\mathcal{C}_t$ from the common labels of node $i$ and $j$ is equal to $h_\mathcal{G}^{e(i,j)}$. Then the probability that none of the classes in $\mathcal{C}_t$ appear in the common set is at most $(1-h_\mathcal{G}^{e(i,j)})^{|\mathcal{C}_t|}$. The proof is completed by noting the fact that $|\mathcal{C}_t|=K$ for \taskil and $|\mathcal{C}_t|=Kt$ for \classil at time step $t$.
\end{proof}
\subsection{Comparison With Previous Evaluation Frameworks}

\label{sec:compare}
In response to overlooked challenges in established CGL and MLCL evaluation frameworks, as detailed in Section \ref{sec:introduction}, our framework tackles these issues by the following.

\begin{itemize}[leftmargin=*]
    \item \textbf{Incorporation of the multi-label scenario.} Contrary to previous evaluation frameworks \agale accommodates single-label and multi-label node nodes in the following ways.
    \begin{itemize}
        \item For single-label nodes, our framework expands upon previous methods during the task sequence's creation phase. It introduces dynamics in label correlations by allowing random class ordering to generate the task sequence. This results in diverse subgraph sequences, mimicking the random emergence of new trends in the real world.
        \item Regarding the multi-label scenario, as shown in Figure \ref{fig:our_framework}, our framework allows for update/change of label assignments for a given node in the \taskil and expansion of the node's label set in the \classil. 
    \end{itemize}

     \item \textbf{Information preservation and prevention of data leakage} 
         \begin{itemize}
        \item As described in Section \ref{sec:partition}, the data partitioning strategies of \agale ensure that no nodes from the original multi-label static graph are removed while creating the tasks. Single-labeled nodes appear once in the task sequence in both settings, while multi-labeled nodes surface with different labels in \taskil and \classil. Specifically, they appear with non-overlapping subsets of their label set in \taskil, and as the class set expands, their entire label set is guaranteed to be seen by the model before the final time step in \classil.

        \item Previous CGL evaluation frameworks split the nodes into train and evaluation sets within each class, not considering the situation where one node can belong to multiple classes in the task. Such a strategy may lead to data leakage as one node can be assigned to training and testing sets for the same task. During task training on a subgraph comprising various classes, our framework ensures no overlap among the training, validation, and test sets. Single-labeled nodes exclusively belong to one class, preventing their re-splitting after the initial allocation. For multi-label nodes that have been allocated to a particular class (see lines 11 and 12 in Algorithm \ref{algo:partition}), we exclude them from the remaining nodes of other classes they belong to, eliminating any potential data leakage during training and evaluation within one subgraph.
        
        \item In addition, we approach the continual learning setting by not allowing the inter-task edges. This deliberate choice means that, upon the arrival of a new task, the model no longer retains access to the data from the previous time steps.
    \end{itemize}

    \item \textbf{Ensuring fair split across different classes and the whole graph.} Due to the differences in the class size, a split from the whole graph will result in the bigger class dominating the splits, leaving the small class underrepresented in the splits. Moreover, the split within each class may result in data leakage in one subgraph, as explained in the previous paragraph. To maintain a fair split despite differences in class sizes, our framework prioritizes splitting smaller classes initially. It subsequently removes already split nodes from larger class node sets. This approach guarantees an equitable split within each class and from within the whole subgraph, preventing larger classes from dominating the splits and ensuring adequate representation for smaller classes.

     \item \textbf{Application for graph/edge-level CGL.} \agale can be directly applied for the graph classification task, each input data is an independent graph without interconnections.  
    For the edge classification task, our framework can be applied by first transforming the original graph $\mathcal{G}$ into a line graph $L(\mathcal{G})$, where 
    for each edge in $\mathcal{G}$, we create a node in $L(\mathcal{G})$; for every two edges in $\mathcal{G}$ that have a node in common, we make an edge between their corresponding nodes in $L(\mathcal{G})$.
\end{itemize}

\section{Related Work}
\subsection{Continual Learning}
Continual Learning \citep{DBLP:journals/corr/abs-1904-07734, hadsell2020embracing, nguyen2018variational, Aljundi_2019_CVPR, DBLP:journals/corr/LiH16e, DBLP:journals/corr/abs-1711-09601, wang2023comprehensive}, a fundamental concept in machine learning, addresses the challenge of enabling models to learn from and adapt to evolving data streams over time. Continual learning has applications in a wide range of domains, including computer vision, natural language processing, and reinforcement learning, making it an active area of research with practical implications for the lifelong adaptation of machine learning models. Unlike traditional batch learning, where models are trained on static datasets, continual learning systems aim to learn from new data while preserving previously acquired knowledge sequentially. This paradigm is particularly relevant in real-world scenarios where data is non-stationary and models need to adapt to changing environments. 

The key objectives of continual learning are to avoid catastrophic forgetting, where models lose competence in previously learned tasks as they learn new ones, and to ensure that the model's performance on earlier tasks remains competitive. Various techniques have been proposed in the literature to tackle these challenges, which can be categorized into four categories. 

\begin{itemize}[leftmargin=*]
    \item \textbf{Knowledge distillation methods.} The methods from this category \citep{DBLP:journals/corr/LiH16e, 10.1145/3404835.3463096, pmlr-v119-wang20n} retain the knowledge from the past by letting the new model mimic the old model on the previous task while adapting to the new task. Overall, the learning objective can be summarized as to minimize the following loss function:
    \begin{equation}
     \mathcal{L} = \lambda_o \mathcal{L}_{\text {old}}\left(\mathbf{Y}_o, \hat{\mathbf{Y}}_o\right)+\mathcal{L}_{\text {new}}\left(\mathbf{Y}_n, \hat{\mathbf{Y}}_n\right) + \mathcal{R},
    \end{equation}
   where $\mathcal{L}_{\text{old}}$ and $\mathcal{L}_{\text{new}}$ represent the loss functions corresponding to the old and new tasks, respectively. The parameter $\lambda_o$ is the weight for balancing the losses, and $\mathcal{R}$ encapsulates the regularization term. The process of transferring knowledge from a pre-existing model (teacher) to a continually evolving model (student) in knowledge distillation unfolds within $\mathcal{L}_{\text {old}}$, where the new model undergoes training to align its predictions on new data for the old task, denoted as $\hat{\mathbf{Y}}_o$, with the predictions of the previous model on the same new data for the old task, represented as $\mathbf{Y}_o$. Simultaneously, the new model  approximates its prediction of the new data on the new task $\hat{\mathbf{Y}}_n$ to their true labels $\mathbf{Y}_n$. For example, \lwf \citep{DBLP:journals/corr/LiH16e} minimize the difference between the outputs of the previous model and the new model on the new coming data for the previous tasks while minimizing the classification loss of the new model on the new task.
    
    \item  \textbf{Regularization strategies.} The methods in this category maintain the knowledge extracted from the previous task by penalizing the changes in the parameters $\theta$ of the model trained for the old tasks. Typically, the following loss is minimized:
    
    \begin{equation}
    \mathcal{L}(\mathbf{\theta})=\mathcal{L}_{\text{new}}(\mathbf{\theta})+ \lambda \sum_i \Omega_{i}\left(\mathbf{\theta_i}-\mathbf{\theta_{i}}^*\right)^2
    \end{equation}
    
    where $\mathcal{L}_{\text{new}}$ denotes the loss function for the new task, $\mathbf{\theta}$ is the set of model parameters. The parameter $\lambda$ functions as the weight governing the balance between the old and new tasks, while $\Omega_i$ represents the importance score assigned to the $ith$ parameter $\mathbf{\theta_i}$. For example, \mas \citep{DBLP:journals/corr/abs-1711-09601} assigns importance scores for the parameters by measuring how sensitive the output is to the change of the parameters. The term $\mathbf{\theta}_{i}^*$ refers to the prior task's parameter determined through optimization for the previous task.

    \item \textbf{Replay mechanisms.} Methods from this category extract representative data from the previous tasks and employ them along with the new coming data for training to overcome catastrophic forgetting \citep{shin2017continual, kim2020imbalanced}. Methods under this category mainly differ with respect to their approaches to sampling representative data from the old task for storage in the buffer. For example, \citet{kim2020imbalanced} maintains a target proportion of different classes in the memory to tackle the class imbalance in the multi-label data.
    
    \item \textbf{Dynamic architectures.} Methods from this category \citep{DBLP:journals/corr/abs-1708-01547, 10.1145/3447548.3467226} dynamically expand their architecture when needed for new tasks. This expansion may include adding new layers or neurons to accommodate new knowledge. For example, \citet{DBLP:journals/corr/abs-1708-01547} dynamically expands the network architecture based on the relevance between new and old tasks. 
 \end{itemize}

Another line of work in CL focuses on benchmarking evaluation methods. For instance, \citet{farquhar2019robust} and \citet{delange2023continual} provide more robust and realistic evaluation metrics for the CL methods, incorporating real-world challenges like varying task complexities and the stability gap.

\subsection{Continual Graph Learning}

As a sub-field of continual learning, Continual Graph Learning (CGL) addresses the catastrophic forgetting problem as the model encounters new graph-structured data over time. Within CGL, two primary lines of work exist. The first involves establishing evaluation frameworks that define incremental settings in CGL scenarios and their corresponding data partitioning algorithms. The second line of work focuses on proposing new methods based on specific predefined CGL incremental settings derived from these evaluation frameworks.
Our work mainly falls into the first category in which we develop a more holistic evaluation framework covering the multi-label scenario for graph-structured data.  

The previously established CGL frameworks focus on benchmarking tasks in CGL. For instance, \cite{zhang2022cglbƒ} defined Task- and Class- Incremental settings for single-labeled node and graph classification tasks in CGL and studied the impact of including the inter-task edges among the subgraph. \cite{ko2022begin} expanded this by adding the domain- and time-incremental settings and including the link prediction task in the CGL benchmark. Additionally, surveys like \cite{febrinanto2022graph} and \cite{yuan2023continual} focus on categorizing the approaches in CL and CGL.

However, none of the above works sufficiently addressed the complexities of defining the multi-label node classification task within the CGL scenario. The only exception is \citet{ko2022begin}, which used a graph with multi-labeled nodes, but that too in a domain incremental setting. In particular, each task was constituted of nodes appearing  from a single domain. Consequently, a node appears in one and only one task together with all of its labels. This does not cover the general multi-label scenario in which the same node can appear in multiple tasks each time with different or expanding label sets.

Existing methods for CGL focus mainly on the multi-class scenario and fall into one of the four categories (see the previous subsection) of continual learning methods. For example, \graphsail\citep{DBLP:journals/corr/abs-2008-13517} is a knowledge distillation approach that distills each node's local and global structure and its self-embedding knowledge, respectively. Regularization approach \twp\citep{DBLP:journals/corr/abs-2012-06002} adds a penalization to the parameters that are important to the learned topological information in addition to the task-related loss to stabilize the parameters playing pivotal roles in the topological aggregation. \ergnn\citep{Zhou_Cao_2021} is based on the replay mechanism and carefully selects nodes from the old tasks to the buffer and replays them with the new graph. \cite{DBLP:journals/corr/abs-2009-10951} combines replay and regularization to preserve existing patterns.

\subsection{Learning on dynamic graphs}

Since streaming graphs find applications in various domains, including social network analysis, recommendation systems, fraud detection, and knowledge graph refinement, several methods \citep{wang2023dyexplainer, 10.1145/3219819.3220024, ijcai2017p467, ijcai2019p548} have been proposed in the field of dynamic graph learning (DGL) to utilize the knowledge from the past to enhance the model's performance on the graph in the current timestamp. For example, \citet{DBLP:journals/corr/abs-2006-10637} uses the memory unit to represent the node's history in the compressed format, and \citet{DBLP:journals/corr/abs-1902-10191} uses recurrent architecture between the models trained for the adjacent time steps to let the new model inherent knowledge extracted from the old tasks. However, the designing goal of the methods in DGL is to utilize the knowledge extracted from the old tasks to enhance the performance of the model on the current task, while in CGL, we focus on the catastrophic forgetting problem, i.e., the model needs not only to perform well on the current task but also on the previous tasks in the task sequence. We compare and analyze the models from these two categories in detail in Section \ref{sec:results}.

\subsection{Application of graph machine learning in continual learning}
Some work \citep{tang2021graphbased,9795240} also attempts to use graph structures to alleviate catastrophic forgetting in Euclidean data. For instance, \citet{tang2021graphbased} augments independent image data in memory with a learnable random graph, capturing similarities among them to alleviate catastrophic forgetting. However, as our current focus is solely on graph-structured data, these endeavors fall beyond the scope of this study.

\section{Experiment Setup}
In this section, we test the state-of-art models from CL, DGL, and CGL domains. Note that in this study, we employ $P=3$, indicating that we generate three random orders for the classes in each dataset in the experimental section. We introduce the models according to their categories.

\subsection{Methods}
This subsection introduces all the methods used in the experiment section. The CL methods use Graph Convolutional Network (GCN) \citep{DBLP:journals/corr/KipfW16} as the backbone. 
\begin{itemize}[leftmargin=*]
\item{\underline{\textbf{\simplegcn}:}} 
We train GCN on each of the subgraph sequences without any continual learning technique, which is denoted as \simplegcn in the following sections.

\item{\underline{\textbf{\jointtraingcn}:}} We also include GCN trained on all the tasks simultaneously and therefore should not have the catastrophic forgetting problem. This setting is referred to as \jointtraingcn in the following section.

\item{\underline{\textbf{Continual Learning Methods}:}} We choose Learning Without Forgetting (\lwf), Elastic Weight Consolidation (\ewc), and Memory Aware Synapses (\mas) from this category. \lwf distill the knowledge from the old model to the new model to prevent the model from catastrophic forgetting. \ewc and \mas are both regularization-based methods. The difference is that \ewc penalizes the changes in the parameters that are important to the previous task, while \mas measures the importance of the parameters based on the sensitivity of the output on the parameters.

\item{\underline{\textbf{Dynamic Graph Neural Network}:}} We choose \evolvegcn \citep{DBLP:journals/corr/abs-1902-10191} from this category, which uses recurrent architecture between the models trained for the adjacent time steps to let the new model inherent knowledge extracted from the old tasks to enhance the model's performance on the current task.

\item{\underline{\textbf{Continual Graph Learning Methods}:}} We choose ERGNN \citep{Zhou_Cao_2021} from this category, which samples representative nodes from the old tasks in the buffer and replays them with the new data to address the catastrophic forgetting problem.

\end{itemize}
\subsection{Datasets}
\label{sec:dataset}
We demonstrate our evaluation framework on $3$ multi-label datasets in this work. We also include $1$ multi-class dataset \corafull as an example to demonstrate the generalization of our evaluation framework on single-label nodes. We include the description of the \corafull and the results on it in the Appendix \ref{sec:single_label}. 

The inter-task edges are defined in \citep{zhang2022cglbƒ} as the edges that connect the new subgraph to the overall graph. We do not allow inter-task edges in our evaluation framework, i.e., at each time step, only the subgraph for the new task is used as input. The reason is that in CL, the assumption is that the model loses access to the data from the previous time steps. With the inter-task edges, the node features from the previous time step would also be used as input, which violates this assumption and alleviates the forgetting problem. 

Below, we introduce the datasets used in this work:
\begin{enumerate}[leftmargin=*]
\item{\underline{\pcg}\citep{zhao2023multilabel},} in which nodes are proteins and edges correspond to the protein functional interaction, and the labels the phenotype of the proteins.
\item{\underline{\dblp}\citep{DBLP:journals/corr/abs-1910-09706},} in which nodes represent authors and edges the co-authorship between the authors, and the labels indicate the research areas of the authors. 

\item{\underline{\yelp}\citep{DBLP:journals/corr/abs-1907-04931},} in which nodes correspond to the customer reviews and edges to their friendships with node labels representing the types of businesses.
\end{enumerate}
The statistics about the datasets are summarized in Table \ref{tab:datasets}. We use the label homophily defined for multi-label graphs in \citet{zhao2023multilabel}. Following the application of a data partitioning algorithm, the given static graphs by the datasets are split into subgraph sequences. We also summarize the characteristics of the subgraphs to provide insights into the partitioned structure.

\begin{table}[!ht]
    \centering
    \caption{The data statistics. Specifically, $|\mathcal{V}|$, $|\mathcal{E}|$, $|\mathcal{C}|$, $\overline{\left|\mathcal{L}\right|}$, and $r_{homo}$ denote the number of nodes, edges, classes, mean label count per node, and label homophily of the static graph given by the dataset, respectively. $|T|$ signifies the count of tasks in the resulting task sequence. Additionally, $\overline{\left|\mathcal{V}\right|}$ and $\overline{\left|\mathcal{E}\right|}$ represent the average number of nodes and edges in a subgraph. Further details on label homophily are captured through $\overline{\left|r\right|}_{tsk}$ and $\overline{\left|r\right|}_{cls}$, representing the averaged label homophily of subgraphs in the \taskil and \classil), respectively.}
    \begin{tabular}{c|c|c|c|c|c|c|c|c|c|c} \hline 
                  &$|\mathcal{V}|$    &$|\mathcal{E}|$    &$|\mathcal{C}|$    &$\overline{\left|\mathcal{L}\right|}$     &$|T|$    &$r_{homo}$       &$\overline{\left|\mathcal{V}\right|}$   &$\overline{\left|\mathcal{E}\right|}$ &$\overline{\left|r\right|}_{tsk}$  &$\overline{\left|r\right|}_{cls}$\\ \hline 
         
         \pcg     &$3K$     &$37K$    &$15$ & $1.93$   &$7$      &$0.17$ &$808$     &$4763$         &$0.64$   &$0.38$\\ \hline 
         \dblp    &$28K$    &$68K$    &$4$  & $1.18$   &$2$      &$0.76$ &$15K$     &$37K$          &$0.86$   &$0.81$\\ \hline 
         \yelp    &$716K$   &$7.34M$  &$100$   &$9.44$ &$50$     &$0.22$ &$121K$          &$921K$               &$0.75$         &$0.47$ \\ \hline 
    \end{tabular}
    
    \label{tab:datasets}
\end{table}

In Theorem \ref{theo:homophily} we theoretically analyzed the label homophily of the edges in the subgraphs where we showed that in cases of single-labeled nodes and for higher homophily edges, the homophily in subgraphs typically increases. Table \ref{tab:datasets} further shows that the average label homophily of the subgraphs is in fact higher than the label homophily of the corresponding static graph. 

\subsection{Evaluation}

\subsubsection{Metrics}

We evaluate the models using performance matrix $\mathbf{M} \in \mathbb{R}^{T \times T}$, where  $\mathbf{M}_{i, k}$ denotes the performance score reported by an evaluation metric (e.g. AUC-ROC, average precision etc.) on task $\mathcal{S}_k$ after the model has been trained over a sequence of tasks from $\mathcal{S}_1$ to $\mathcal{S}_i$. At each time step $t$, the average performance of the model is measured by the average of the model's performances on task $\mathcal{S}_1$ to task $\mathcal{S}_i$, i.e., the average of the row $i$ in performance matrix $\mathbf{M}$. After the whole task sequence is presented to the model, we report the average performance $AP$ as:

\begin{equation}
    AP=\frac{\sum_{i=1}^T \mathbf{M}_{T, i}}{T}
\end{equation}

 which is the higher, the better. 
 
 We use the average forgetting $AF$ score proposed in \citet{DBLP:journals/corr/Lopez-PazR17}. The forgetting on task $\mathcal{S}_i$ is measured by the performance change on task $\mathcal{S}_i$ after the model is trained on the whole task sequence. Formally, we report the average forgetting $AF$ on all the tasks as:

\begin{equation}
    AF = \frac{ \sum_{i=1}^{T} (\mathbf{M}_{T, i} - \mathbf{M}_{i, i}) }{T-1}
\end{equation}

Note that we here compute a single metric to quantify the incurred forgetting over past tasks when the model is trained for the last task $\mathcal{S}_T$. The summand indicates the performance decrease on some task $\mathcal{S}_i$ after learning on  later task $\mathcal{S}_T$.
 
When the average forgetting is negative, its absolute value indicates the averaged performance decrease on all previous tasks when the model is trained on the last task $\mathcal{S}_T$ in the task sequence. 
 
A positive AF score indicates that the performance on some of the past tasks actually increased after training on task $\mathcal{S}_T$. A positive AF score might be the result of correlation among tasks that the model exploited, thus showing an improvement over past tasks.

Such an observation may be when the tasks from a graph are highly correlated with each other, training on the new task would help further improve the performance on the old tasks. 

 Overall, we report the $AP$ and $AF$ for each model, and the scores we obtain from the two metrics are interpreted in the following Table \ref{tab:interpret_score}. 
\begin{table}[!ht]
    \centering
    \begin{tabular}{c|p{6cm}|p{6cm}} \hline
         &  high $AF$ & low $AF$\\ \hline
         high $AP$ & preserves well-rounded knowledge across all the tasks &  performs well on the new task, while forgetting about the old tasks  \\\hline 
         low $AP$ &preserves the knowledge from the old tasks and harms the overall performance indicates the tasks are not correlated, improvements on one task harm the performance on the other tasks &forgets about the old task, and fail to perform well on the new task  \\\hline 
    \end{tabular}
    \caption{The interpretation of the average performance score ($AP$) and the average forgetting score ($AF$).}
    \label{tab:interpret_score}
\end{table}

\subsubsection{Visualization}
We use the heatmaps and lineplots to visualize the performance matrix $\mathbf{M}$. Due to the limited space, we add the heatmaps in the Appendix \ref{sec:visualization}. The lineplots are shown in Figure \ref{fig:Line_experiment}, which have the time steps as $x$ axis, the $y$ axis indicates the average performance of the model over all the tasks that have been encountered so far.

\section{Results and Analysis}
\label{sec:results}

In this section, we summarize the experimental results on the multi-label datasets in the \taskil and \classil defined in section \ref{sec: problem_formulation} in the Table \ref{tab:taskil_multi_label} and Table \ref{tab:classil_multi_label}, respectively. To use a single numerical value to quantify the overall performance of the models, we calculate an average performance matrix $\hat{\mathbf{M}}$ from the performance matrices from the three random splits and report the $AP$ and $AF$ from the averaged performance matrix.

\subsection{Lower and Upper Bounds in CGL}
\begin{figure}
  \centering

  \begin{subfigure}{0.32\textwidth}
    \includegraphics[width=\textwidth]{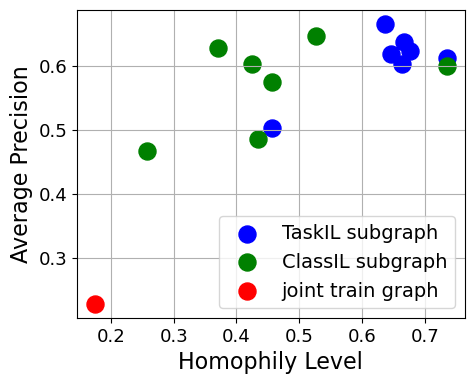}
    \caption{The visualization of the performances of GCN on the subgraphs and the joint train graph from \pcg and the label homophily of the graphs.}
    \label{fig:homo_pcg}
  \end{subfigure}
  \hfill
  \begin{subfigure}{0.32\textwidth}
    \includegraphics[width=\textwidth]{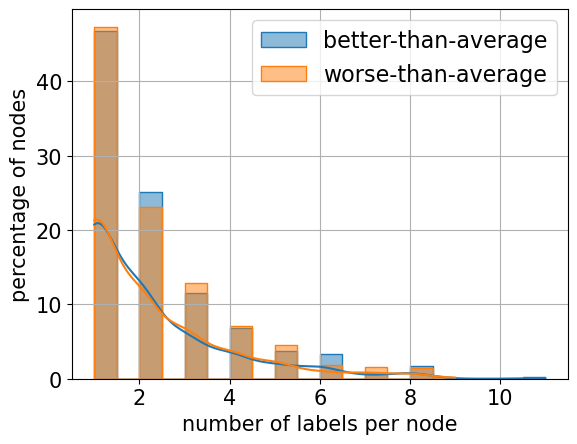}
    \caption{The distribution of the number of labels per node in the better-than-average subset and in the worse-than-average subset.}
    \label{fig:num_label_testset}
  \end{subfigure}
  \hfill
  \begin{subfigure}{0.32\textwidth}
    \includegraphics[width=\textwidth]{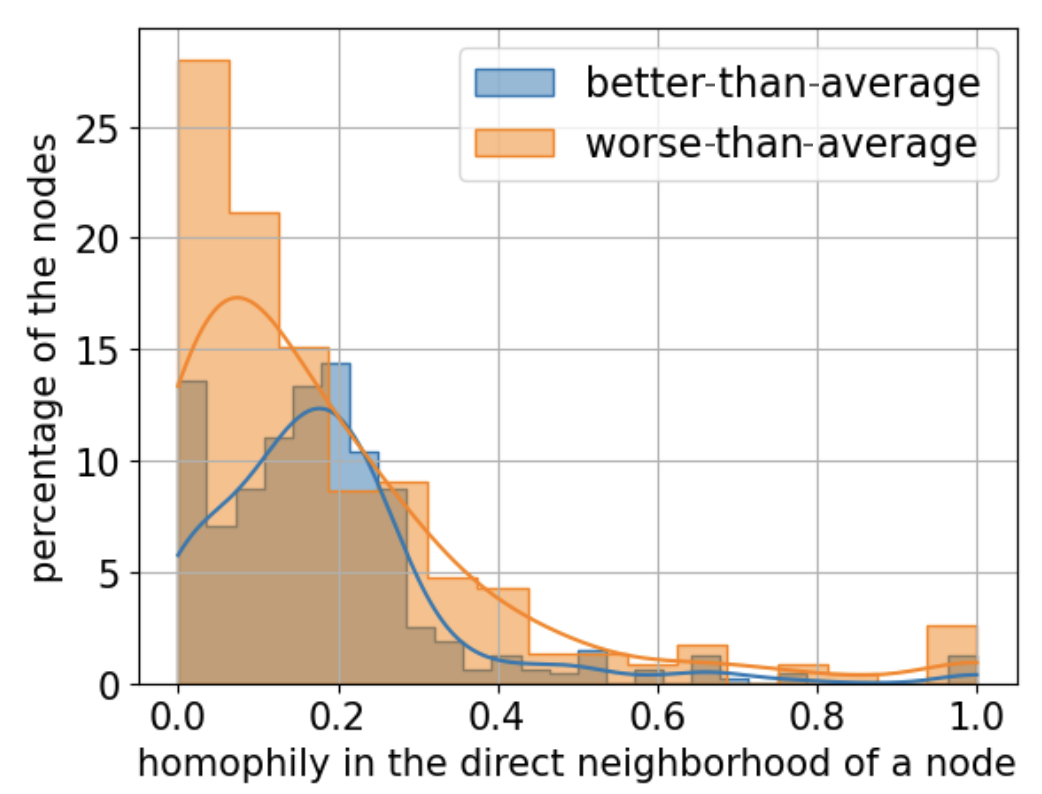}
    \caption{The distribution of the label homophily of the nodes in the better-than-average subset and in the worse-than-average subset.}
    \label{fig:homo_testset}
  \end{subfigure}

  \caption{Visualization of the analysis on the performance of \simplegcn and \jointtraingcn using \pcg as an example.}
  \label{fig:pcg_simplegcn_jointgcn}
\end{figure}

\begin{figure}[!ht]
    \centering
    \begin{subfigure}{0.32\textwidth}
        \centering
        \includegraphics[width=\linewidth]{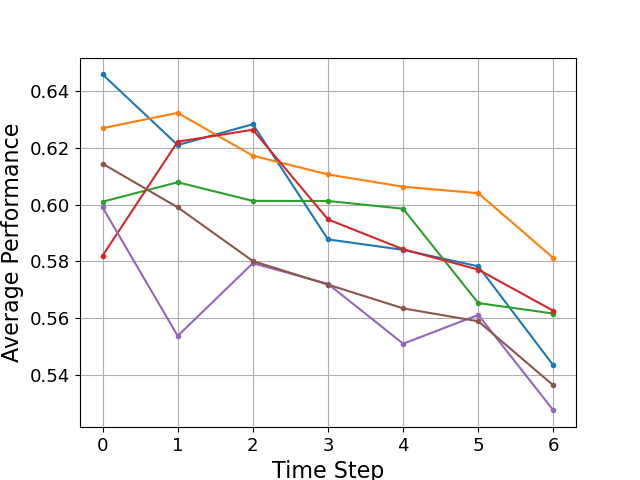}
        \caption{\pcg}
        \label{fig:pcg_taskil}
    \end{subfigure}
    \begin{subfigure}{0.32\textwidth}
        \centering
        \includegraphics[width=\linewidth]{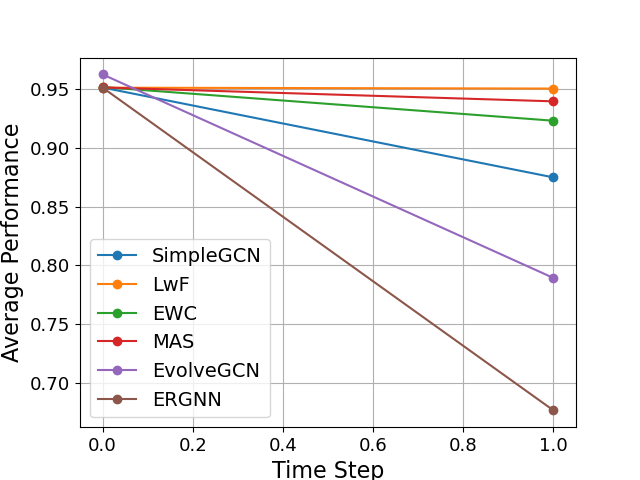}
        \caption{\dblp}
        \label{fig:dblp_taskil}
    \end{subfigure}
    \begin{subfigure}{0.32\textwidth}
        \centering
        \includegraphics[width=\linewidth]{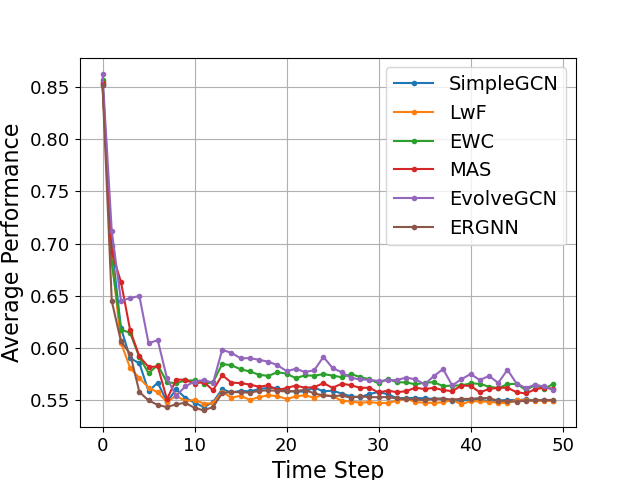}
        \caption{\yelp}
        \label{fig:yelp_taskil}
    \end{subfigure}
     \caption*{In \taskil} 
    
    \medskip

    \begin{subfigure}{0.32\textwidth}
        \centering
        \includegraphics[width=\linewidth]{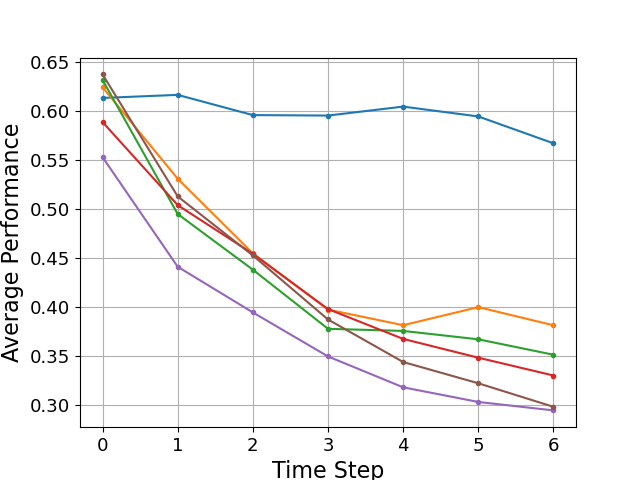}
        \caption{\pcg}
        \label{fig:pcg_classil}
    \end{subfigure}
    \begin{subfigure}{0.32\textwidth}
        \centering
        \includegraphics[width=\linewidth]{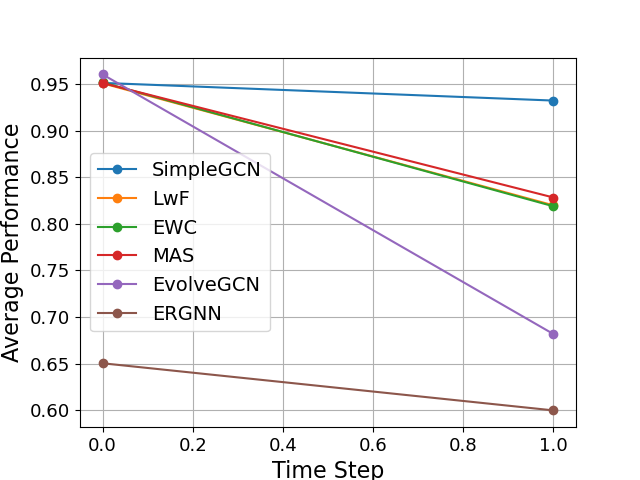}
        \caption{\dblp}
        \label{fig:dblp_classil}
    \end{subfigure}
    \begin{subfigure}{0.32\textwidth}
        \centering
        \includegraphics[width=\linewidth]{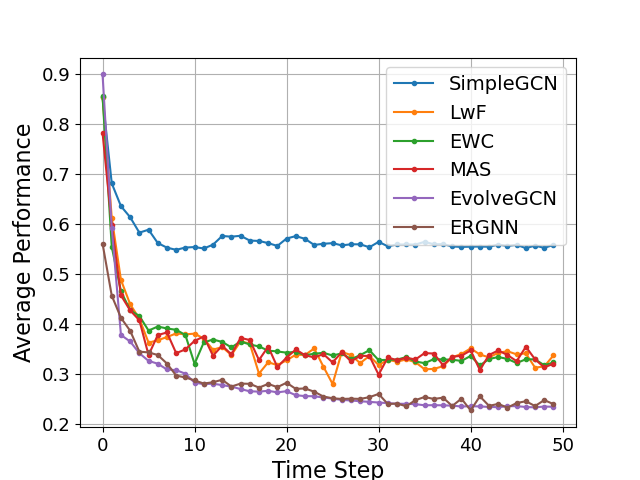}
        \caption{\yelp}
        \label{fig:yelp_classil}
    \end{subfigure}
    \caption*{In \classil} 

    \caption{Learning curves showing the dynamics of the average performance during learning on the task sequences of different datasets. The color coding and legend names remain consistent across all subfigures. To avoid obstructing the line plot, we omit the legend in the subplots corresponding to \pcg.}
    \label{fig:Line_experiment}
\end{figure}
In the previous CGL frameworks \citep{zhang2022cglbƒ, ko2022begin}, \simplegcn and \jointtraingcn are shown to have the worst and the best performance. Such a result is also expected as (i) \simplegcn is employed on sequential data without any enhanced abilities to deal with catastrophic forgetting (thereby showing performance degradation) and (ii) in \jointtraingcn all data is used to train the base GNN.
However, the results from multi-label datasets in both incremental settings, as shown in Table \ref{tab:taskil_multi_label} and Table \ref{tab:classil_multi_label}, reveal that \simplegcn and \jointtrain are no longer suitable as lower and upper bounds for evaluating CGL performance in a more generalized scenario of multi-label datasets. In the following, we theoretically and empirically analyze the rationale behind such a finding.

\subsubsection{Label homophily and GCN} 
GNNs, specifically GCN, which is used as a base network are known to have better performance on high label-homophilic graphs. 
As shown in Theorem \ref{theo:homophily}, splitting labels into distinct prediction tasks and creating subgraphs for each task results in an increase in label homophily of the edges in the subgraphs as compared to that in the full graph. In particular, if in a dataset there are a large number of single-labeled nodes in the full graph with a non-zero edge label homophily, the increase in label homophily of edges in subgraphs helps \simplegcn to assign correct labels to the corresponding nodes. However, in \jointtrain, the presence of diverse neighborhoods around single-labeled nodes leads to low label homophily, impacting its performance negatively.

\mpara{Empirical evidence.} Figure \ref{fig:pcg_simplegcn_jointgcn} illustrates the above statements with an example from one random shuffle of \pcg using the subgraphs generated for the \taskil(colored in blue), \classil(colored in green) and the original static graph given by the dataset (colored in red). On the $x$ axis, we show the label homophily level of the input graphs, while on the $y$ axis, we show the performance of \simplegcn after it is trained on the subgraph in the corresponding incremental settings and \jointtraingcn on the whole static graph.
We make the following observations.
\begin{itemize}[leftmargin=*]
    \item The subgraphs in \taskil and \classil have higher label homophily than the full graph, explaining the better performance of \simplegcn as compared to \jointtraingcn. 
    \item We also observe that as compared to \taskil, the subgraphs generated for \classil have lower label homophily. This happens because of expanding label sets in \classil.
\end{itemize}

In Figure \ref{fig:num_label_testset} and \ref{fig:homo_testset}, we further analyze the causes of the bad performance of the \jointtraingcn. We used the \jointtraingcn model on test nodes from the joint train graph in \pcg and calculated an average precision score for each node. The mean value of the scores is then used as a threshold to divide the test nodes into the set of nodes that perform \textit{better-than-average} and the \textit{worse-than-average} performing node subset, indicated by the blue and orange bars in the plots. To remove the influence of the difference in the sizes of the subsets, we use the percentage of the nodes in the corresponding subset as the $y$ axis.

Based on the edge homophily defined in \ref{def:edge_homo}, we define the label homophily in the direct neighborhood of a node as the averaged edge homophily connected to this node:

\begin{defin}
\label{def:node_homo}
For a node $v$ in the graph $\mathcal{G}$, we define the label homophily of a node $v$ with respect to its immediate neighborhood $\mathcal{N}^v$, represented as $h^v$, as the average of label homophily of the edges connected to $v$:

   $$h^v = \frac{\sum_{e(i,j) | j\in \mathcal{N}^v} {h^{e(i,j)}}}{|\mathcal{N}^v|}$$

\end{defin}

We make the following observations. 

\begin{itemize}[leftmargin=*]
    \item In Figure \ref{fig:num_label_testset}, the percentage of single-labeled nodes in the worse-than-average performing subset is higher than that the better-than-average subset.   
   \item Figure \ref{fig:homo_testset} shows that in fact, the high percentage of worse-performing nodes have very low label homophily (computed using Definition \ref{def:node_homo}) close to 0. 
   \item The above two observations indicate that the performance of \jointtraingcn suffers due to the presence of a higher percentage of low label homophily edges with at least one single-labeled node.
\end{itemize}

For completeness, we include in Figure \ref{fig:homo_testset} a Kernel Density Estimation on the node homophily distribution, which shows a clear shift in the distributions of the label homophily in the better-performing subset as compared to the worse-than-average subset.

In the following sections, we summarize the performance of the chosen baselines in the \taskil and \classil and provide a detailed analysis of the performances of the baselines on different datasets.

\subsection{Results in \taskil}

\begin{table}[!ht]
    
    \caption{Performance of the baseline models in the \taskil setting. The performances are reported in Average Precision. "AP" stands for Average Precision, and the higher, the better. "AF" indicates the average forgetting, and the higher, the better.}
    \centering
    \begin{tabular}[width=0.8\linewidth]{l cc| cc| cc} \hline
         \taskilshort & \multicolumn{2}{c|}{\pcg}&  \multicolumn{2}{c|}{DBLP}&  \multicolumn{2}{c}{\yelp}\\ \hline
        
 & AP& AF& AP& AF& AP& AF\\ \hline 
         \simplegcn  &$54.34 \pm 0.04$  &$-6.11 \pm 0.03$  &$87.47 \pm 0.12$  &$-15.76 \pm 0.00$ &$54.87 \pm 0.03$  &$-1.43 \pm 0.05$  \\ \hline 
         \lwf        &$58.12 \pm 0.05$  &$-2.84 \pm 0.02$  &$95.01 \pm 0.01$  &$-0.98 \pm 0.00$ &$54.89 \pm 0.03$  &$-2.07 \pm 0.05$  \\ \hline 
         \ewc        &$56.17 \pm 0.03$  &$-3.77 \pm 0.03$  &$92.28 \pm 0.05$  &$-6.51 \pm 0.01$ &$56.53 \pm 0.05$  &$-0.17 \pm 0.02$  \\ \hline 
         \mas        &$56.26 \pm 0.04$  &$-2.64 \pm 0.03$  &$93.93 \pm 0.03$  &$-3.17 \pm 0.00$ &$56.05 \pm 0.03$  &$-0.76 \pm 0.05$ \\ \hline 
         \evolvegcn  &$52.76 \pm 0.06$  &$-3.68 \pm 0.03$  &$78.94 \pm 0.25$  &$-35.20 \pm 0.00$ &$55.93 \pm 0.07$  &$-5.11 \pm 0.07$ \\ \hline 
         \ergnn      &$53.64 \pm 0.06$  &$-1.39 \pm 0.02$  &$67.70 \pm 0.03$  &$-24.96 \pm 0.00$ &$54.99 \pm 0.03$  &$-0.92 \pm 0.04$ \\ \hline 
         \jointtrain &$22.47 \pm 0.47$  &$-$               &$85.60 \pm 0.25$  &$-$               &$13.80 \pm 0.08 $ &$-$ \\ \bottomrule
    \label{tab:taskil_multi_label}
    \end{tabular}
\end{table}

Table \ref{tab:taskil_multi_label} presents results for three real-world multi-label datasets in \taskil. In general, the knowledge distillation method \lwf excels on graphs with shorter task sequences (e.g., \pcg and \dblp with 7 and 2 tasks, respectively). In contrast, all methods perform comparably on the graph with a long task sequence in \yelp with $50$ tasks, among which regularization-based methods like \ewc and \mas slightly outperform other approaches. This disparity arises because \lwf distills knowledge only from the last time step, leading to a performance drop with longer sequences. Meanwhile, regularization-based methods, like \ewc and \mas, which penalize the changes in the important parameters for previous tasks, prove effective for longer task sequences. The weak performance of the replay-based methods \ergnn indicates the importance of including the local topological structure around the nodes in the buffer instead of sampling isolated nodes in the buffer. Dynamic graph neural networks like \evolvegcn struggle with substantial forgetting despite achieving notable average precision scores because they only focus on the current task. We visualize the learning curve of the models in the \taskil on \pcg, \dblp, and \yelp in Figure \ref{fig:pcg_taskil}, \ref{fig:dblp_taskil}, and \ref{fig:yelp_taskil}, respectively. The $x$ axis indicates the current time step, and the corresponding value on the $y$ axis infers the average performance of the model at the current time step over all the tasks encountered so far. 

\subsection{Detailed analyses on different datasets}
{\mpara{\pcg.} \pcg has a relatively shorter task sequence with $7$ tasks. \simplegcn showcases competitive scores but is susceptible to forgetting, indicating the low correlation among the tasks. \lwf outperforms \simplegcn and notably improved robustness against forgetting, which indicates the shorter task sequence in \pcg contributes to the effectiveness of \lwf in retaining task knowledge because \lwf only distills knowledge from the previous model. \ewc and \mas also exhibit competitive performance, demonstrating moderate resistance to forgetting. Meanwhile, because of the low correlations among the tasks, \evolvegcn faces challenges using with a lower performance and notable forgetting. \jointtraingcn has the poorest performance because of the low label homophily level on the joint train graph. }

{\mpara{\dblp.} \dblp has the shortest task sequence length with only $2$ tasks. \lwf once again stands out with the highest performance and minimal forgetting. The \simplegcn has the worst forgetting on \dblp compared to the other two datasets, indicating the tasks in \dblp have the lowest task correlation. While \ewc and \mas present comparable performance to \lwf, they suffer from worse forgetting. Notably, the low task correlation also results in the low performance and extreme forgetting of \evolvegcn and \ergnn, indicating the information from the previous task that lies in the model, and the data can not assist the model's performance on the new task. And because the joint train graph in \dblp has the highest level of label homophily, the \jointtraingcn also achieves better performances compared to its performance on the other two multi-label datasets.}

{\mpara{\yelp.} The \yelp dataset is characterized by the longest task sequence encompassing $50$ tasks and featuring the highest task correlations, which is indicated by the competitive performance shown by \simplegcn. Despite the extended task sequence, training on a new task does not significantly impair performance on the previous tasks. The long task sequence poses a potential challenge for \lwf, as prolonged sequences lead to increased forgetting. \ewc and \mas emerge as robust performers in this demanding setting, demonstrating solid performance with competitive performances and modest forgetting. \evolvegcn encounters a lower score coupled with considerable forgetting, as the high task correlation makes the utilization of the previous model helpful to improve the performance of the current task, \evolvegcn pays no attention to maintaining the performance on the old tasks. Additionally, \ergnn achieves a comparable performance with minimal forgetting, positioning it as a strong contender on the \yelp dataset. \jointtraingcn achieves the lowest performance because of the low label homophily level in the joint train graph. }

\subsection{\classil}
\begin{table}[!ht]
    \centering
     \caption{Performance of the baseline models in \classil setting. The performances are reported in Average Precision. "AP" stands for Average Precision and the higher the better. "AF" indicates the average forgetting, and the higher, the better.}
    \begin{tabular}[width=0.8\linewidth]{c|c c|c c|c c} \hline 
         \classilshort & \multicolumn{2}{|c|}{\pcg}&  \multicolumn{2}{|c|}{\dblp}&  \multicolumn{2}{|c}{\yelp}\\ \hline 
 & AP& AF& AP& AF& AP& AF\\ \hline 
         \simplegcn  &$56.70 \pm 0.04$  &$-2.48 \pm 0.03$  &$93.22 \pm 0.03$  &$-3.90 \pm 0.00$ &$55.78 \pm 0.03$  &$-1.56 \pm 0.05$  \\ \hline 
         \lwf        &$38.13 \pm 0.13$  &$3.48 \pm 0.03$   &$81.98 \pm 0.18$  &$-0.69 \pm 0.00$  &$33.72\pm 0.05$  &$0.59 \pm 0.05$  \\ \hline 
         \ewc        &$35.12 \pm 0.13$  &$2.17 \pm 0.03$   &$81.88 \pm 0.06$  &$-8.92 \pm 0.00$  &$32.32\pm 0.08$  &$1.71 \pm 0.03$  \\ \hline 
         \mas        &$33.00 \pm 0.15$  &$0.98 \pm 0.02$   &$82.82 \pm 0.10$  &$-4.87 \pm 0.00$  &$32.07\pm 0.07$  &$-1.10 \pm 0.04$ \\ \hline 
         \evolvegcn  &$29.44 \pm 0.12$  &$0.23 \pm 0.01$   &$68.18 \pm 0.03$  &$-29.69 \pm 0.00$  &$23.45\pm 0.06$         &$-0.70 \pm 0.05$ \\ \hline 
         \ergnn      &$29.80 \pm 0.14$  &$-2.82 \pm 0.03$  &$59.99 \pm 0.12$  &$3.14 \pm 0.00$ & $24.00 \pm 0.06$        &$-0.12 \pm 0.01$ \\ \hline 
         \jointtrain &$22.47 \pm 0.47$  &$-$               &$85.60 \pm 0.25$  &$-$            &$13.80 \pm 0.08$ &$-$ \\ \hline
    \end{tabular}
   
    \label{tab:classil_multi_label}
\end{table}
Table \ref{tab:classil_multi_label} presents results for three real-world multi-label datasets in \classil. Overall, \simplegcn achieves a superior performance across all datasets. This performance contrast is noteworthy when compared to its performance on multi-class datasets in the previous works \citep{zhang2022cglbƒ, ko2022begin}. The key distinction lies in our evaluation framework, where we enable the label vectors of multi-labeled nodes to expand during \classil. 
In essence, this approach incorporates the previous labels of multi-labeled nodes as part of the target labels in subsequent tasks. This strategy serves a dual purpose: it mitigates the problem of forgetting while simultaneously improving the performance on earlier tasks. This improvement is indicated by the positive average forgetting scores in the \classil context. The performance of \jointtraingcn is not influenced by the change in the setting, as it is trained on all the tasks simultaneously. 

The drop in the performances of other baseline models is a result of the increasing number of classes in the tasks at each time step, i.e., the difficulty of the task increases at each time step. We visualize the learning curve of the models in the \classil on \pcg, \dblp, and \yelp in Figure \ref{fig:pcg_classil}, \ref{fig:dblp_classil}, and \ref{fig:yelp_classil}, respectively. The $x$ axis indicates the current time step, and the corresponding value on the $y$ axis infers the average performance of the model at the current time step over all the tasks encountered so far. Below, we analyze the performance of the chosen baseline models on each of the datasets.

\subsection{Detailed analyses on different datasets}
{\mpara{\pcg.}} \simplegcn leads with the highest average performance overall tasks with low forgetting, as it has no CL technique to prevent forgetting. On the other hand, the CL methods sacrificed the average performance on the task sequence but successfully maintained a positive AF. This means the knowledge distillation- and regularization-based models are able to retain the knowledge from the old tasks in the \classil.  \evolvegcn and \ergnn achieve comparable average performance on the task sequence, but the \ergnn fails to retain the knowledge from the old task as it only samples the isolated nodes in the replay buffer while ignoring the topological structure. \jointtraingcn remains the worst-performing model because of the low label homophily in the input graph.

{\mpara{\dblp.}} \dblp has the shortest task sequence, but \simplegcn and CL methods \lwf, \ewc, and \mas suffered from the most severe forgetting problem on it. These negative average forgetting scores observed in \dblp indicate low task correlation, i.e., the knowledge from the old task hinders the model from achieving better performance on the new task. As the least multi-labeled graph, \dblp witnesses the least pronounced performance dip in the \classil compared to \taskil. This observation suggests that multi-label datasets pose a more challenging test for models when the label vectors of nodes continue to grow.

{\mpara{\yelp.}} In \yelp, nodes are more multi-labeled compared to \pcg and \dblp, as shown in Table \ref{tab:datasets}. Overall, we see a  clear performance difference in \classil compared to \taskil on \yelp. Furthermore, knowledge distillation- and regularization-based methods surpass the dynamic graph neural network \evolvegcn and the replay-based method \ergnn. This is primarily due to the fact that \evolvegcn neglects the preservation of knowledge from previous tasks, which ultimately hampers overall performance. \ergnn, on the other hand, disregards the topological structure surrounding the sampled experience nodes, further impacting its efficacy in handling the evolving tasks.

\section{Conclusion}

We develop a new evaluation framework which we refer to as \agale for continual graph learning. Filling in the gaps in the current literature, we (i) define two generalized incremental settings for the more general multi-label node classification task, (ii) develop new data split algorithms for curating CGL datasets, and  (iii) perform extensive experiments to evaluate and compare the performance of methods from continual learning, dynamic graph learning and continual graph learning. Through our theoretical and empirical analyses we show important differences of the multi-label case with respect to the more studied single-label scenario. We believe that our work will encourage the development of new methods tackling the general scenario of multi-label classification in continual graph learning.

Following the current literature, we focus on quantifying catastrophic forgetting in \agale. In realistic scenarios, there is also the case where the model could be required to selectively forget about the past. For example, users in the social network might not further show interest in certain topics and un-follow some of the friends. Developing new evaluation metrics as well as new models to reward selective forgetting of some tasks while avoiding catastrophic forgetting overall is an interesting avenue for future research.

\bibliography{main}
\bibliographystyle{tmlr}

\appendix
\section{Appendix}

\mpara{Organization.} We analyze the characteristics of the subgraphs generated by \agale and compare them with the full graph given in the dataset in Section \ref{sec:ana_sub}. Furthermore, we also apply our \agale on single-label graph \corafull and summarize and analyze the results in section \ref{sec:single_label} to further demonstrate the generalization of \agale in sing-label scenarios. Additionally, we provide detailed time and space complexity analysis in Section \ref{sec:time_space_complexity} and measure and summarize the run time of the conducted experiments as well. Last but not least, we provide the visualization of the performance matrix using heatmaps in Section \ref{sec:visualization}.

\subsection{Data Analysis Of The Subgraphs}
\label{sec:ana_sub}
In this section, we present an analysis of the subgraphs derived by our evaluation framework from the static graph in \pcg, showcasing the efficacy of our approach. Figure \ref{fig:digree_dist_subgraphs} illustrates the degree distribution of nodes within the seven subgraphs generated from the \pcg dataset. We see from the degree distribution that the nodes in the subgraphs also have a similar degree distribution to the nodes in the original static graph. 

\begin{figure}[!ht]
    \centering
    \includegraphics[width=\textwidth]{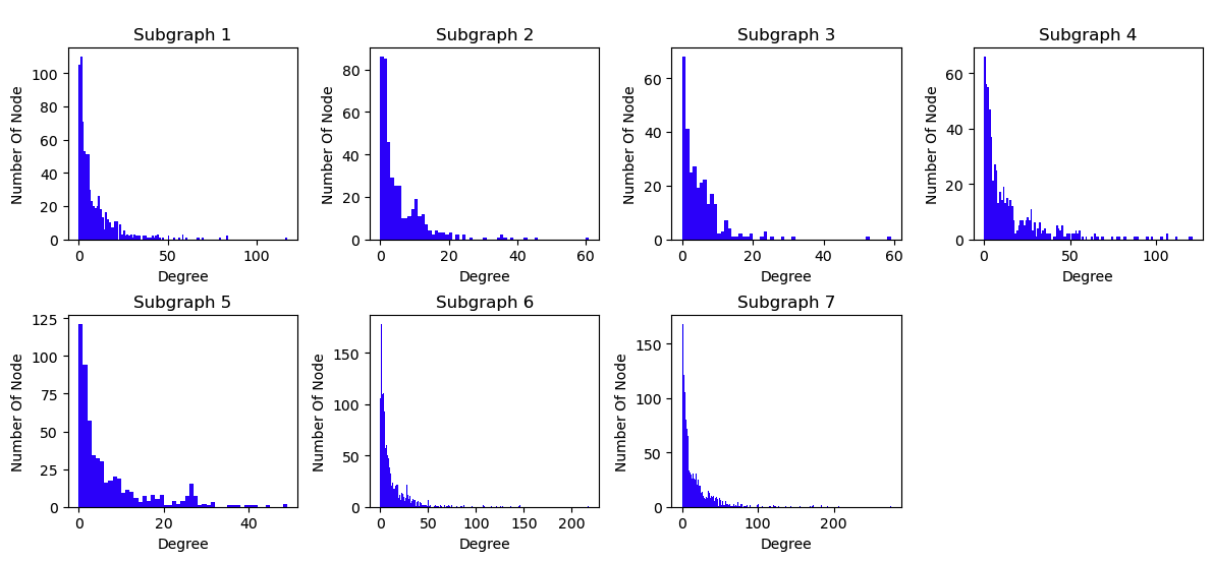}
    \caption{The Node Degree Distribution In the seven Subgraphs Generated From \pcg.}
    \label{fig:digree_dist_subgraphs}
\end{figure}

\subsection{Application Of Our Evaluation Framework On Single-label Graphs}
\label{sec:single_label}
In this section, we provide an example of applying our evaluation framework to single-label graphs. Here, we use \corafull as an example. We summarize the characteristics of \corafull in Table \ref{tab:corafull}. As shown in the Table, \corafull has $70$ classes, which are divided into $35$ tasks in $3$ random orders.

\begin{table}[!ht]
    \centering
    \caption{The data statistics. Specifically, $|\mathcal{V}|$, $|\mathcal{E}|$, $|\mathcal{C}|$, $\overline{\left|\mathcal{L}\right|}$, and $r_{homo}$ denote the number of nodes, edges, classes, mean label count per node, and label homophily of the static graph given by the dataset, respectively. $|T|$ signifies the count of tasks in the resulting task sequence. Additionally, $\overline{\left|\mathcal{V}\right|}$ and $\overline{\left|\mathcal{E}\right|}$ represent the average number of nodes and edges in a subgraph. Further details on label homophily are captured through $\overline{\left|r\right|}_{tsk}$ and $\overline{\left|r\right|}_{cls}$, representing the averaged label homophily of subgraphs in the \taskil and \classil), respectively.}
    \begin{tabular}{c|c|c|c|c|c|c|c|c|c} \hline 
                   &$|\mathcal{V}|$    &$|\mathcal{E}|$    &$|\mathcal{C}|$    &$|T|$    &$r_{homo}$       &$\overline{\left|\mathcal{V}\right|}$   &$\overline{\left|\mathcal{E}\right|}$ &$\overline{\left|r\right|}_{tsk}$  &$\overline{\left|r\right|}_{cls}$ \\ \hline 
         \corafull    &$19K$    &$130K$     &$70$      &$35$      &$0.57$ &$566$     &$1035$          &$0.99$  &$0.99$ \\ \hline 
    \end{tabular}
    \label{tab:corafull}
\end{table}

In Table \ref{tab:taskil_multi_class} and \ref{tab:classil_multi_class}, we summarize the performance of \lwf and \ergnn on the dataset \corafull in \taskil and \classil and use the line plots in Figure \ref{fig:cora_lineplot} to visualize the learning curves of the chosen models in the two settings on \corafull. 

\begin{table}[!ht]
    \centering
    \caption{Performance of the baseline models in \taskil setting. The performances are reported in Average Precision. "AP" stands for Average Precision and the higher the better. "AF" indicates the average forgetting, and the higher, the better.}
    \begin{tabular}{l|c c} \hline 
         \taskil & \multicolumn{2}{|c}{\corafull}\\ \hline 
 & AP& AF \\ \hline 

         \lwf        &$53.46 \pm 0.12$  &$-9.53\pm 0.16$   \\ \hline 
         \ergnn      &$59.49 \pm 0.20$  &$4.37\pm0.34$   \\ \hline 
    \end{tabular}
    
    \label{tab:taskil_multi_class}
\end{table}

\begin{table}[!ht]
    \centering
    \caption{Performance of the baseline models in \classil setting. The performances are reported in Average Precision. "AP" stands for Average Precision and the higher the better. "AF" indicates the average forgetting, and the higher, the better.}
    \begin{tabular}{l|c c} \hline 
         \classil & \multicolumn{2}{|c}{\corafull}\\ \hline 
 & AP& AF \\ \hline 
         \lwf        &$5.42 \pm 0.15$  &$-7.45\pm0.14$   \\ \hline 
         \ergnn      &$40.39 \pm 0.27$  &$-56.08\pm0.25$   \\ \hline 
    \end{tabular}
    
    \label{tab:classil_multi_class}
\end{table}

\begin{figure}[htbp]
    \centering

    \begin{subfigure}{0.48\textwidth}
        \centering
        \includegraphics[width=0.8\linewidth]{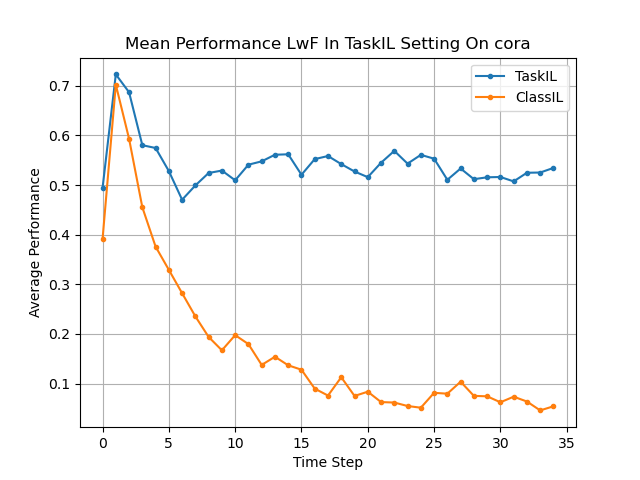} 
        \caption{\lwf in \taskil and \classil on \corafull}
        \label{fig:lwf_corafull}
    \end{subfigure}
    \hfill
    \begin{subfigure}{0.48\textwidth}
        \centering
        \includegraphics[width=0.8\linewidth]{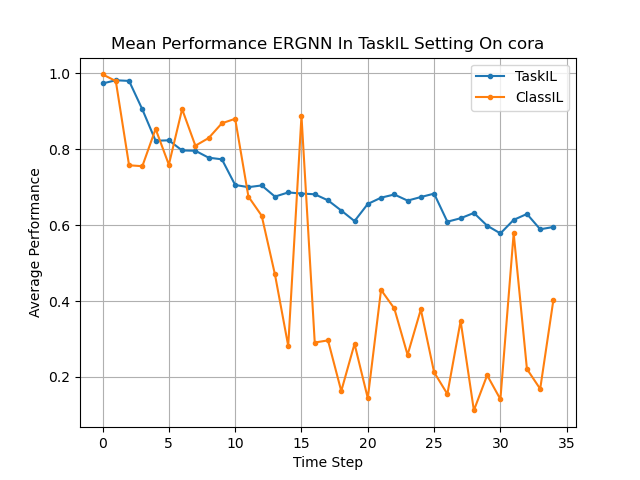}
        \caption{\ergnn in \taskil and \classil on \corafull}
        \label{fig:ergnn_corafull}
    \end{subfigure}

    \caption{Our Framework on Single-label Datasets}
    \label{fig:cora_lineplot}
\end{figure}

\subsection{Time And Space Complexity Analysis}
\label{sec:time_space_complexity}
Here, we provide theoretical time and space complexity analysis of the models used in this work.

\paragraph{Complexity analysis for the base model.} As the base model used by all compared methods is GCN \citep{DBLP:journals/corr/KipfW16}, we first analyze its complexity. To keep the notations simpler let us assume that the feature (including the input features) dimension in all layers is equal to $d$. Let $n,m$ denote the number of nodes and edges, respectively in the input graph at any time point. For the sake of brevity in the presentation, we assume that the number of nodes and edges stay the same for all time points.

For GCN, at each layer, the operation includes feature transformation, neighbourhood aggregation, and activation. The feature transformation over two layers leads to the multiplication of matrices of sizes (i) $n \times d$ and $d \times d$, and (ii) $d\times d$ and  $d\times d$ which leads to a total time complexity of $\mathcal{O}(nd^2)$.

And the neighborhood aggregation requires a multiplication between matrices of size $n \times n$ and $n\times d$, yielding $\mathcal{O}(n^2d)$. In practice, we compute this using a sparse operator, such as the PyTorch scatter function for each entry $(i, j)$ in the adjacency matrix of the edge $e \in \mathcal{E}$,  which yields a total cost of $\mathcal{O}(md)$. Finally, the activation is an element-wise function with the time complexity of $\mathcal{O}(n)$. Overall, the time complexity of a $L$ layer GCN is $\mathcal{O}(nd^2L+ mdL+nL)$. 

For computing space requirements of GCN, we include (i) the space required for the input adjacency matrix of size $n\times n$, (ii) the feature matrix of size $n \times d$, and (iii) the model itself with $d^2+d$ parameters for weight and bias in each layer. In total, the space complexity of GCN is $\mathcal{O}(n^2+nd+L(d^2+d))$.

As all methods mentioned in this work either use GCN as the backbone model or are built upon GCN, we denote in the following discussion and the Table \ref{tab:time_space_analysis} the time and space requirement of GCN as $T_{GCN}$ and $S_{GCN}$ respectively.

\paragraph{Complexity analysis of \simplegcn.} \simplegcn trains a GCN for each time step $t \in \mathcal{T}$. And because it does not apply any continual learning techniques to remember from previous time steps, the time is the same with GCN, i.e., $\mathcal{O}(|\mathcal{T}|T_{GCN})$ and the space complexity is $S_{GCN}$.

\paragraph{Complexity analysis of \lwf.} \lwf uses GCN as the backbone model, the GCN is trained at each time step $t \in \mathcal{T}$ for the new task, which gives the time complexity of $\mathcal{O}(\mathcal{|T|}T_{GCN})$. To do the knowledge distillation, the previous model also calls GCN forward passes with time complexity of $\mathcal{O}(\mathcal{|T|}T_{GCN})$. Overall, the time complexity is $\mathcal{O}(2\mathcal{|T|}T_{GCN})$. The space consumption consists of loading the current GCN model and the previous GCN model for the knowledge distillation, with space complexity of $\mathcal{O}(2S_{GCN})$.

\paragraph{Complexity analysis of \ewc.} \ewc calls forward passes of GCN at each time step, and for each parameter at each time step $t \in \mathcal{T}$, the values on the diagonal of the fisher matrix are approximated using the value of each model parameter itself and its gradient. This is an element-wise calculation, which gives the time complexity of $\mathcal{O}(\mathcal{|T|} \times M)$, $M$ indicates the number of parameters in the GCN, which is $L(d^2+d)$. Overall, the time complexity of \ewc is the time complexity of GCN plus the calculation of the fisher matrix, which is $\mathcal{O}(\mathcal{|T|} \times (T_{GCN}+M))$. The space requirement of \ewc consists of the space requirement of GCN, and the matrix stores the gradients of the parameters at each time step of size $\mathcal{O}(\mathcal{|T|} \times M)$. In total, the space complexity of \ewc is $\mathcal{O}(S_{GCN}+\mathcal{|T|}M)$.

\paragraph{Complexity analysis of \mas.} Similarly, \mas using GCN as backbone model, at each time step $t \in \mathcal{T}$, there are forward passes of GCN and the calculation of fisher matrix for parameters, which gives the overall time complexity of $\mathcal{O}(\mathcal{|T|}(T_{GCN}+M))$. The space requirement of \mas consists of the space complexity of GCN and one matrix for the gradient of the parameters of size $\mathcal{O}(M)$, which yields in total $\mathcal{O}(S_{GCN}+M)$.

\paragraph{Complexity analysis of \evolvegcn.} \evolvegcn is a method from the category of Dynamic Graph Neural networks, which trains a new model at each time step with time complexity same as GCN, i.e., $T_{GCN}$ and updates the model parameter using a recurrent neural network using the corresponding parameter from previous time step as input, which has the time complexity of $\mathcal{O}(|\mathcal{T}|nd)$. Overall, \evolvegcn is more expensive than the other Continual Learning methods with time complexity of $\mathcal{O}(T_{GCN}+|\mathcal{T}|nd)$. The space requirement of \evolvegcn consists of the space requirement of the GCN plus the recurrent unit for the reset, update, and new gate with $3(d^2+d)$. In our implementation, we use one recurrent layer for each layer in GCN. Thus the overall space complexity yields $\mathcal{O}(2S_{GCN}+3M)$.

\paragraph{Complexity analysis of \ergnn.} \ergnn is a replay-based method. At each time step $t \in \mathcal{T}$, it retrains the GCN on the current graph and the buffer-nodes-formed graph, and the sampling process goes through the nodes in the new graph, which gives the time complexity of $\mathcal{O}(2|\mathcal{T}|T_{GCN}+n)$. The space requirement of \ergnn consists of the buffer size of $\mathcal{|B|}$ and the space complexity of GCN. In total, it yields the space complexity of $\mathcal{O}(S_{GCN}+\mathcal{|B|})$.

\paragraph{Complexity analysis of \jointtraingcn.} \jointtraingcn has the same time and space complexity as the base model GCN. It uses the whole static graph as input and is only trained once without the task sequence.

The above time and space complexity analyses are summarized in Table \ref{tab:time_space_analysis}.

\begin{table}[!ht]
    \centering
    \caption{The simplified time complexity analysis. $T_{GCN}$ and $S_{GCN}$ corresponds to the time and space requirement of the base GCN model.}
    \begin{tabular}{l|l|l} \hline 
               Model    & Time Complexity &Space Complexity \\ \hline
               \simplegcn &$\mathcal{O}(|\mathcal{T}|T_{GCN})$ &$S_{GCN}$\\ \hline
               \lwf  &$\mathcal{O}(2\mathcal{|T|}T_{GCN})$ &$\mathcal{O}(2S_{GCN})$ \\ \hline
               \ewc  &$\mathcal{O}(\mathcal{|T|} \times (T_{GCN}+M))$ &$\mathcal{O}(S_{GCN}+\mathcal{|T|}M)$\\ \hline
               \mas  &$\mathcal{O}(\mathcal{|T|}(T_{GCN}+M))$ &$\mathcal{O}(S_{GCN}+M)$ \\ \hline
               \evolvegcn &$\mathcal{O}(T_{GCN}+|\mathcal{T}|nd)$ &$\mathcal{O}(2S_{GCN}+3M)$\\ \hline
               \ergnn &$\mathcal{O}(2|\mathcal{T}|T_{GCN}+n)$ &$\mathcal{O}(S_{GCN}+\mathcal{|B|})$ \\ \hline
               \jointtraingcn &$T_{GCN}$ &$S_{GCN}$ \\\hline
    \end{tabular}
    \label{tab:time_space_analysis}
\end{table}

Besides, we also measured the run time of the experiments in this work. The results are summarized in Table \ref{tab:run_time}. Note that the running time of the experiments can be biased due to different splits and how the resources are distributed on the computer. The theoretical analysis may provide more insights into the complexity of time.
\begin{table}[!ht]
    \centering
    \caption{The computation time of the experiments from Section \ref{sec:results} in second. The computation time is measured with one random split for each dataset.}
    \begin{tabular}{c|c|c|c|c|c|c} \hline 
             & \multicolumn{3}{c|}{\taskil} &  \multicolumn{3}{c}{\classil}\\ \hline 
                    &\pcg &\dblp &\yelp &\pcg &\dblp &\yelp\\ \hline
         \simplegcn  &$43.47$ &$801.85$  &$77163.32$  &$88.31$  &$886.79$  &$104869.58$ \\ \hline 
         \lwf  &$49.17$ &$1193.40$  &$142468.01$  &$111.23$  &$732.76$  &$264657.47$ \\ \hline 
         \ewc  &$51.32$ &$939.79$  &$79804.48$  &$135.44$  &$790.73$  &$200649.31$ \\ \hline 
         \mas  &$148.19$ &$1169.50$  &$75255.31$  &$135.93$  &$1230.88$  &$145917.67$ \\ \hline 
         \evolvegcn  &$40.34$ &$427.99$  &$120580.88$  &$94.94$  &$497.80$  &$310540.41$ \\ \hline
         \ergnn  &$47.27$ &$536.20$  &$416090.74$  &$172.05$  &$131.57$  &$167624.39$ \\ \hline
         \jointtraingcn  &$166.82$ &$796.19$  &$80827.72$  &$166.82$  &$796.19$  &$80827.72$ \\ \hline
    \end{tabular}
    
    \label{tab:run_time}
\end{table}


\subsection{Visualization of the Performance Matrix}
\label{sec:visualization}
In this section, we provide the visualization of the performance matrix using the heatmap on the three multi-label datasets. In the heatmap, each cell corresponds to a unique entry in $\mathbf{M}$, and its position in the heatmap mirrors its position in the matrix. We use the gradient of the color to indicate the performance. The color intensity indicates the magnitude of the value. 

In the Figure \ref{fig:pcg_taskil_heatmap}, \ref{fig:dblp_taskil_heatmap}, and \ref{fig:yelp_taskil_heatmap}, we show the heatmaps correspond to the performance matrices of the baseline models in the \taskil on datasets \pcg, \dblp, and \yelp, respectively, while in the Figure \ref{fig:pcg_classil_heatmap}, \ref{fig:dblp_classil_heatmap}, and \ref{fig:yelp_classil_heatmap}, we show the heatmaps correspond to the performance matrices of the baseline models in the \classil on datasets \pcg, \dblp, and \yelp, respectively.

\begin{figure*}[!ht]
    \centering
    \begin{subfigure}{0.3\linewidth}
        \includegraphics[width=\linewidth]{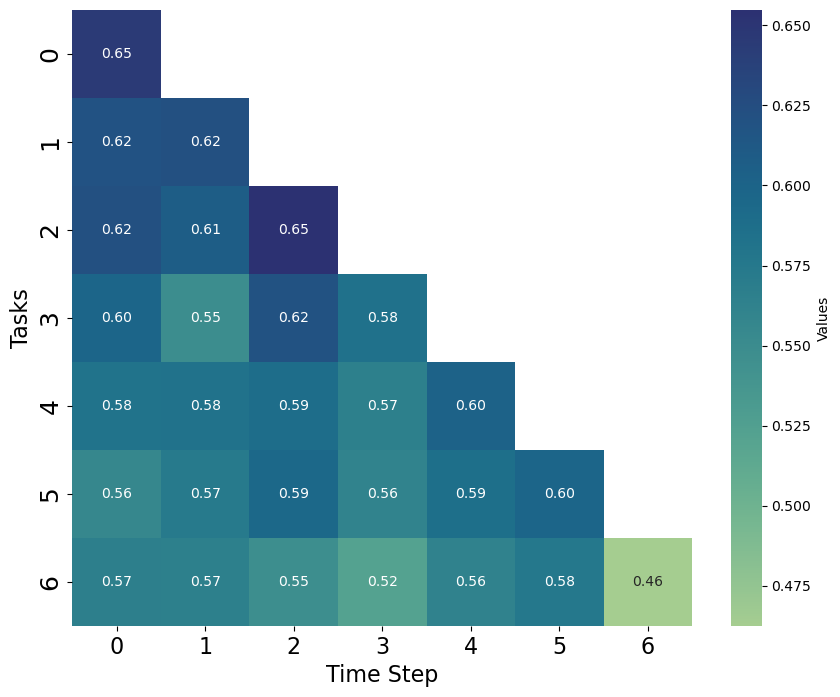}
        \caption{Simple GCN}
    \end{subfigure}
    \hspace{0.02\linewidth}
    \begin{subfigure}{0.3\linewidth}
        \includegraphics[width=\linewidth]{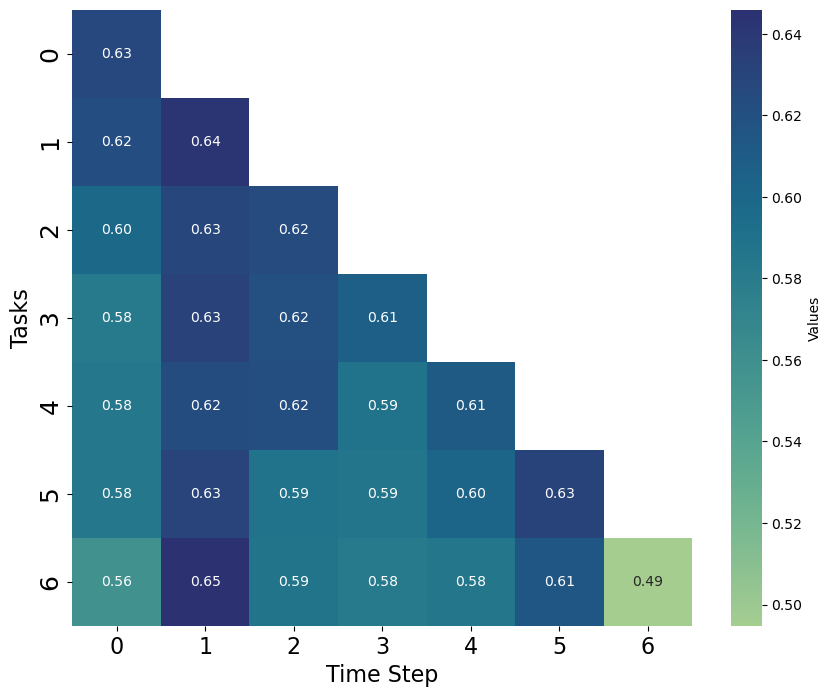}
        \caption{LwF}
    \end{subfigure}
    \hspace{0.02\linewidth}
    \begin{subfigure}{0.3\linewidth}
        \includegraphics[width=\linewidth]{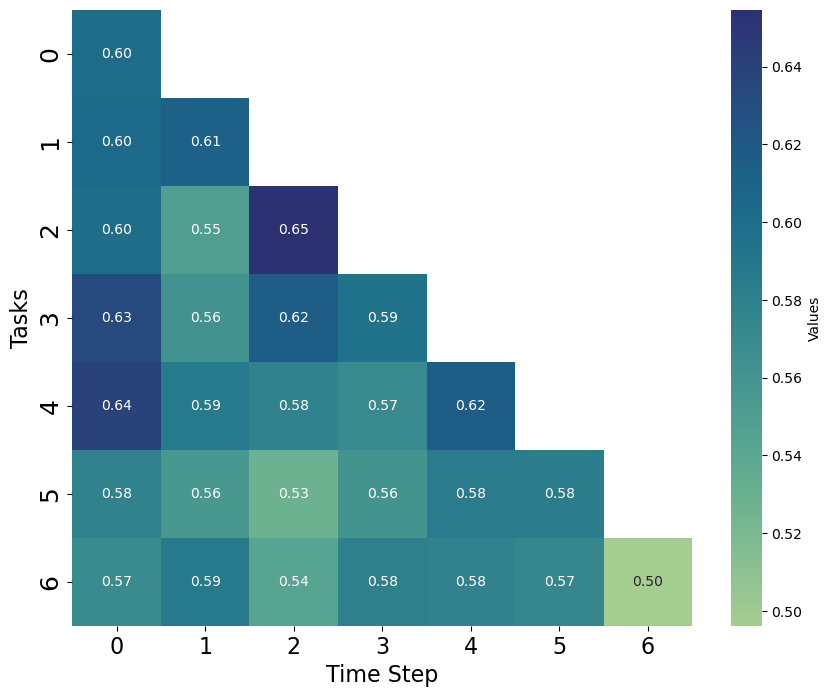}
        \caption{EWC}
    \end{subfigure}

    \begin{subfigure}{0.3\linewidth}
        \includegraphics[width=\linewidth]{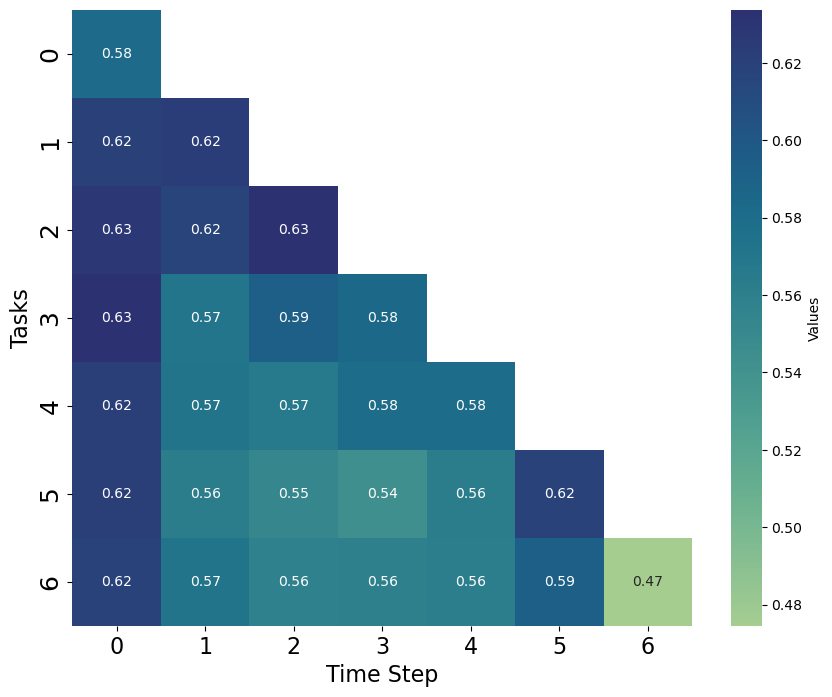}
        \caption{MAS}
    \end{subfigure}
    \hspace{0.02\linewidth}
    \begin{subfigure}{0.3\linewidth}
        \includegraphics[width=\linewidth]{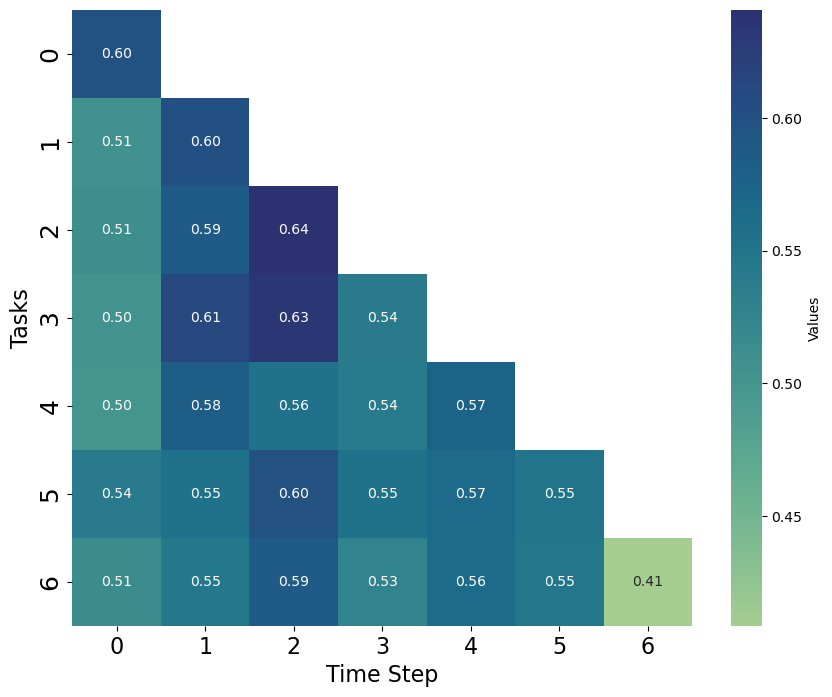}
        \caption{EvolveGCN}
    \end{subfigure}
    \hspace{0.02\linewidth}
    \begin{subfigure}{0.3\linewidth}
        \includegraphics[width=\linewidth]{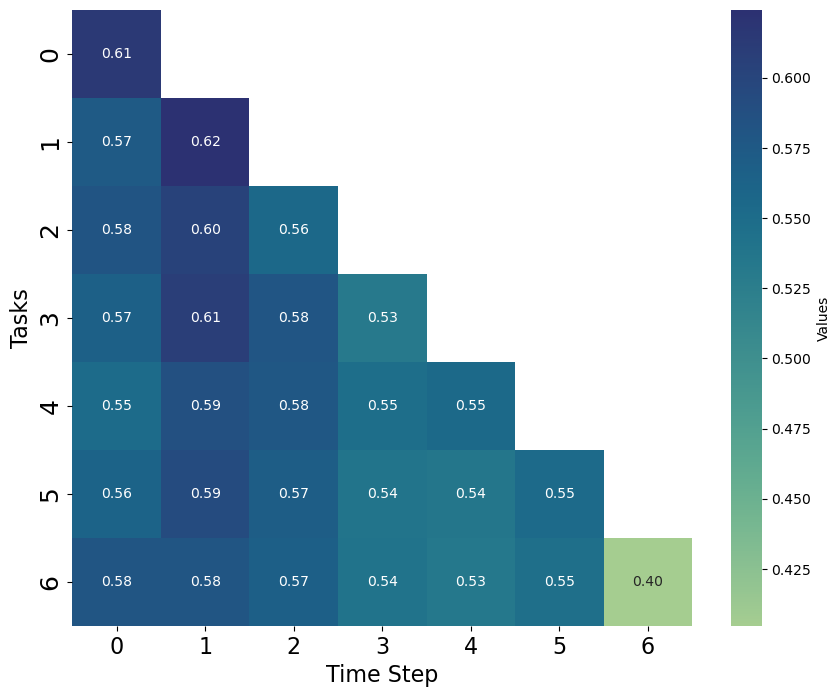}
        \caption{ERGNN}
    \end{subfigure}

    \caption{Visualization of the performance matrix of the methods in TaskIL setting on dataset PCG}
    \label{fig:pcg_taskil_heatmap}
\end{figure*}

\begin{figure*}[!ht]
    \centering
    \begin{subfigure}{0.3\linewidth}
        \includegraphics[width=\linewidth]{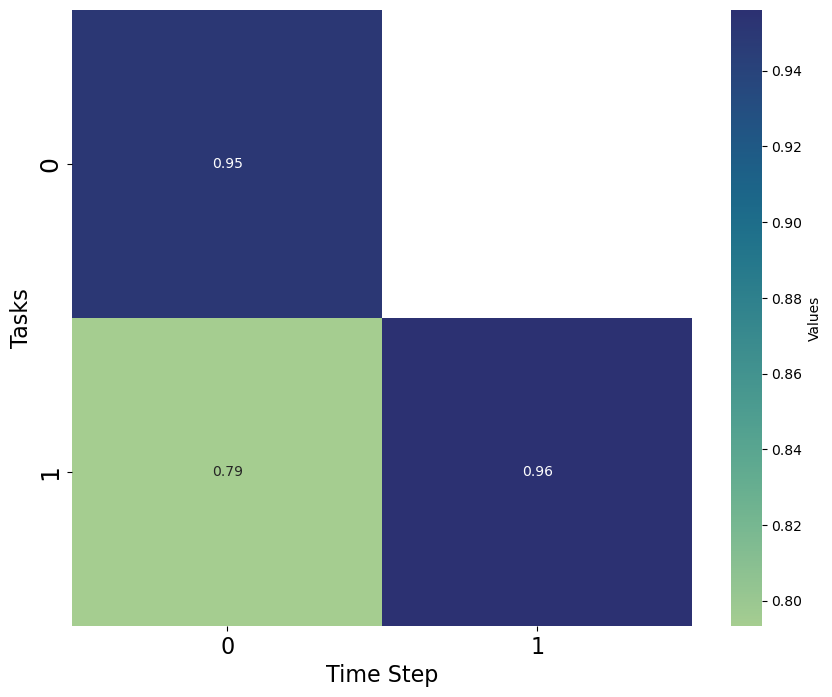}
        \caption{Simple GCN}
    \end{subfigure}
    \hspace{0.02\linewidth}
    \begin{subfigure}{0.3\linewidth}
        \includegraphics[width=\linewidth]{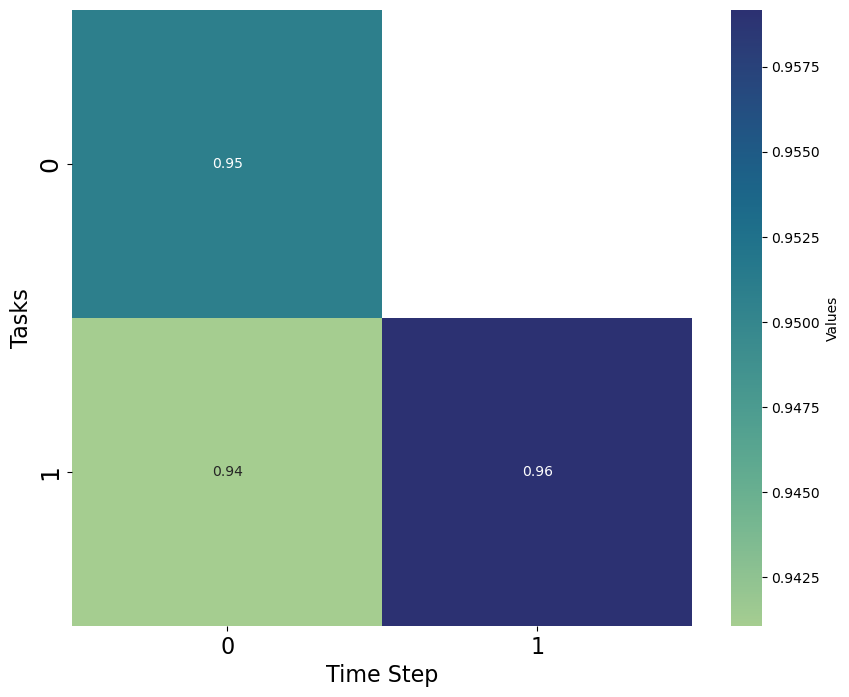}
        \caption{LwF}
    \end{subfigure}
    \hspace{0.02\linewidth}
    \begin{subfigure}{0.3\linewidth}
        \includegraphics[width=\linewidth]{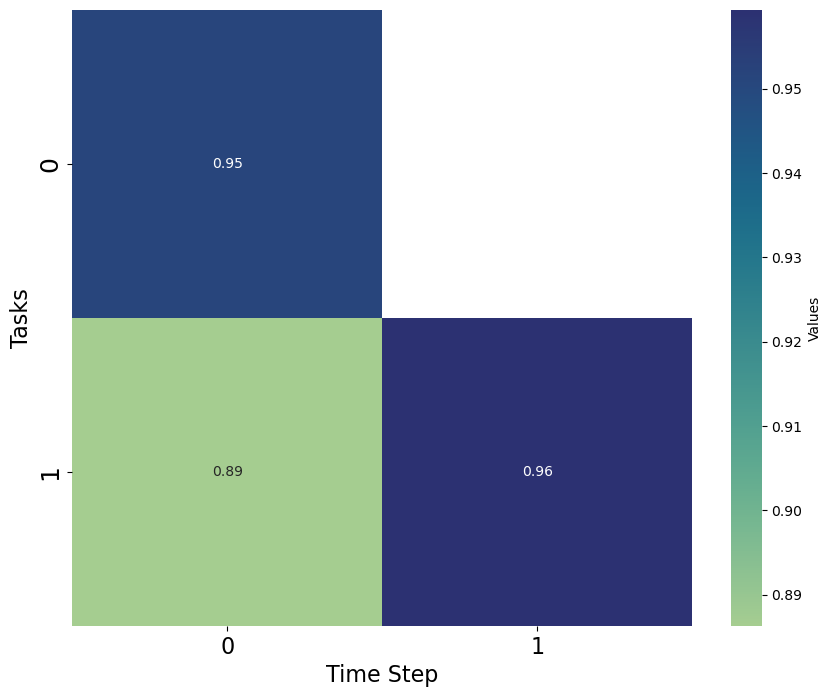}
        \caption{EWC}
    \end{subfigure}

    \begin{subfigure}{0.3\linewidth}
        \includegraphics[width=\linewidth]{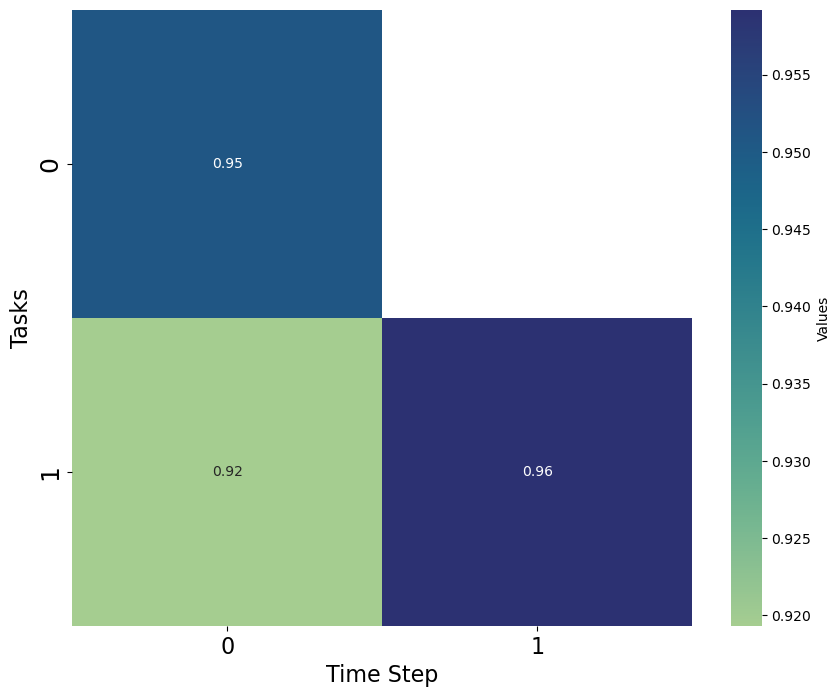}
        \caption{MAS}
    \end{subfigure}
    \hspace{0.02\linewidth}
    \begin{subfigure}{0.3\linewidth}
        \includegraphics[width=\linewidth]{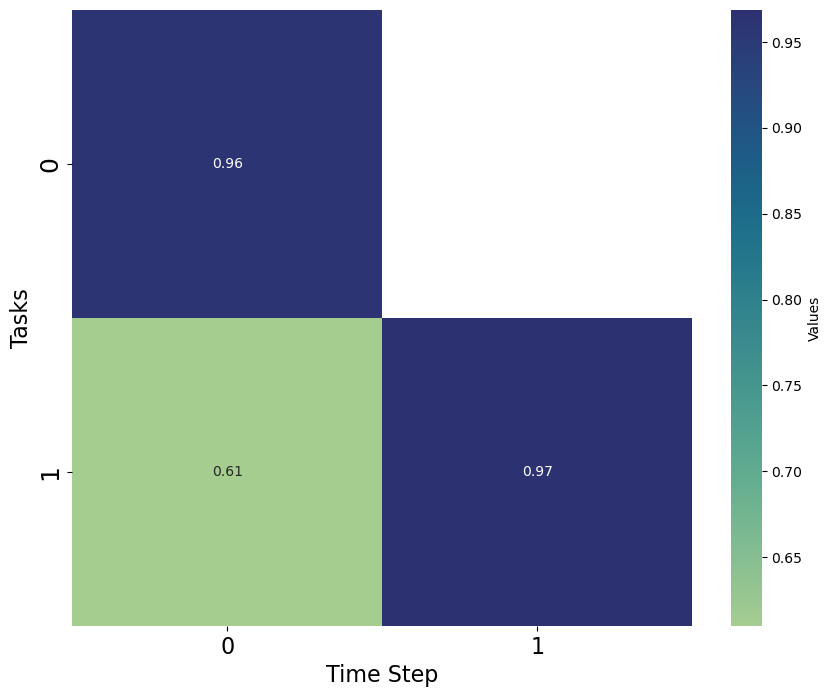}
        \caption{EvolveGCN}
    \end{subfigure}
    \hspace{0.02\linewidth}
    \begin{subfigure}{0.3\linewidth}
        \includegraphics[width=\linewidth]{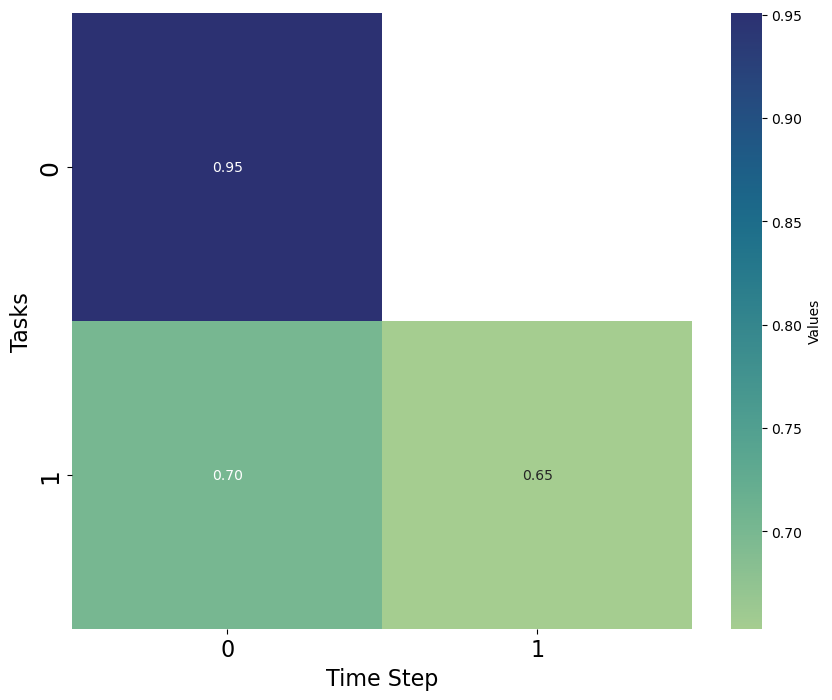}
        \caption{ERGNN}
    \end{subfigure}

    \caption{Visualization of the performance matrix of the methods in TaskIL setting on dataset DBLP}
    \label{fig:dblp_taskil_heatmap}
\end{figure*}

\begin{figure*}[!ht]
    \centering
    \begin{subfigure}{0.3\linewidth}
        \includegraphics[width=\linewidth]{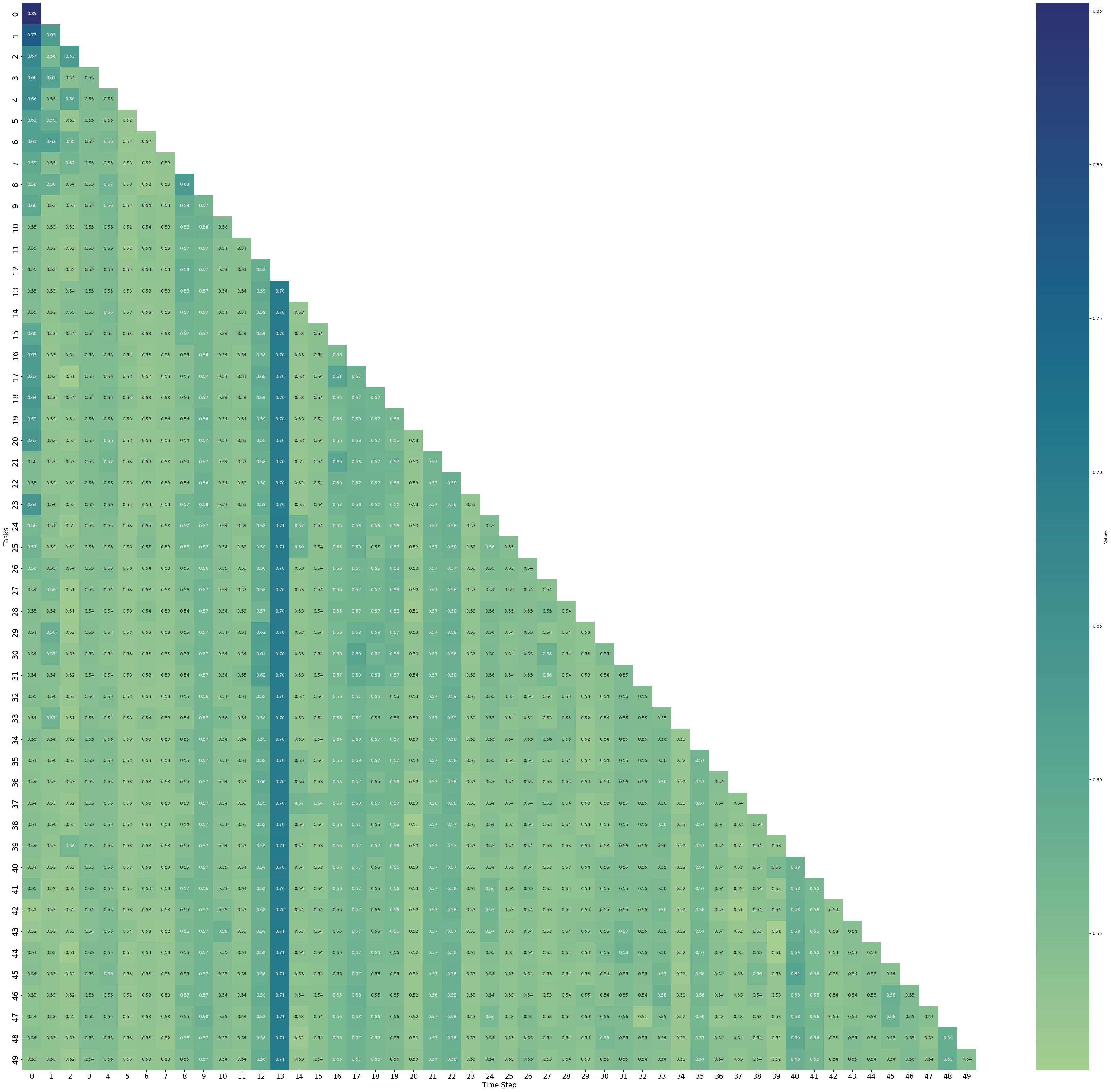}
        \caption{Simple GCN}
    \end{subfigure}
    \hspace{0.02\linewidth}
    \begin{subfigure}{0.3\linewidth}
        \includegraphics[width=\linewidth]{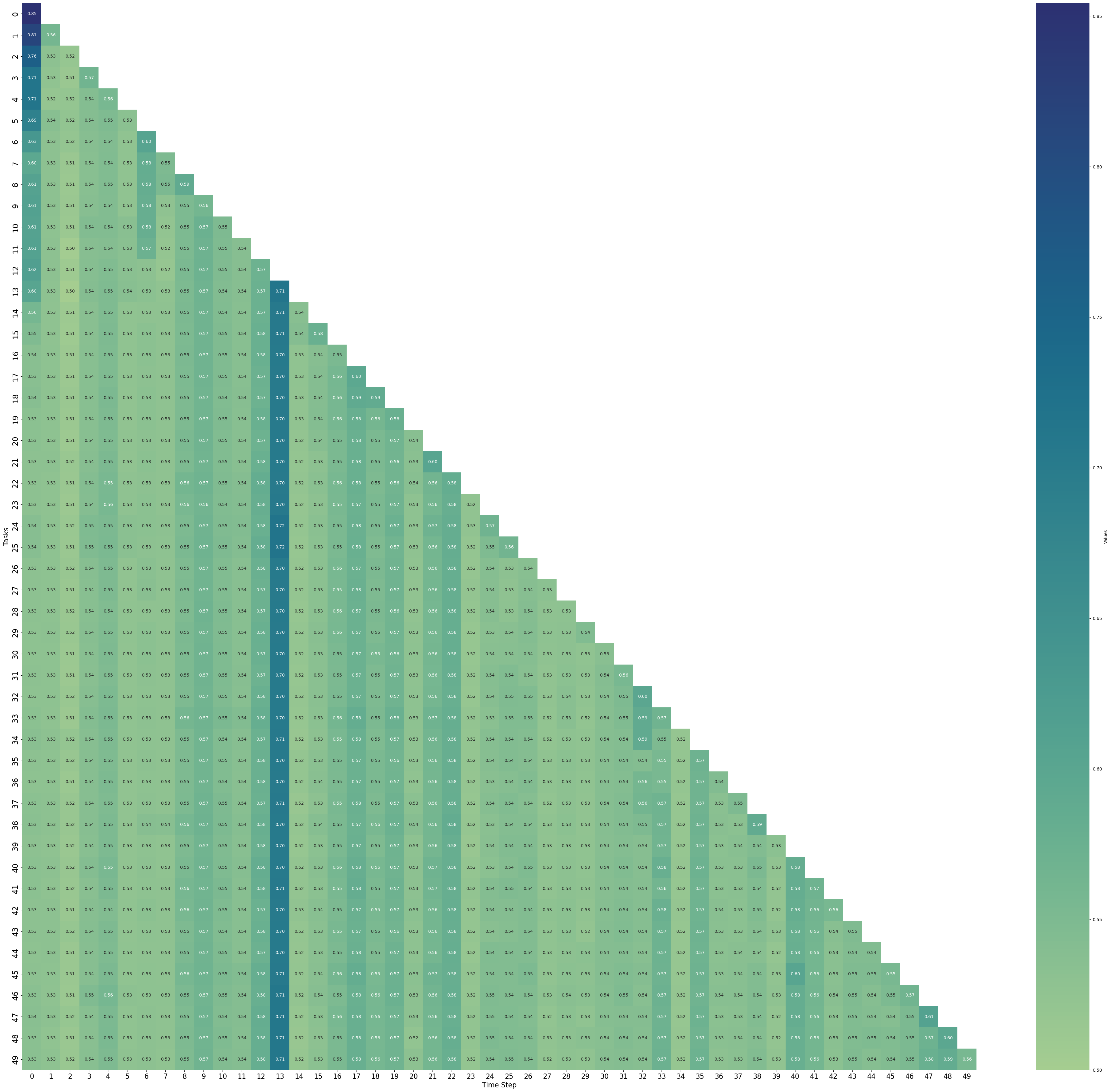}
        \caption{LwF}
    \end{subfigure}
    \hspace{0.02\linewidth}
    \begin{subfigure}{0.3\linewidth}
        \includegraphics[width=\linewidth]{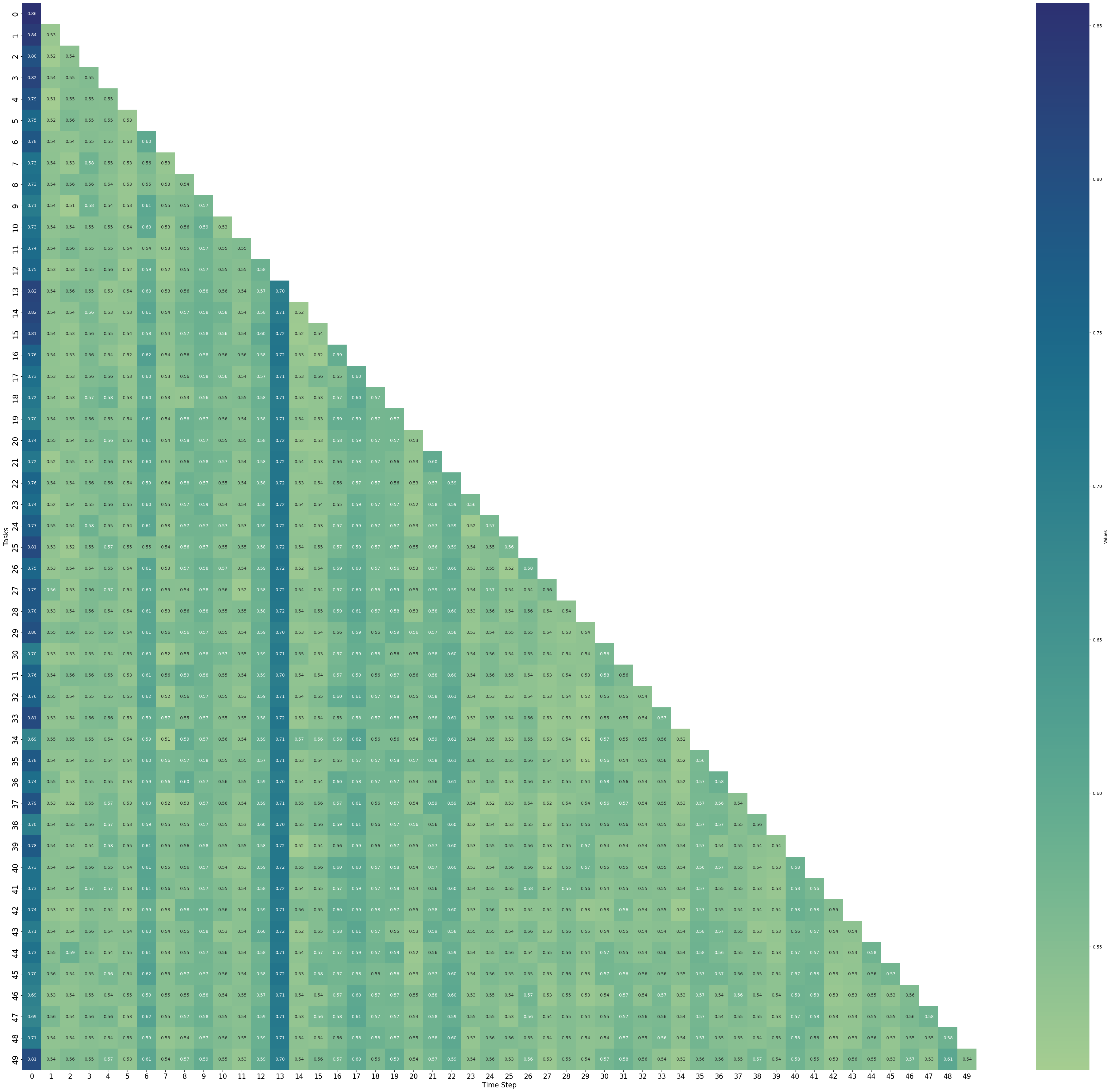}
        \caption{EWC}
    \end{subfigure}

    \begin{subfigure}{0.3\linewidth}
        \includegraphics[width=\linewidth]{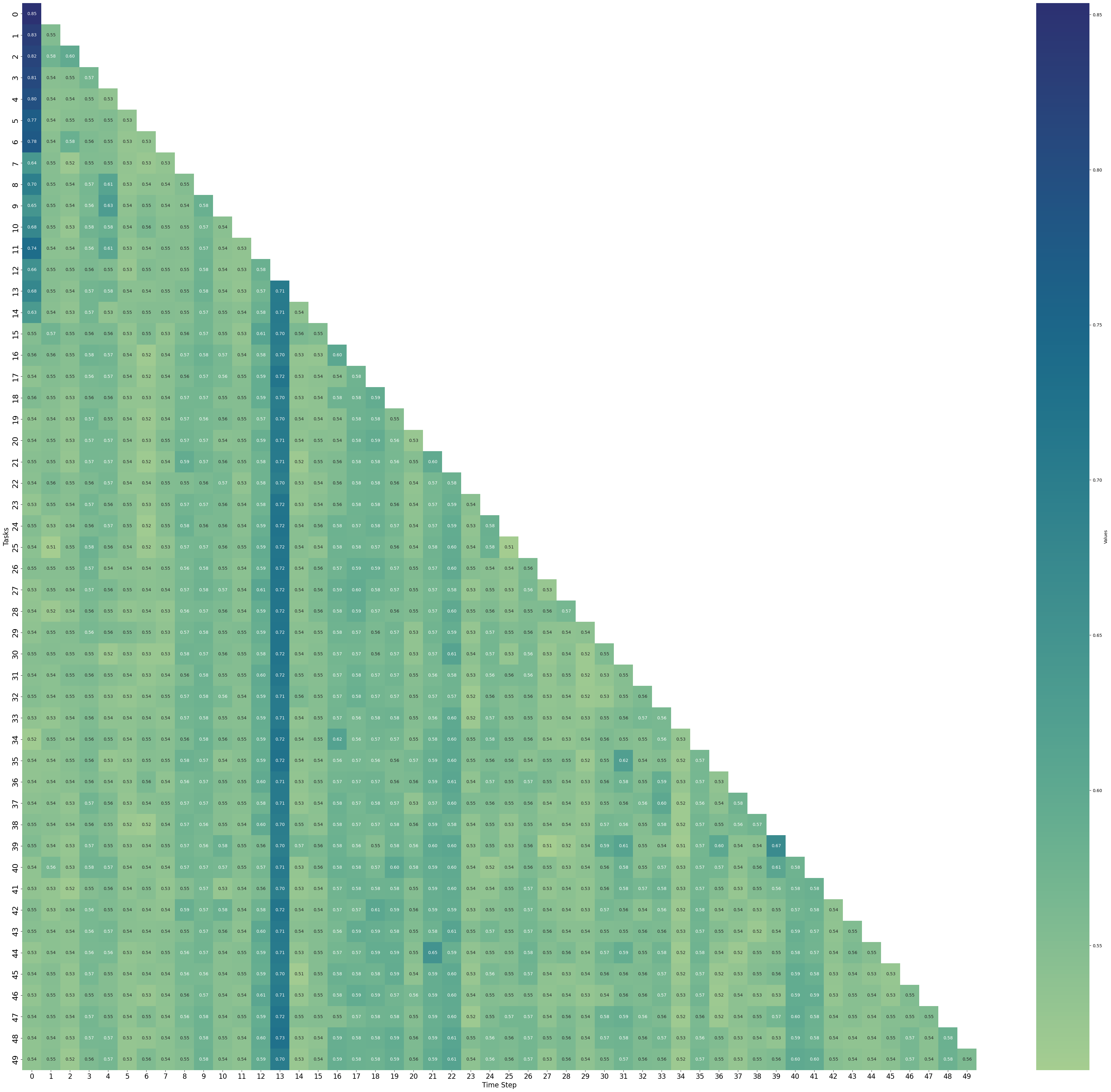}
        \caption{MAS}
    \end{subfigure}
    \hspace{0.02\linewidth}
    \begin{subfigure}{0.3\linewidth}
        \includegraphics[width=\linewidth]{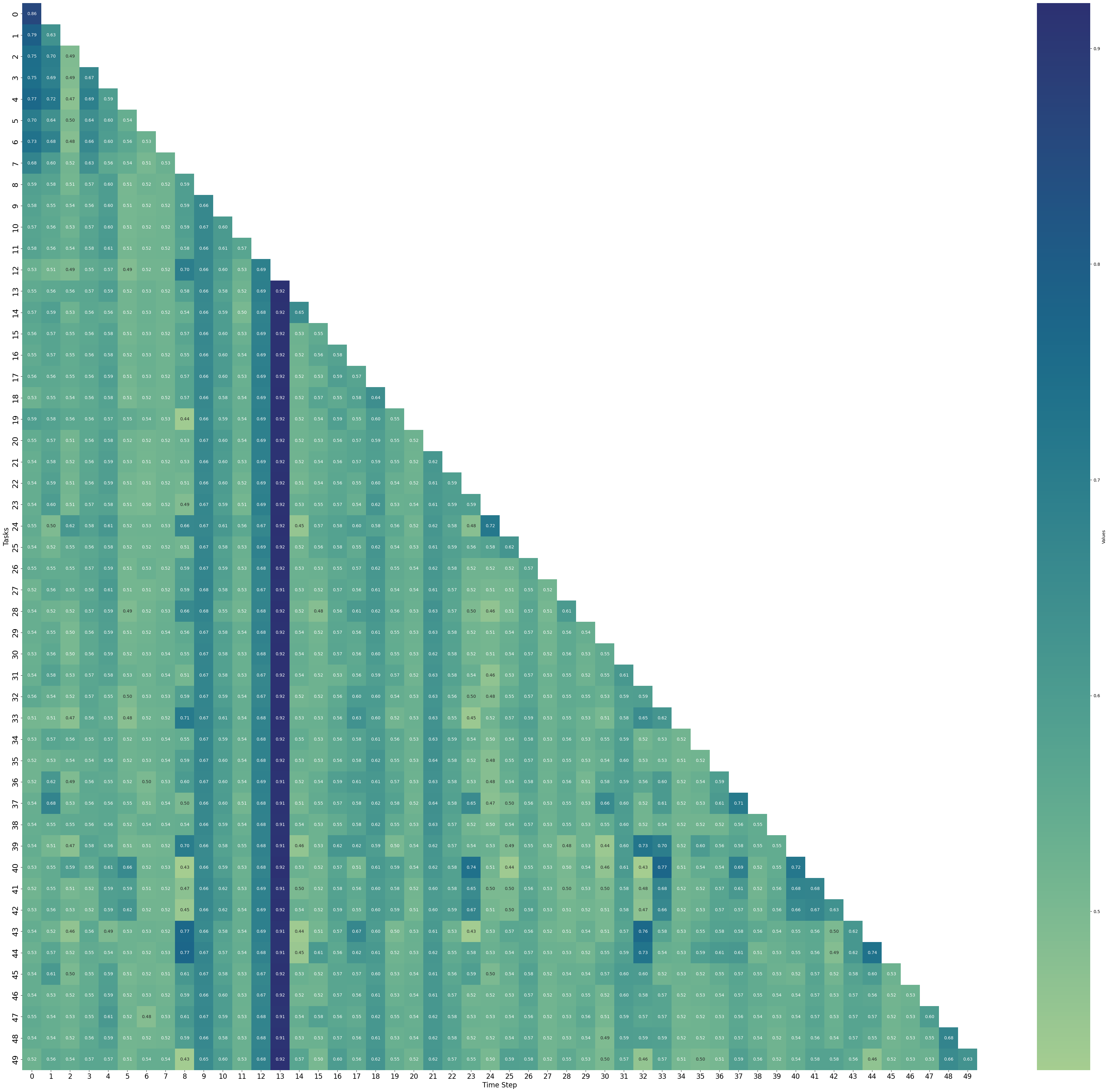}
        \caption{EvolveGCN}
    \end{subfigure}
    \hspace{0.02\linewidth}
    \begin{subfigure}{0.3\linewidth}
        \includegraphics[width=\linewidth]{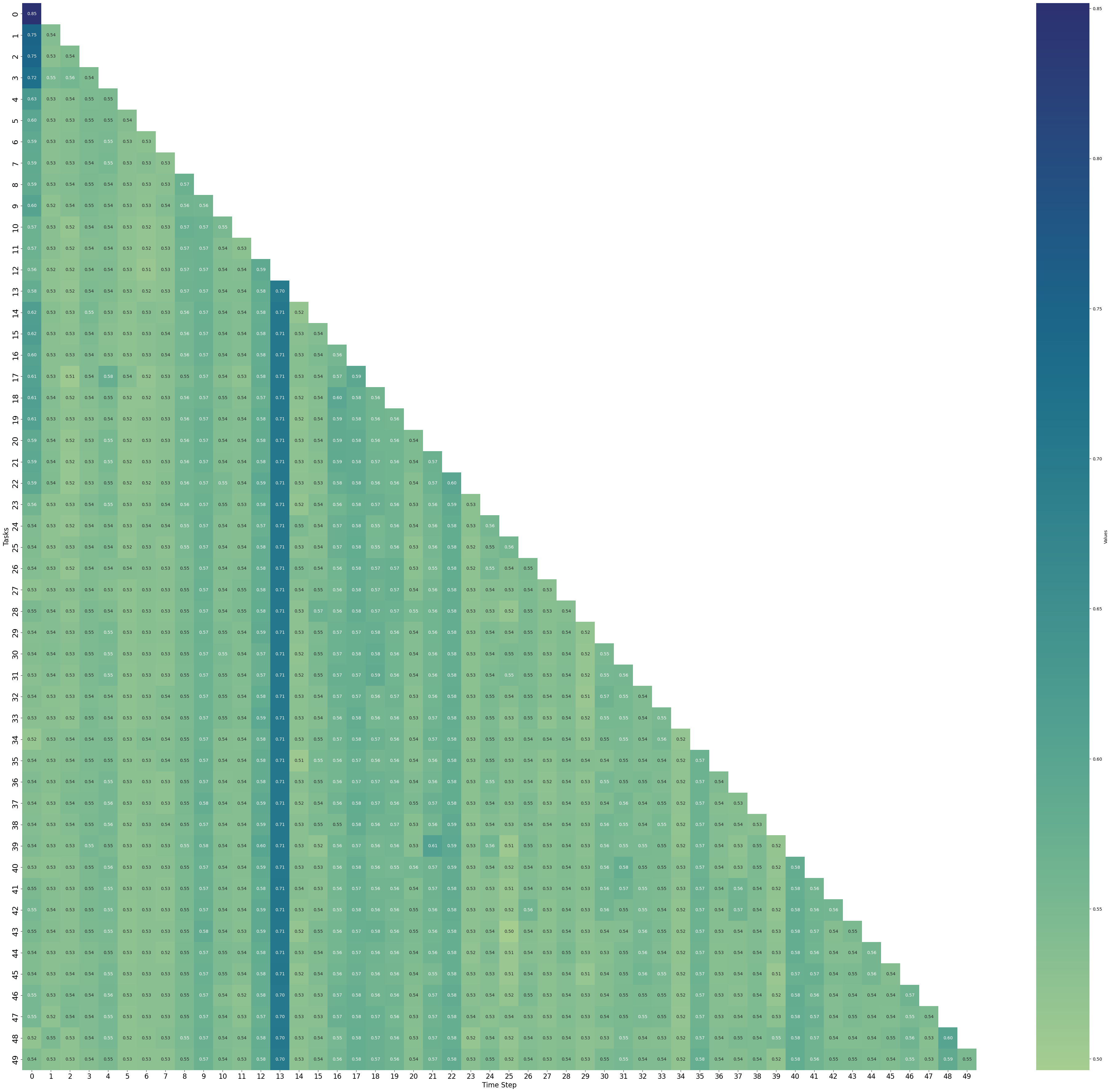}
        \caption{ERGNN}
    \end{subfigure}

    \caption{Visualization of the performance matrix of the methods in TaskIL setting on dataset Yelp}
    \label{fig:yelp_taskil_heatmap}
\end{figure*}

\begin{figure*}[!ht]
    \centering
    \begin{subfigure}{0.3\linewidth}
        \includegraphics[width=\linewidth]{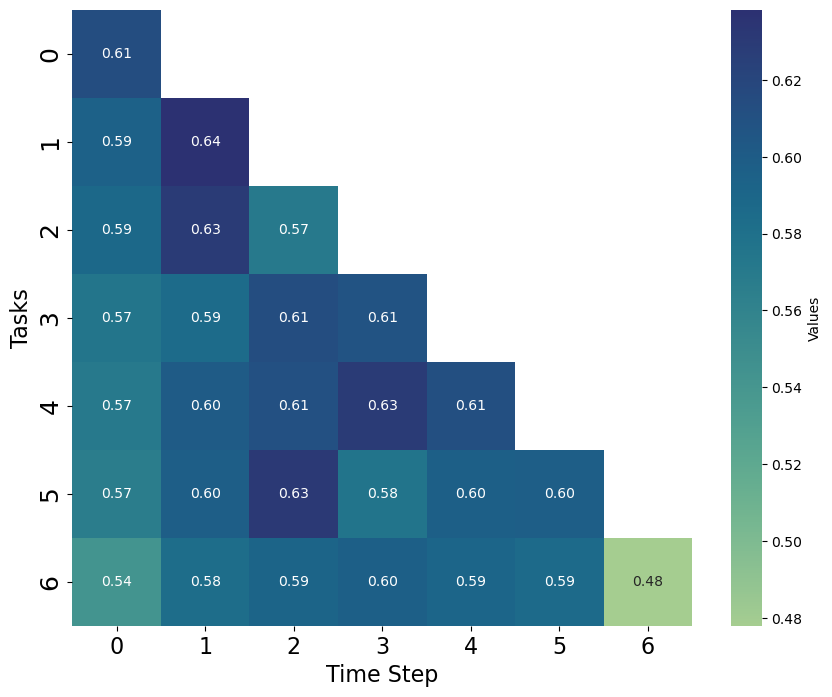}
        \caption{Simple GCN}
    \end{subfigure}
    \hspace{0.02\linewidth}
    \begin{subfigure}{0.3\linewidth}
        \includegraphics[width=\linewidth]{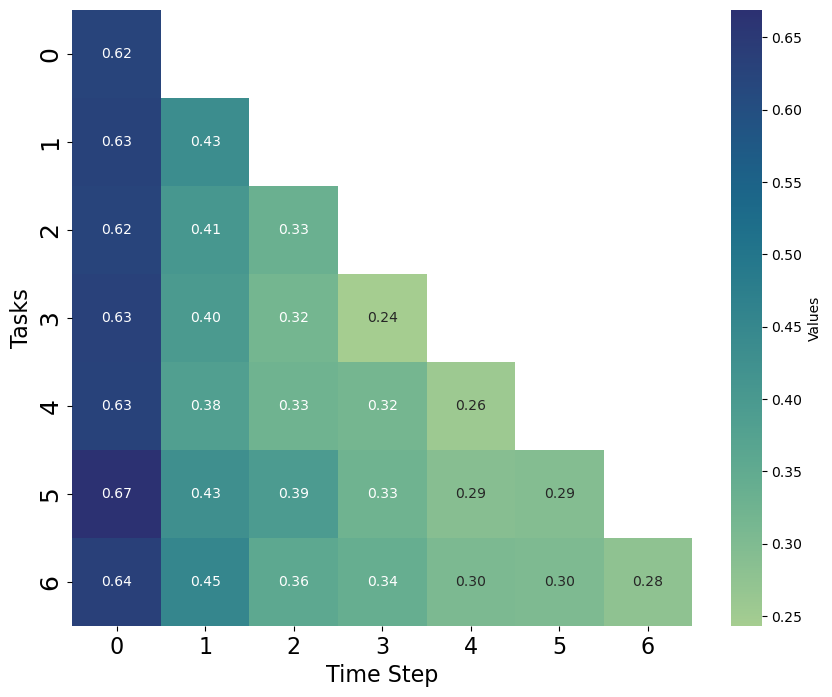}
        \caption{LwF}
    \end{subfigure}
    \hspace{0.02\linewidth}
    \begin{subfigure}{0.3\linewidth}
        \includegraphics[width=\linewidth]{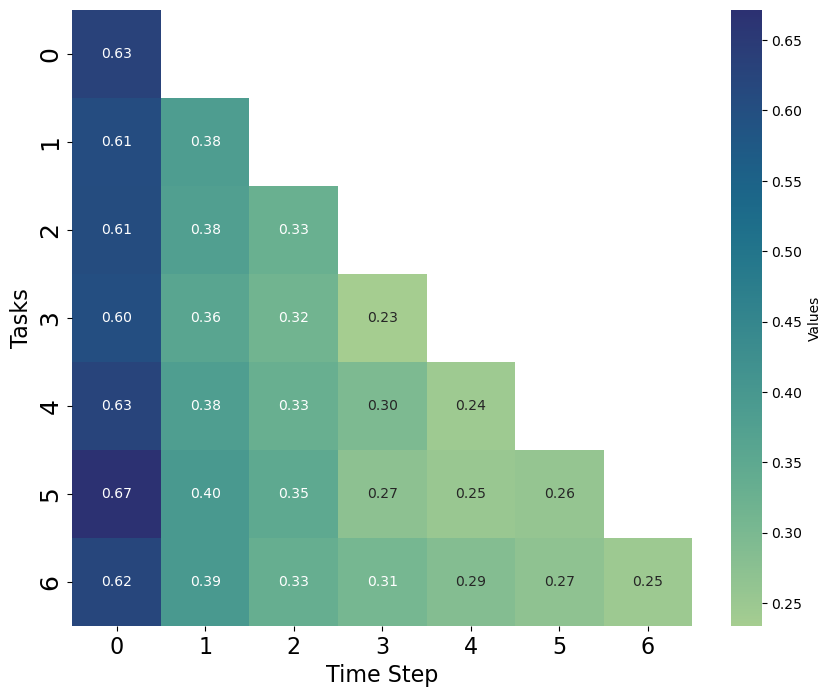}
        \caption{EWC}
    \end{subfigure}

    \begin{subfigure}{0.3\linewidth}
        \includegraphics[width=\linewidth]{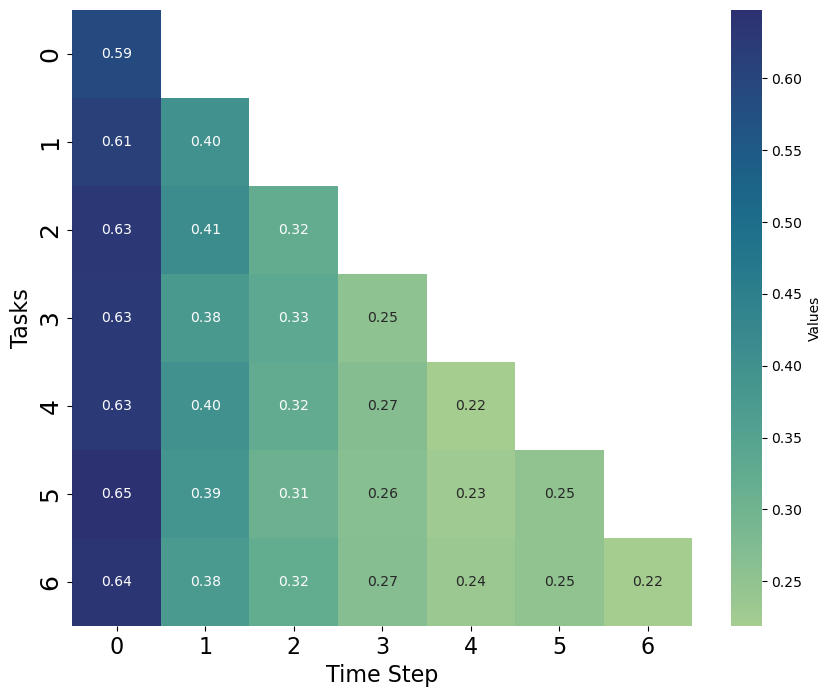}
        \caption{MAS}
    \end{subfigure}
    \hspace{0.02\linewidth}
    \begin{subfigure}{0.3\linewidth}
        \includegraphics[width=\linewidth]{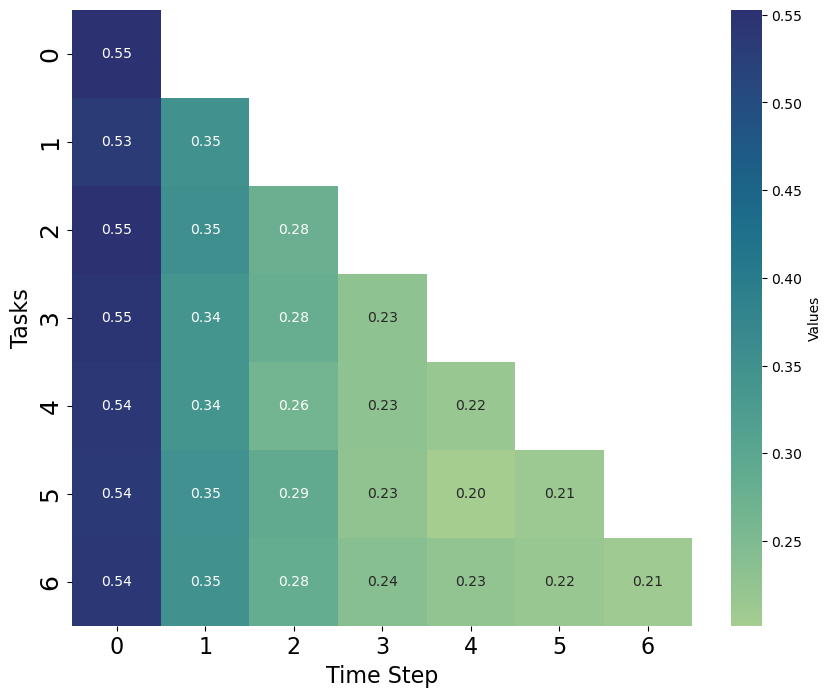}
        \caption{EvolveGCN}
    \end{subfigure}
    \hspace{0.02\linewidth}
    \begin{subfigure}{0.3\linewidth}
        \includegraphics[width=\linewidth]{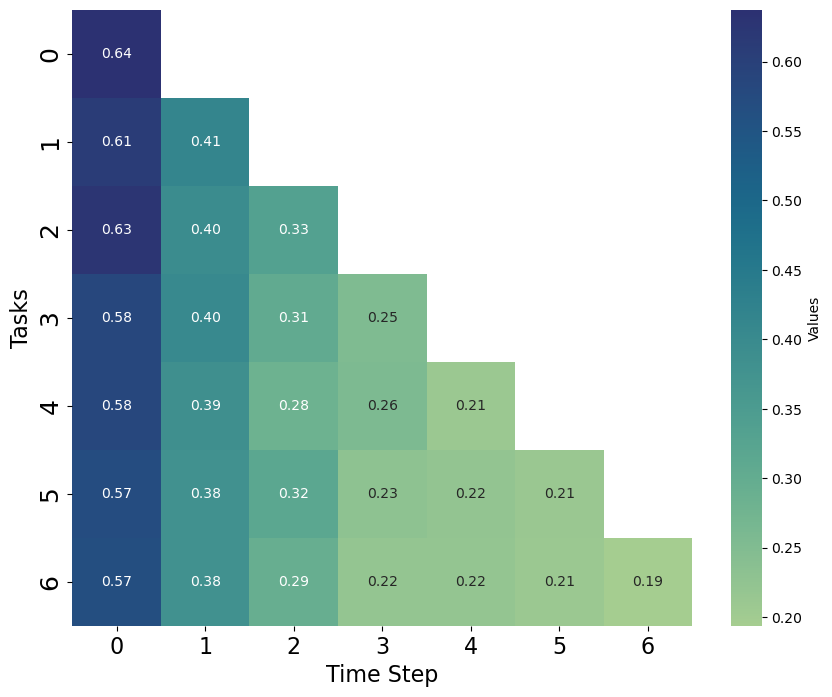}
        \caption{ERGNN}
    \end{subfigure}

    \caption{Visualization of the performance matrix of the methods in  \classil on dataset PCG}
    \label{fig:pcg_classil_heatmap}
\end{figure*}

\begin{figure*}[!ht]
    \centering
    \begin{subfigure}{0.3\linewidth}
        \includegraphics[width=\linewidth]{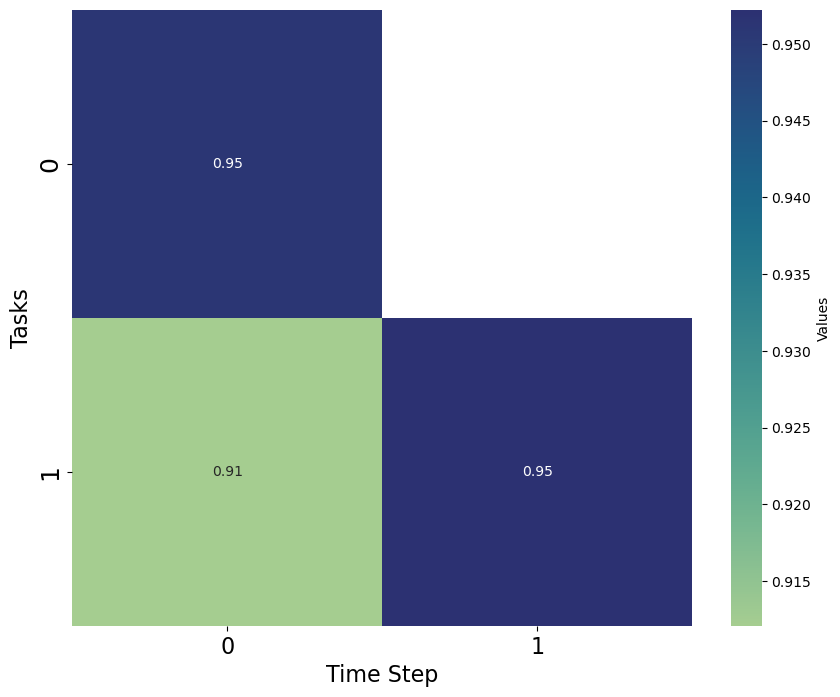}
        \caption{Simple GCN}
    \end{subfigure}
    \hspace{0.02\linewidth}
    \begin{subfigure}{0.3\linewidth}
        \includegraphics[width=\linewidth]{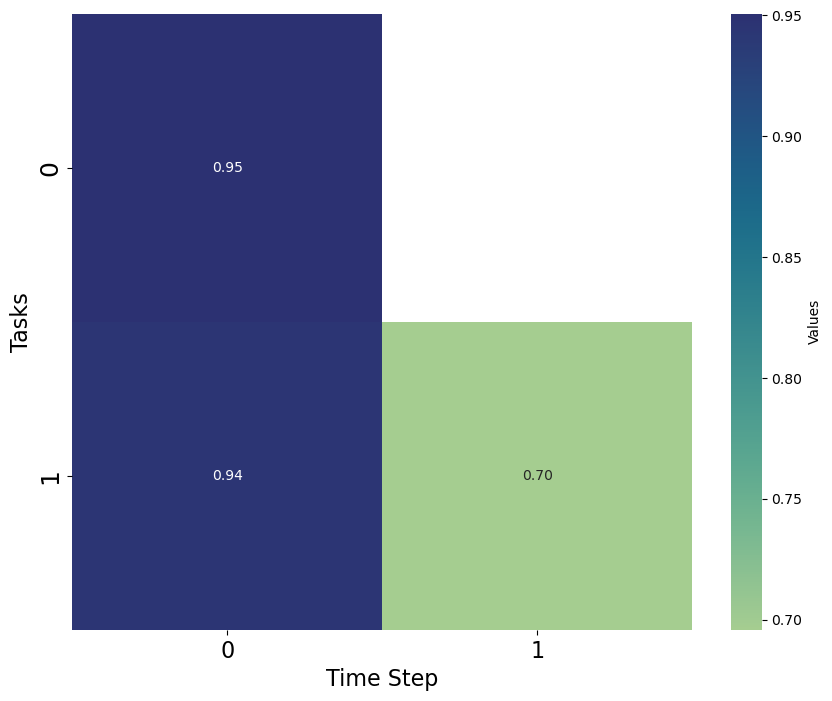}
        \caption{LwF}
    \end{subfigure}
    \hspace{0.02\linewidth}
    \begin{subfigure}{0.3\linewidth}
        \includegraphics[width=\linewidth]{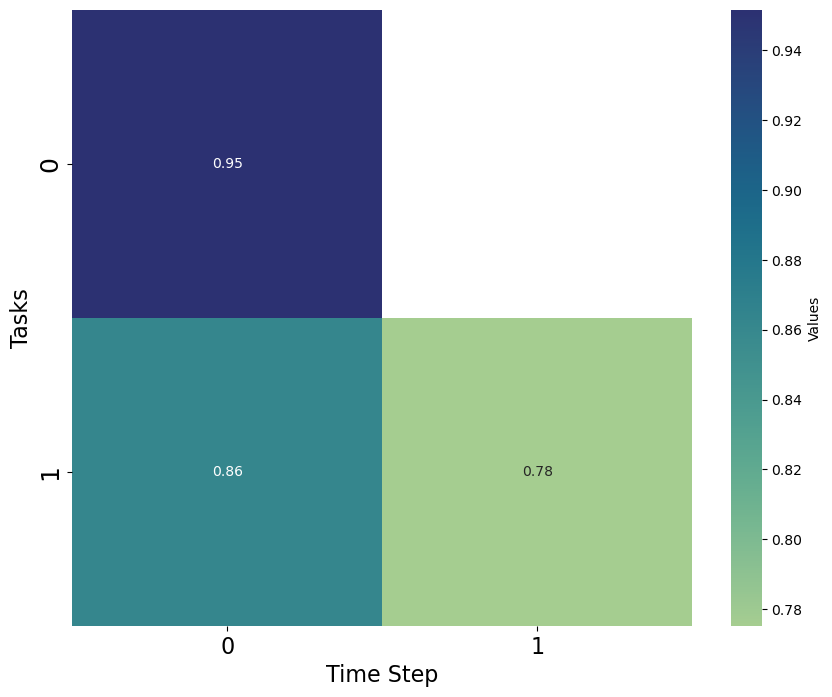}
        \caption{EWC}
    \end{subfigure}

    \begin{subfigure}{0.3\linewidth}
        \includegraphics[width=\linewidth]{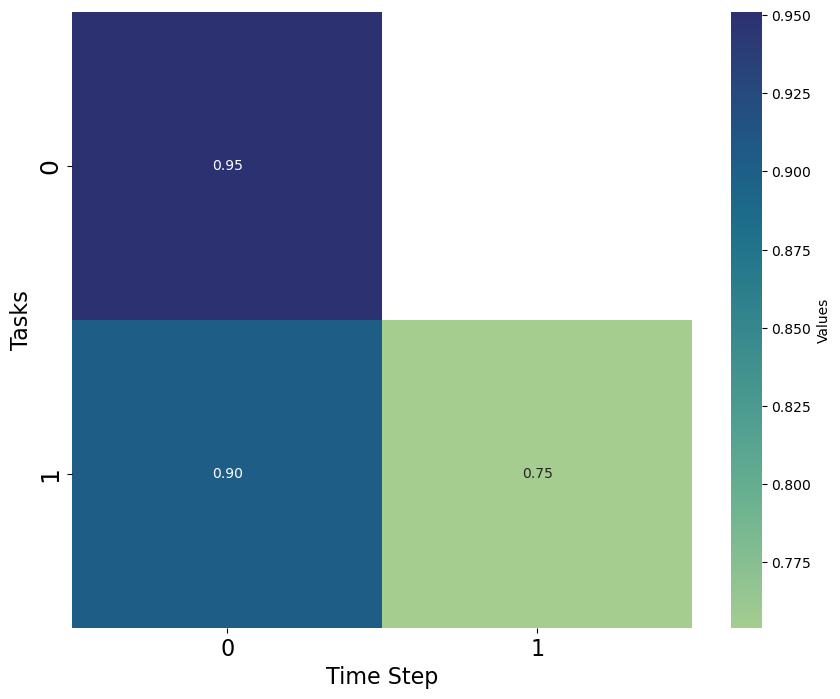}
        \caption{MAS}
    \end{subfigure}
    \hspace{0.02\linewidth}
    \begin{subfigure}{0.3\linewidth}
        \includegraphics[width=\linewidth]{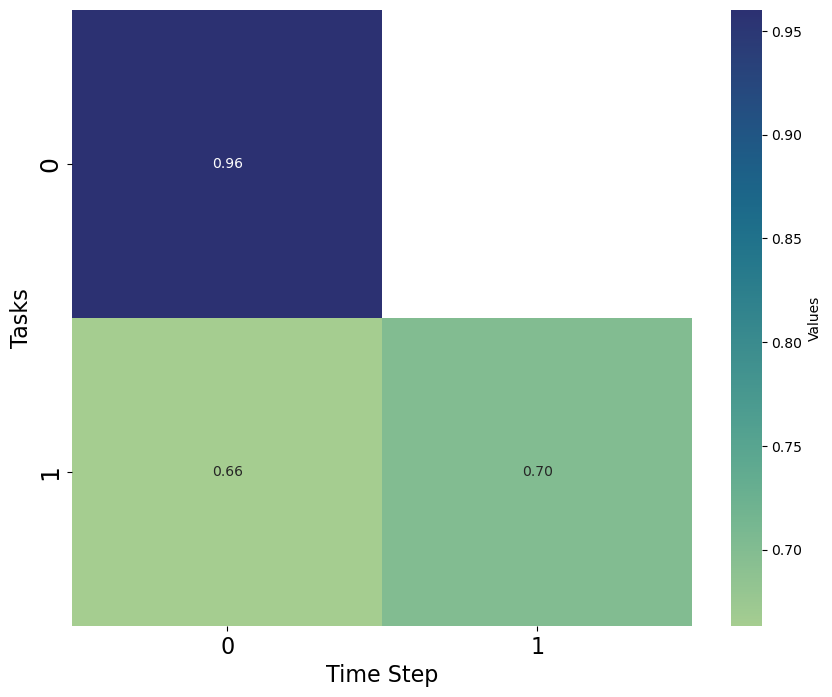}
        \caption{EvolveGCN}
    \end{subfigure}
    \hspace{0.02\linewidth}
    \begin{subfigure}{0.3\linewidth}
        \includegraphics[width=\linewidth]{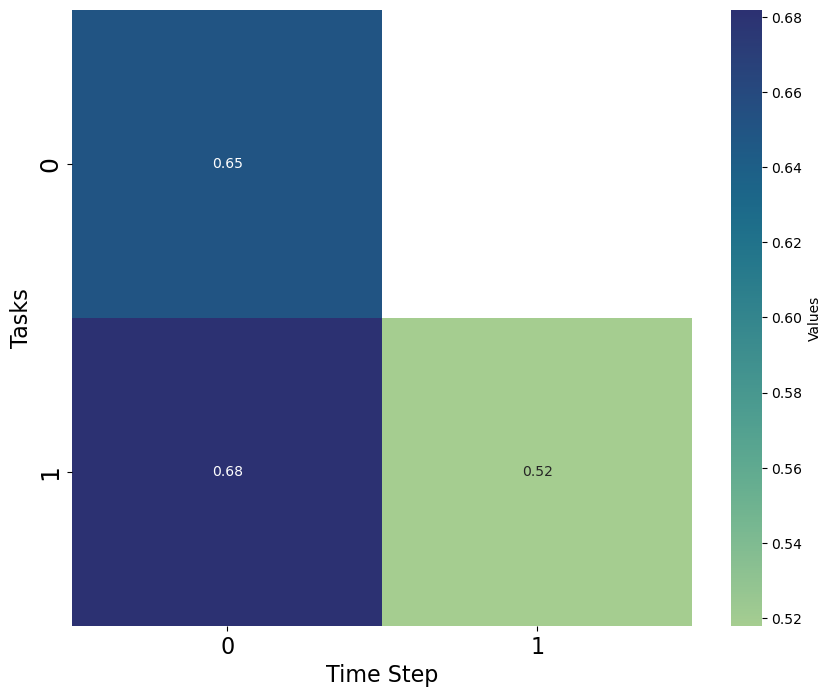}
        \caption{ERGNN}
    \end{subfigure}

    \caption{Visualization of the performance matrix of the methods in \classil on dataset DBLP}
    \label{fig:dblp_classil_heatmap}
\end{figure*}

\begin{figure*}[!ht]
    \centering
    \begin{subfigure}{0.3\linewidth}
        \includegraphics[width=\linewidth]{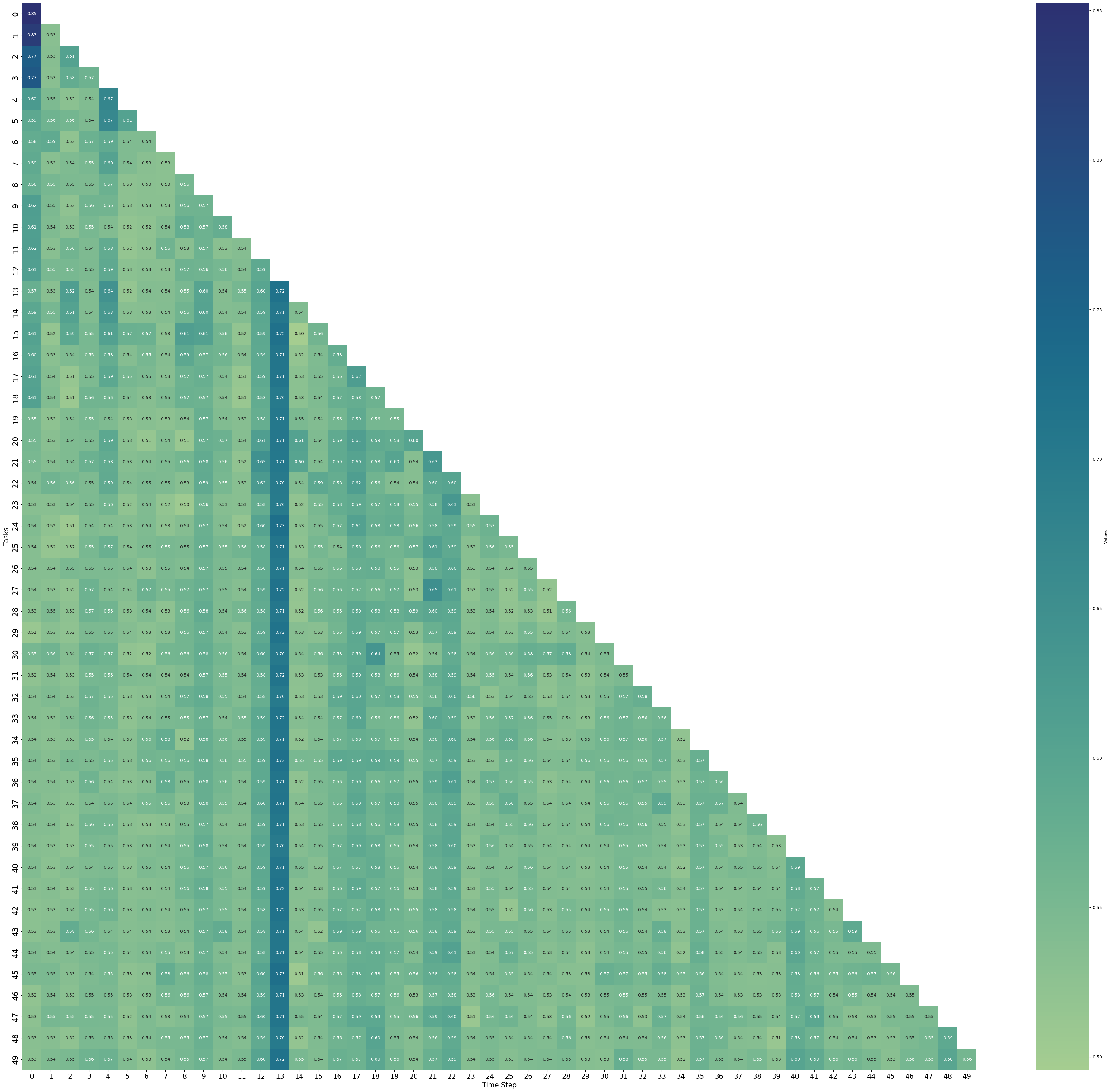}
        \caption{Simple GCN}
    \end{subfigure}
    \hspace{0.02\linewidth}
    \begin{subfigure}{0.3\linewidth}
        \includegraphics[width=\linewidth]{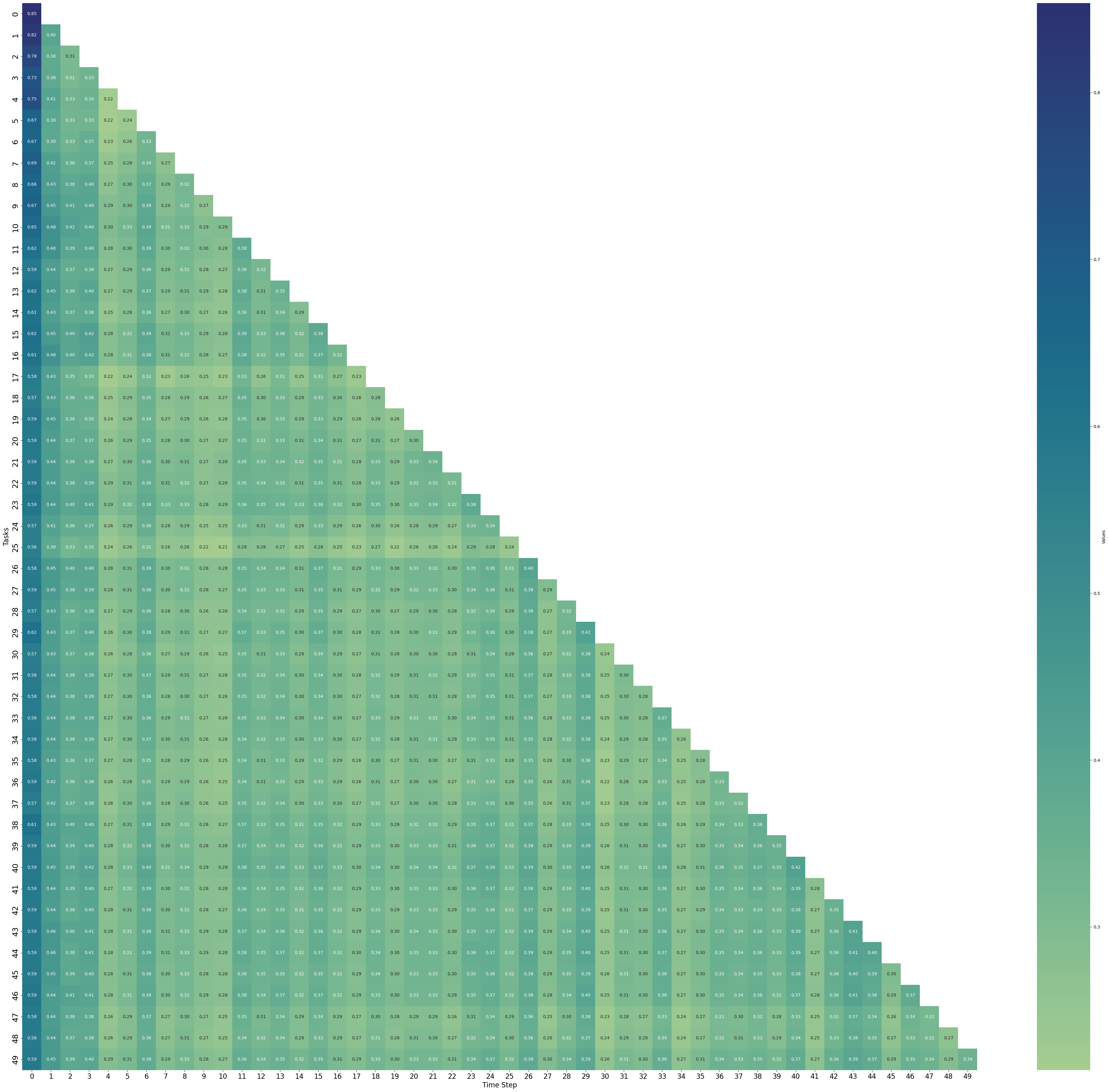}
        \caption{LwF}
    \end{subfigure}
    \hspace{0.02\linewidth}
    \begin{subfigure}{0.3\linewidth}
        \includegraphics[width=\linewidth]{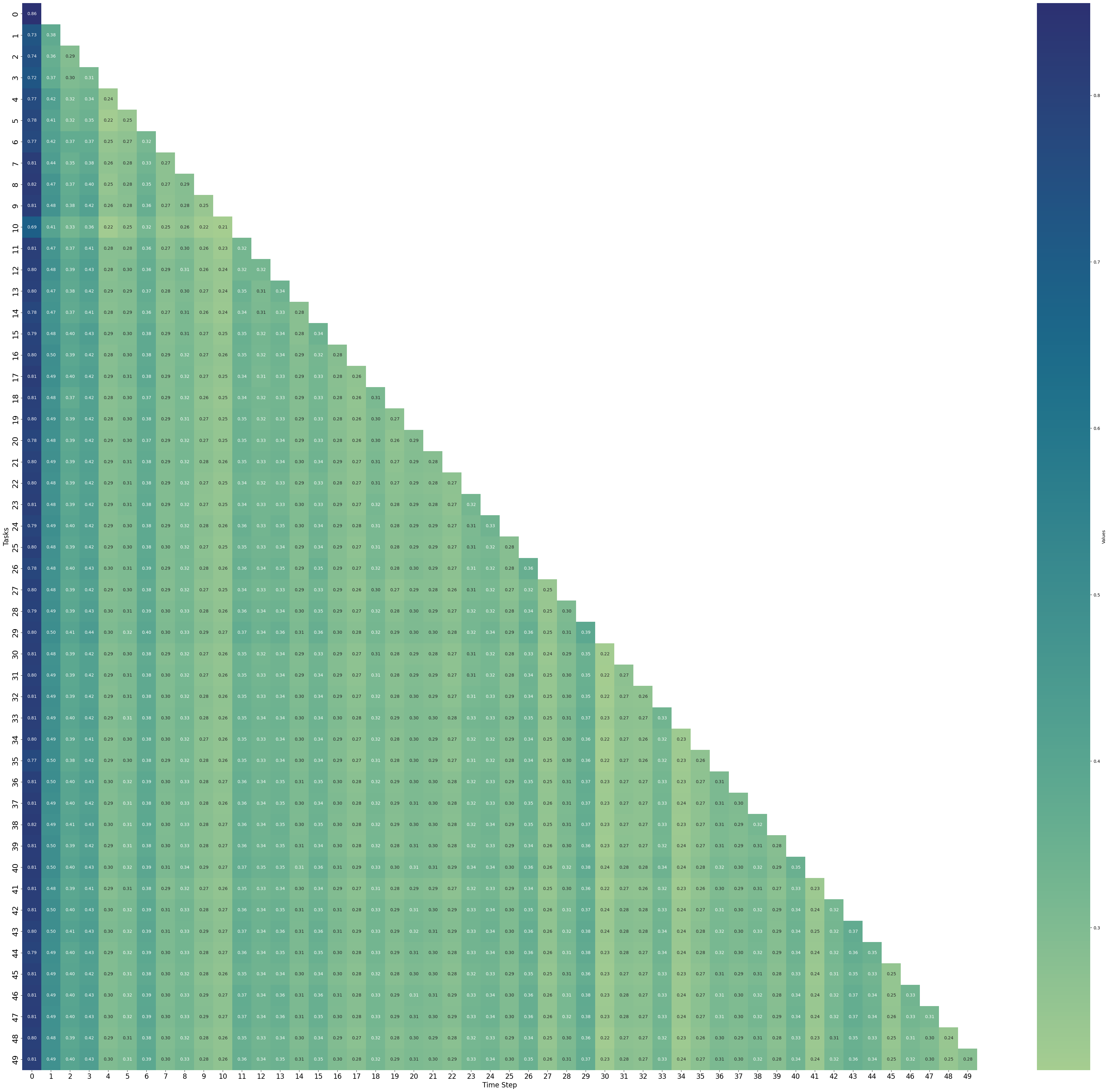}
        \caption{EWC}
    \end{subfigure}

    \begin{subfigure}{0.3\linewidth}
        \includegraphics[width=\linewidth]{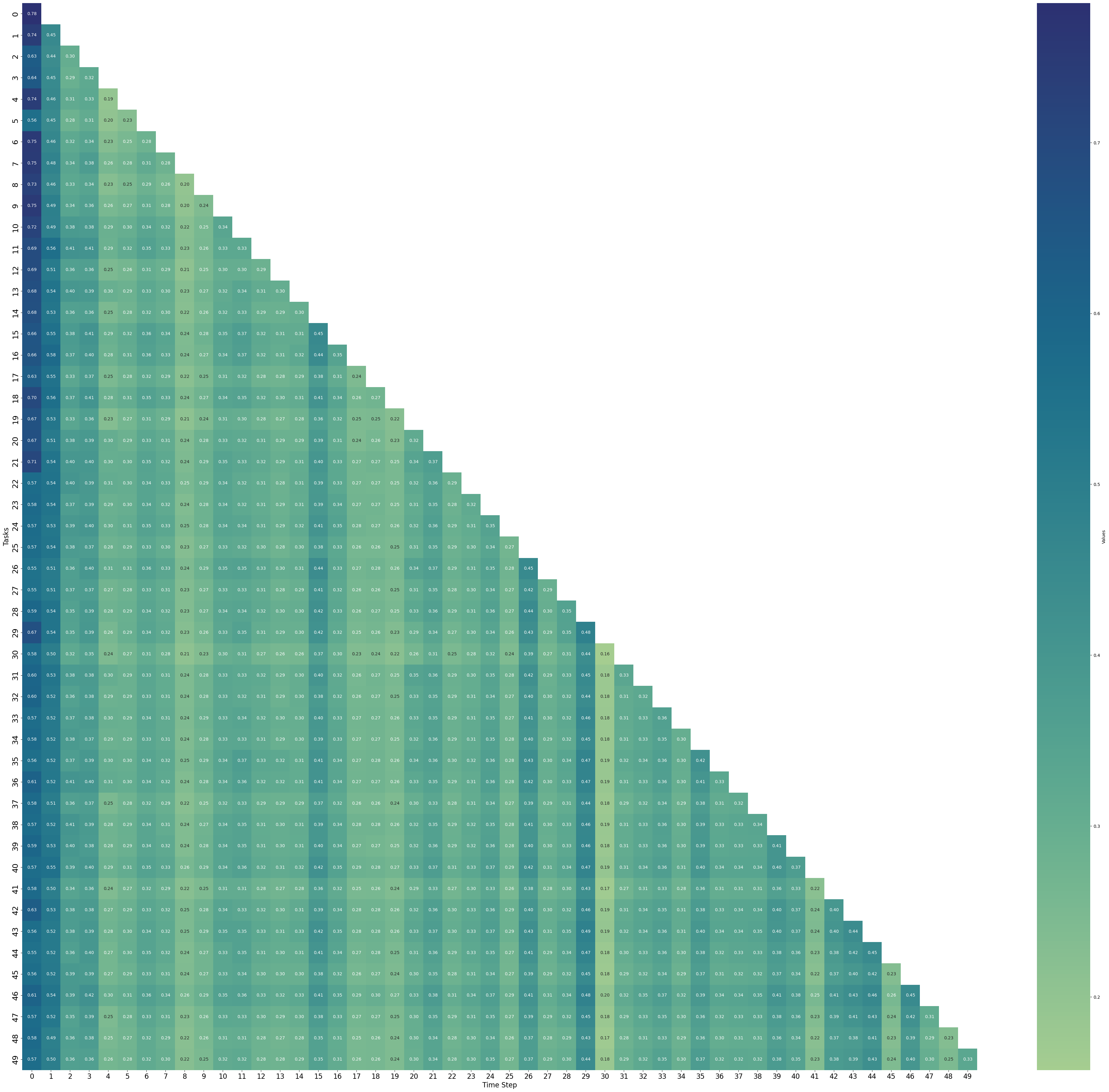}
        \caption{MAS}
    \end{subfigure}
    \hspace{0.02\linewidth}
    \begin{subfigure}{0.3\linewidth}
        \includegraphics[width=\linewidth]{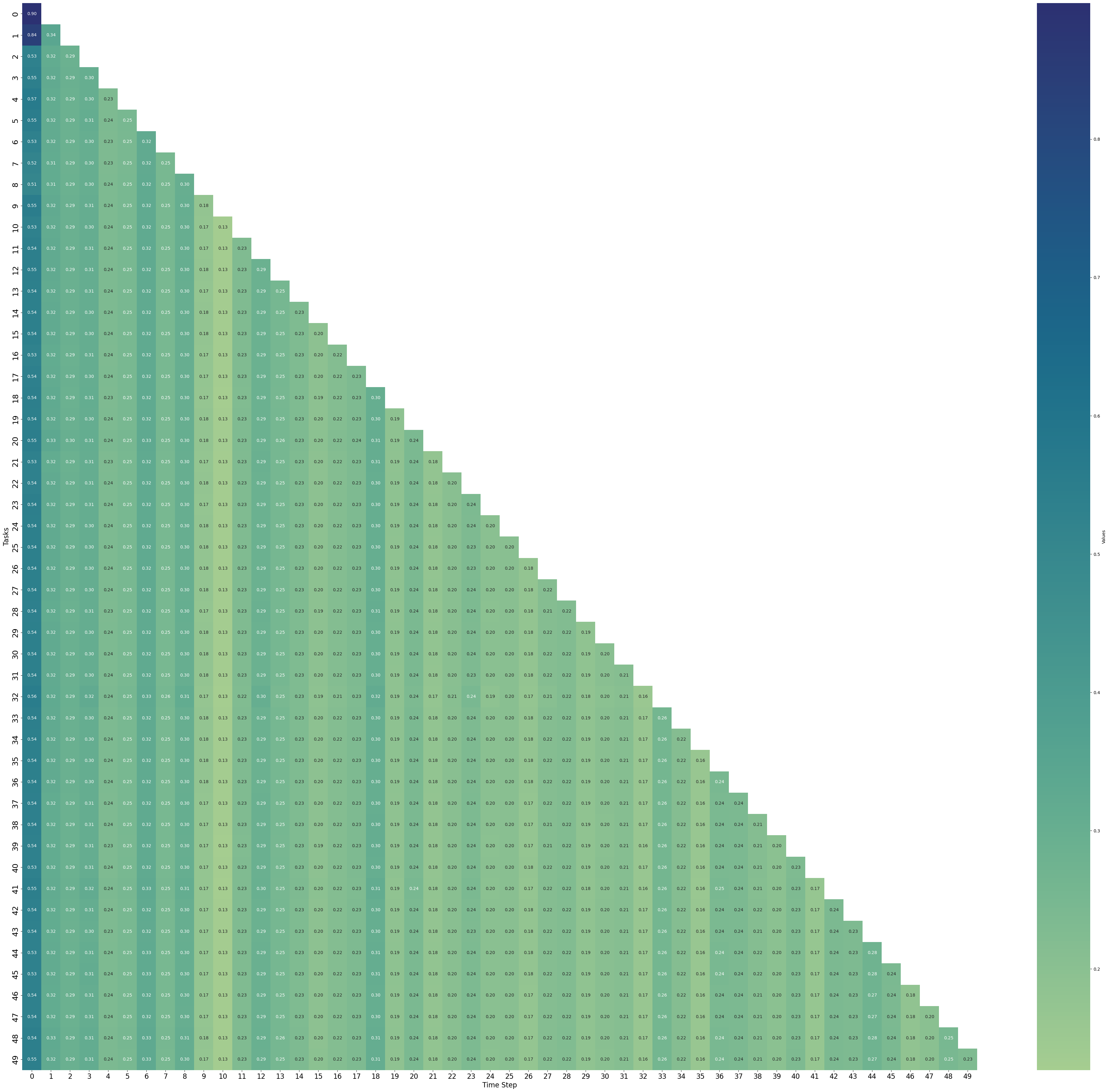}
        \caption{EvolveGCN}
    \end{subfigure}
    \hspace{0.02\linewidth}
    \begin{subfigure}{0.3\linewidth}
        \includegraphics[width=\linewidth]{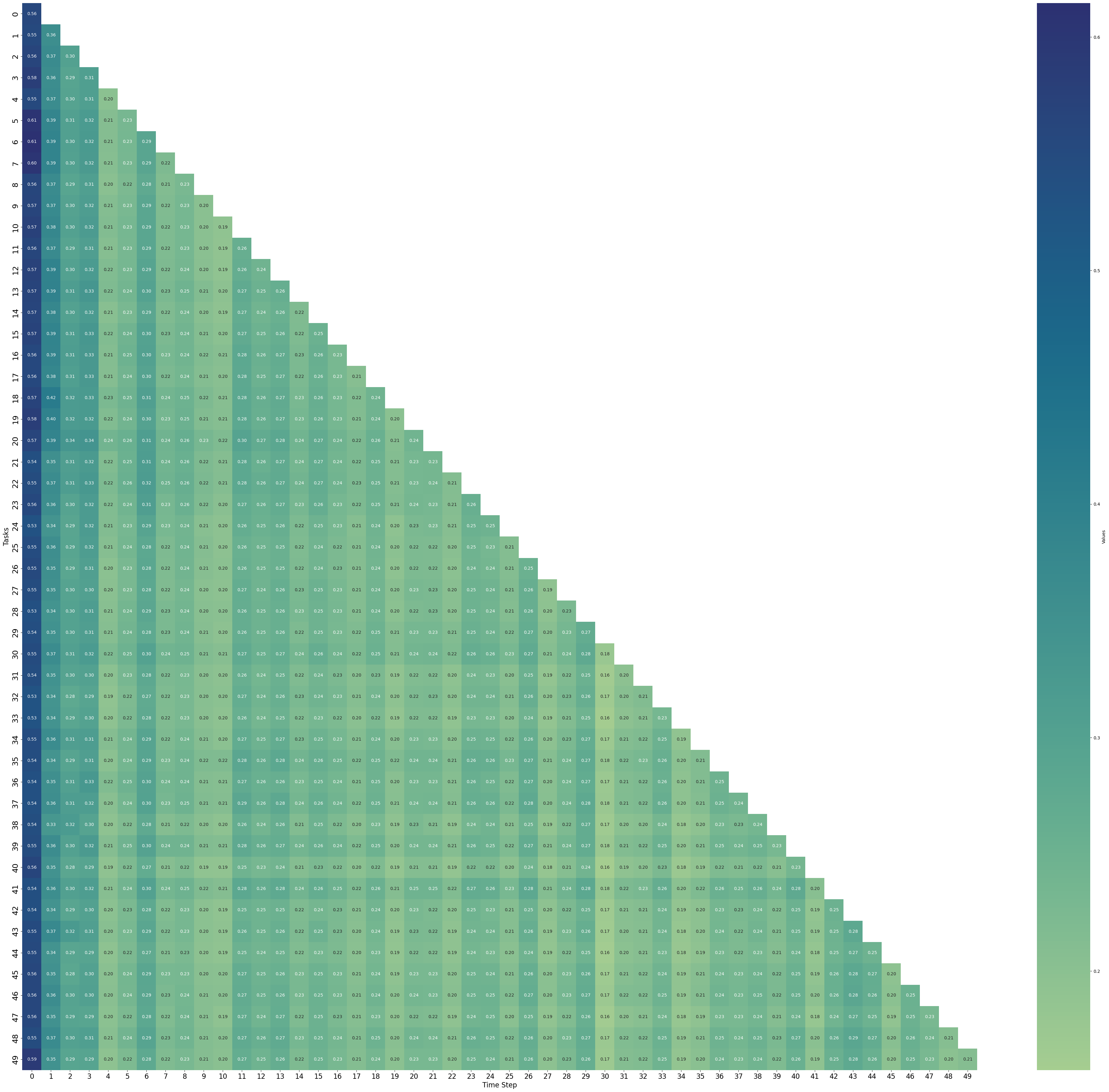}
        \caption{ERGNN}
    \end{subfigure}

    \caption{Visualization of the performance matrix of the methods in \classil on dataset Yelp}
    \label{fig:yelp_classil_heatmap}
\end{figure*}

\end{document}